\tikzstyle{arg}=[draw,circle,fill=gray!15,inner sep=1pt,minimum size=.5cm]
\tikzstyle{targ}=[inner sep=1pt,minimum size=.5cm]
\newtheorem{theorem}{Theorem}[section]
\newtheorem{lemma}[theorem]{Lemma}
\newtheorem{proposition}[theorem]{Proposition}
\newtheorem{corollary}[theorem]{Corollary}
\theoremstyle{definition}
\newtheorem{definition}[theorem]{Definition}
\newtheorem{example}[theorem]{Example}
\newcommand{\TODO}[1]{\todo[inline]{#1}}
\newcommand{\commentout}[1]{}
\newcommand{\ie}{i.e., }
\newcommand{\SigmaP}[1]{\ComplexityFont{\Sigma}_{#1}^{\P}}
\newcommand{\args}{\ensuremath{A}}
\newcommand{\supporter}{\ensuremath{\mathrm{Sup}}}
\newcommand{\attacker}{\ensuremath{\mathrm{Att}}}
\renewcommand{\PF}{\mathbb{F}}
\newcommand{\PG}{\mathbb{G}}
\newcommand{\allprem}{\mathcal{P}}
\newcommand{\prem}{\ensuremath{\pi}}
\newcommand{\aspmodule}[1]{\Pi_{\mathit{#1}}}
\newcommand{\aspassump}{{\bf assumption}}
\newcommand{\asphead}{{\bf head}}
\newcommand{\aspbody}{{\bf body}}
\newcommand{\aspcontrary}{{\bf contrary}}
\newcommand{\aspin}{{\bf in}}
\lstdefinelanguage{asp}
{morekeywords={in,rule,contrary,defeated_by_undefeated,out,derived,derivable,head,body,derived_from_undefeated,derivable_from_undefeated,applicable_rule,usable_by_in,assumption,defeated,usable_by_undefeated,att_by_undefeated,no_undef_closed,guess,in_derives,guess_derives,usable_by_guess,in_defeated_by_guess,target,guess_defeats}, 
literate={:-}{{$\la\ $}}1 {not}{{$\naf$}}1 {\\el}{{$\in$}}1 {\\cup}{{$\cup$}}1 {:~}{{$\law$}}1,
morecomment=[l]{\%},
}
\newcommand{\ababaf}{\textsc{ababaf}}
\newcommand{\cegar}{\textsc{abasp}}
\newcommand{\clingo}{\textsc{clingo}}
\newcommand{\contrary}[1]{\overline{#1}}
\newcommand{\contraryempty}{\contrary{\phantom{a}}}
\newcommand{\naf}{{\it not}\,}
\newcommand{\la}{\leftarrow}
\newcommand{\cf}{\mathit{cf}}
\newcommand{\stable}{{\mathit{stb}}}
\newcommand{\adm}{\mathit{adm}}
\newcommand{\prf}{\mathit{prf}'}
\newcommand{\oldprf}{\mathit{prf}}
\newcommand{\comp}{\mathit{com}}
\newcommand{\stb}{{\mathit{stb}}}
\newcommand{\com}{\mathit{com}}
\newcommand{\grd}{\mathit{grd}}
\newcommand{\asms}{asms}
\newcommand{\theory}{\mathit{Th}}
\newcommand{\cl}{\mathit{cl}}
\newcommand{\CF}{\mathcal{F}}
\newcommand{\CG}{\mathcal{G}}
\newcommand{\compred}{$||\!\!\!\leadsto$}
\newcommand{\compclass}[1]{\compred-#1}
\newcommand{\Cred}{\textit{Cred}}
\title{Instantiations and Computational Aspects of \\Non-Flat Assumption-based Argumentation}
\author{
	Tuomo Lehtonen$^1$,
	Anna Rapberger$^2$,
	Francesca Toni$^2$, \\
	Markus Ulbricht$^3$,
	Johannes P.\ Wallner$^4$\\[1ex]
	$^1$University of Helsinki, Department of Computer Science\\
	$^2$Imperial College London, Department of Computing\\
	$^3$Leipzig University, ScaDS.AI\\
	$^4$Graz University of Technology, Institute of Software Technology\\[1ex]
	$^1$tuomo.lehtonen@helsinki.fi
	$^2$\{f.toni,a.rapberger\}@imperial.ac.uk, \\
	$^3$mulbricht@informatik.uni-leipzig.de, \\
	$^4$wallner@ist.tugraz.at	
}
\begin{document}

\maketitle

\begin{abstract}
	Most existing computational tools for \emph{assumption-based argumentation (ABA)} focus on so-called \emph{flat} frameworks, disregarding the more general case. 
	In this paper, we study an instantiation-based approach for reasoning in possibly \emph{non-flat} ABA.
	We make use of a semantics-preserving translation between ABA and bipolar argumentation frameworks (BAFs). 
	By utilizing compilability theory, we establish that the constructed BAFs will in general be of exponential size.
	To keep the number of arguments and computational cost low, we present three ways of identifying redundant arguments. Moreover, we identify fragments of ABA which admit a poly-sized instantiation.
	We propose two algorithmic approaches for reasoning in non-flat ABA;
	the first utilizes the BAF instantiation while 
	the 
	second works directly without constructing arguments.
	An empirical evaluation shows that the former outperforms the 
	latter on many instances, reflecting the lower complexity of BAF reasoning.
	This result is in contrast to flat ABA, where direct approaches dominate instantiation-based solvers. 
\end{abstract}

\section{Introduction}

Formal argumentation constitutes a 
prominent branch of 
AI that studies and 
develops computational approaches to reason argumentatively~\cite{arguHandbook}. The heterogeneity of 
this field is reflected in various formalizations and 
application domains, such as legal reasoning, medical sciences, and e-democracy~\cite{AtkinsonBGHPRST17}. Computational approaches 
to solve key reasoning tasks are critical to the deployment of formal argumentation. 

Argumentation formalisms are often classified 
as either abstract or structured
. 
Abstract argumentation~\cite{Dung95} is concerned with acceptability of arguments based exclusively on the relations between 
them. 
Structured argumentation formalisms~\cite{BesnardGHMPST14} capture an entire argumentative workflow~\cite{CaminadaA07}: starting from knowledge bases, a process of argument generation or instantiation is prescribed, upon which \emph{semantics} 
can be 
deployed to find acceptable arguments or conclusions thereof.

Computational approaches to 
structured argumentation 
have gained increased attention in the research community. The biannual International Competition on Computational Models of Argumentation (ICCMA)~\cite{ThimmV17,GagglLMW20,LagniezLMR20,BistarelliKST21,DBLP:conf/kr/JarvisaloLN23} has recently included a dedicated track for assumption-based argumentation (ABA)~\cite{BondarenkoDKT97}---one of the prominent approaches to structured argumentation
. 
Several algorithmic approaches to ABA or other well-known structured argumentation formalisms like ASPIC$^+$~\cite{ModgilP13} have been proposed recently~\cite{CravenT16,LehtonenWJ17,BaoCT17,KaramlouCT19,DBLP:conf/kr/LehtonenWJ20,LehtonenWJ21a,LehtonenWJ21b,DBLP:conf/clar/DillerGG21,DBLP:conf/kr/LehtonenWJ22,Thimm17a,DBLP:conf/comma/LehtonenWJ22,DBLP:conf/kr/LehtonenR0W23}.

Within this surge of algorithmic efforts, many 
solutions focus on 
restricted fragments of  
structured argumentation. For instance, most algorithms for ABA (e.g. \cite{CravenT16,LehtonenWJ21a,DBLP:conf/clar/DillerGG21}) focus on the 
\emph{flat ABA} fragment, which imposes a strong restriction on the knowledge base by
excluding derivations of `assumptions'. 
The general
ABA language, not restricted to the flat fragment and referred to in this paper as \emph{non-flat ABA}, 
is able to capture more expressive settings such as auto-epistemic reasoning~\cite{BondarenkoDKT97} and multi-agent settings where merging information from different sources can result in a non-flat knowledge base~\cite{DBLP:conf/aaai/0001PRT24}.
More broadly, non-flat ABA can capture situations where dependencies between assumptions need to be taken into account, and is therefore strictly more expressive than flat ABA. 
However, the few algorithms for non-flat ABA heavily restrict the kind of ABA frameworks (ABAFs) to which they apply (in \cite{KaramlouCT19} to \emph{bipolar} ABA frameworks~\cite{DBLP:conf/prima/Cyras0T17}).
%
A possible reason for the focus on 
flat ABA 
is the 
lower computational complexity. As shown by~\citeauthor{DimopoulosNT02}~(\citeyear{DimopoulosNT02}) and~\citeauthor{CyrasHT21}~(\citeyear{CyrasHT21}), all major reasoning tasks exhibit a one level jump in the polynomial hierarchy when going from flat to non-flat ABA, which, again, reflects the increased expressiveness of the general (not-flat) case.

In this paper we fill this gap and 
address computational 
challenges of non-flat ABA.
In particular, we investigate algorithmic approaches to reasoning in non-flat ABA, establish new theoretical insights and 
evaluate them in practice.

Algorithms 
for flat ABAFs can be divided into 
(i) instantiating a semantics-preserving argumentation framework (AF)~\cite{Dung95} and then performing the reasoning task in the AF; and 
(ii) direct approaches operating on the given ABAF without 
instantiation. 
There are benefits to instantiation-based approaches, such as explainability in the form of more abstract, cognitively tractable explanations~\cite{vcyras2021argumentative}. 
For performance, direct approaches, specifically ones using modern declarative methods, such as answer set programming (ASP)~\cite{GelfondL88,Niemela99}, typically have an edge over instantiation-based solvers in structured argumentation (e.g.\ \cite{LehtonenWJ21a,DBLP:conf/kr/LehtonenR0W23}).
However, in the case of non-flat ABA this might plausibly not be the case.
A recent instantiation~\cite{DBLP:conf/aaai/0001PRT24} translates a given non-flat ABAF into a bipolar AF (BAF)~\cite{AmgoudCLL08} such that reasoning in the constructed BAF is computationally milder compared to the initial ABAF
.
This is in contrast to the classical translation of flat ABAFs into AFs, where the complexity of reasoning is as high for the AFs. 


In this work, we study the advantage this lower complexity can give for reasoning in non-flat ABAFs via instantiation to BAFs.
While ideally the resulting BAF would be of polynomial size, similarly to a recently proposed translation from flat ABA to AF~\cite{DBLP:conf/kr/LehtonenR0W23}, we show that this is impossible in non-flat ABA.
Instead, we investigate redundancies that can be eliminated to optimize the instantiation.

In more detail, our main contributions are:
\begin{itemize}
	\item We present a result based on compilability theory, suggesting that an instantiation in BAFs cannot avoid an exponential number of arguments. 
	\hfill \textcolor{gray}{Section~\ref{subsec:lower bound}}
	\item Motivated by the lower complexity in instantiated BAFs compared to non-flat ABAFs, we show how to efficiently instantiate BAFs. We present three redundancy notions for argument generation. 
	\hfill \textcolor{gray}{Section~\ref{sec:feasible}}
	\item Towards a greater reach for applications, we also
	identify fragments of non-flat ABA
	with milder complexity: atomic and additive non-flat ABAFs. 
	\hfill \textcolor{gray}{Section~\ref{sec:fragments}}
	\item We propose two algorithmic approaches for reasoning in non-flat ABAFs. 
	The first one efficiently instantiates a BAF via ASP using our redundancy notions, followed by SAT-based reasoning on the BAF. 
	\hfill \textcolor{gray}{Section~\ref{sec:ababaf}}
	
	The second one performs reasoning directly on the given non-flat ABAF using iterative ASP calls, similarly to state-of-the-art approaches for other beyond-NP structured argumentation problems.
	\hfill \textcolor{gray}{Section~\ref{sec:cegar}}
	\item We show empirically that 
	both algorithms are competitive, with relative performance depending on the benchmark set. This contrasts with the dominance of non-instantiation approaches for other structured argumentation formalisms.
	\hfill \textcolor{gray}{Section~\ref{sec:experiments}}
\end{itemize}

We focus on complete, grounded, stable and a complete-based version of preferred semantics. 
In the technical appendix, we expand on proof details, encodings, and how to approach admissible-based semantics, which was shown to be more involved~\cite{DBLP:conf/aaai/0001PRT24}.%



\section{Background}
\label{sec:background}
We recall preliminaries for assumption-based argumentation frameworks (ABAFs)~\cite{BondarenkoDKT97,CyrasFST2018}, bipolar argumentation frameworks (BAFs)~\cite{AmgoudCLL08}, instantiation of BAFs to capture ABAFs~\cite{DBLP:conf/aaai/0001PRT24} and basics of computational complexity
. 
\paragraph{Assumption-based Argumentation.}
We assume a deductive system $(\mathcal{L},\mathcal{R})$, with  $\mathcal{L}$ restricted to a set of atoms and $\mathcal{R}$ a set of rules over $\mathcal{L}$. A rule $r \!\in\! \mathcal{R}$ has the form
$a_0 \!\leftarrow \! a_1,\ldots,a_n$ with $a_i \!\in \! \mathcal{L}$.
We denote the head of $r$ by $head(r) = a_0$ and the (possibly empty) 
body of $r$ with $body(r) = \{a_1,\ldots,a_n\}$.%
\begin{definition}
	An ABAF is a tuple $(\mathcal{L},\mathcal{R},\mathcal{A},\contraryempty)$, where $(\mathcal{L},\mathcal{R})$ is a \emph{deductive system}, $\mathcal{A} \subseteq \mathcal{L}$ is a non-empty set of \emph{assumptions}, and $\contraryempty:\mathcal{A}\rightarrow \mathcal{L}$ 
	is a (total) \emph{contrary} function.
\end{definition}
Here, we focus on finite ABAFs, i.e., $\mathcal{L}$ and $\mathcal{R}$ are finite.

An atom $p \!\in\! \mathcal{L}$ is derivable from assumptions $S \!\subseteq\! \mathcal{A}$ and rules $R \!\subseteq \! \mathcal{R}$, denoted by $S \vdash_R p$,
if there is a finite rooted labeled tree $G$ such that the root is labeled with $p$, the set of labels for the leaves of $G$ is 
$S$ or $S \cup \{\top\}$, and
for every inner node $v$ of $G$ there is a rule $r \in R$ such that $v$ is labelled with $head(r)$, the number of 
children of $v$
is $|body(r)|$ and every 
child of $v$ is
labelled with a distinct $a \in body(r)$ or $\top$ if $body(r)=\emptyset$.
We call such a $G$ a \emph{tree-based argument}. 
A \emph{sub-argument} of $G$ is a finite rooted labelled sub-tree $G'$ such that the leafs of $G'$ are a subset of the leafs of $G$. 
We will denote with $\args_D$ the set of all tree-based arguments of an ABAF~$D$. 
Following \citeauthor{Toni14}~(\citeyear{Toni14}), we will often compactly denote a tree-based argument with root $p$ and leafs $S$ as $S\vdash p$. 

For 
$S\!\subseteq\! \mathcal A$, we let 
$\contrary{S}\!=\!\{\contrary{a}\mid a\!\in \!S\}$.
By $\theory_D(S)\!=\!\{p \!\in \! \mathcal L\mid \exists \!S'\!\subseteq\! S:S'\vdash_R p\}$
we denote the set of all conclusions derivable from $S$. Note that $S\subseteq \theory_D(S)$ 
as
each 
$a\!\in \!\mathcal{A}$ is derivable via $\{a\}\!\vdash_\emptyset \!a$. 
For $S\vdash \!p\in \!\args_D$, we let 
$\asms(S\vdash p)\!=\!S$; for 
$E \!\subseteq \!\args_D$ 
we let
$\asms(E)=\bigcup_{x\in E}\asms(x)$.

\begin{definition}
	\label{def:ABA defense}
	Let $D=(\mathcal{L},\mathcal{R},\mathcal{A},\contraryempty)$ be an ABAF, $S,T \!\subseteq \!\mathcal A$, and $a\in \mathcal{A}$.
	The \emph{closure} $\cl(S)$ of $S$ is $\cl(S) \!=\! \theory_D(S) \!\cap\! \mathcal A$. 
	$S$ is \emph{closed} iff $S=\cl(S)$;
	$S$ \emph{attacks}  $T$ iff $\contrary{b}\in\theory_D(S)$ for some $b\in T$;
	$S$ \emph{defends} $a$ iff for each closed 
	$V\subseteq \mathcal A$ 
	s.t.\ $V$ attacks $a$, $S$ attacks $V$; 
	$S$ \emph{defends itself} iff $S$ defends each $b\in S$. 
\end{definition}
An ABAF where each set of assumptions is closed
is called \emph{flat}. We refer to an ABAF not restricted to be flat as \emph{non-flat}.

For $a\in \mathcal{A}$, we also say $S$ attacks $a$ if $S$ attacks the singleton $\{a\}$, and we write $\cl(a)$ instead of $\cl(\{a\})$.


A set $S\subseteq \mathcal{A}$ is \emph{conflict-free} in $D$, denoted $E\in\cf(D)$, if $E$ is not self-attacking;
$S$ is \emph{admissible}, denoted $S\in\adm(D)$, if $S$ is closed, conflict-free and it defends itself.
We focus on grounded, complete, $\subseteq$-maximal complete, and stable ABA semantics (abbr.\ $\grd$, $\comp$, $\prf$, $\stb$), as follows.\footnote{In the 
	paper we show results for $\grd$, $\com$, $\stable$ and $\prf$ defined as $\subseteq$-maximal complete sets (a variant to the usual $\mathit{prf}$ semantics given as $\subseteq$-maximal admissible sets). 
	We give results for $\adm$ and the standard $\mathit{prf}$ in~\cite{lehtonen2024instantiations}.
	Although the intersection of all complete sets has been originally termed \emph{well-founded semantics}~\cite{BondarenkoDKT97}, we stick to the usual convention in the argumentation literature and call it ``grounded''.}

\begin{definition}
	\label{def:ABA semantics}
	Let $D=(\mathcal{L},\mathcal{R},\mathcal{A},\contraryempty)$ be an ABAF and $S \subseteq \mathcal{A}$ be a set of assumptions s.t.\ $S\in\adm(S)$. We say
	\begin{itemize}
		\item $S\in \comp(D)$ iff it contains every $T\subseteq\mathcal{A}$ it defends, 
		\item $S\in \prf(D)$ iff $S$ is $\subseteq$-maximal in $\com(D)$,
		\item $S\in \grd(D)$ iff $S = \bigcap_{T\in \com(D)} T$, and 
		\item $S\in \stb(D)$ iff $S$ attacks each $x \in \mathcal{A} \setminus S$.
	\end{itemize}
\end{definition}
In this paper we stipulate that the empty intersection is interpreted as $\emptyset$, 
thus if $\com(D) = \emptyset$, then 
$\grd(D) = \emptyset$.

\begin{example}\label{ex:intro}
	We let $D$ be an ABAF 
	with assumptions $\mathcal A = \{a,b,c\}$, rules 
	$\mathcal R = \{(p\gets a), (q\gets b), (c \gets p,q)\}$, and $\contrary{b}=p$. 
	The set $\{a,c\}$ is ($\subseteq$-maximal) complete and stable in $D$ since $\{a,c\}$ is unattacked and $\{a\}$ attacks $b$.
	Note that $\{a,b\}$ is not closed since 
	$\cl(\{a,b\})= \{a,b,c\}$.
\end{example}

\paragraph{Bipolar Argumentation.}
Bipolar argumentation features adversarial and supportive relations between arguments.
\begin{definition}
	A \emph{bipolar argumentation framework (BAF)} $\CF$ is a tuple of the form $\CF = (\args, \attacker, \supporter)$ 
	where 
	$\args$ represents a set of arguments,
	$\attacker\subseteq \args\times \args$ models \emph{attack}, and $\supporter \subseteq \args \times \args$ models \emph{support} between them.
\end{definition}
Thus, in the tradition of Dung's abstract \emph{argumentation frameworks (AFs)}~(\citeyear{Dung95}),  arguments in BAFs are considered abstract, but
BAFs extend AFs by integrating support.
We call $F = (\args,\attacker)$ the {underlying} AF of $\CF= (\args, \attacker, \supporter)$.

For two arguments $x,y\in \args$, if $(x,y)\in \attacker$ ($(x,y)\in \supporter$) we say that $x$ \textit{attacks} (\emph{supports}) $y$ as well as $x$ \textit{attacks} (\emph{supports}) (the set) 
$E\subseteq \args$ given that $y\in E$.

We utilize the BAF semantics introduced by~\citeauthor{DBLP:conf/aaai/0001PRT24}~(\citeyear{DBLP:conf/aaai/0001PRT24}).
For a set $E\subseteq \args$, we let $\cl(E) = E\cup \{ a\in A \mid \exists e\in E:\; (e,a)\in\supporter \}$. 
The set $E$ is \emph{closed} if $E = \cl(E)$; 
$E$ is \emph{conflict-free} in $\CF$, denoted $E\in\cf(\CF)$, if for no $x,y\in E$, $(x,y)\in \attacker$;
$E$ \emph{defends} $a\in \args$ if $E$ attacks each closed set $S\subseteq \args$  which attacks $a$.
The \emph{characteristic function} of $\CF$ is $\Gamma(E) = \{ a\in A\mid E\text{ defends }a \}$. 
We say $E$ is \emph{admissible}, denoted $E\in\adm(\CF)$, if $E$ is closed, conflict-free, and $E\subseteq \Gamma(E)$.
\begin{definition} \label{def:extsem}
	Let $\CF$ be a BAF. For a set $E\in \adm(\CF)$,
	\begin{itemize}
		\item $E\in\com(\CF)$ iff $E = \Gamma(E)$; 
		\item $E\in\grd(\CF)$ iff $E = \bigcap_{S\in\com(\CF)} S$; 
		\item $E\in\prf(\CF)$ iff $E$ $\subseteq$-maximal in $\com(\CF)$;
		\item $E\in\stb(\CF)$ iff $E$ attacks all $a\in A\setminus E$.
	\end{itemize}
\end{definition}



\paragraph{ABA and Bipolar Argumentation.}
Each ABAF can be captured as a BAF as follows \cite{DBLP:conf/aaai/0001PRT24}.  
\begin{definition}
	\label{def:pbaf instantiation}\label{def:instantiated BAF}
	For an ABAF  
	$D = (\mathcal{L},\mathcal{R},\mathcal{A},\contraryempty)$, 
	the 
	\emph{{instantiated} BAF} 
	$\CF_D = (\args,\attacker,\supporter)$ 
	is given by $A=\args_D$ and
	\begin{align*}
		\attacker  &= \{ ( S\vdash p, T\vdash q ) \in A\times A \mid p \in\contrary{T} \},\\
		\supporter &= \{ ( S\vdash p, \{a\} \vdash a ) \in A \times A \mid a \in\cl(S) \}.
	\end{align*} 
\end{definition}
\begin{example}
	\label{ex:intro inst}
	The ABAF from Example~\ref{ex:intro} yields the following BAF, with attacks as solid and supports as dashed lines, and arguments depicted as trees (with root at the top). 
	\begin{center}
		\begin{tikzpicture}
			\node[draw,label={above:$A_1$}] (arg1a) at (5.5,0.65) {
				\begin{tikzpicture}[xscale=0.8,yscale=0.5]
					\node[targ] (p) at (0,0) {$p$};
					\node[targ] (q) at (0,-1.3) {$a$};
					
					\path[-]
					(p) edge (q)
					;
				\end{tikzpicture}
			};
			\node[draw,label={above:$A_2$}] (arg1b) at (6.5,0.35) {
				\begin{tikzpicture}[xscale=0.8,yscale=0.5]
					\node[targ] (p) at (0,0) {$q$};
					\node[targ] (q) at (0,-1.3) {$b$};
					
					\path[-]
					(p) edge (q)
					;
				\end{tikzpicture}
			};
			\node[draw,label={above:$A_3$}] (arg2) at (0.75,0.35) {
				\begin{tikzpicture}[xscale=0.8,yscale=0.5]
					\node[targ] (c) at (0,1.3) {$c$};
					\node[targ] (p) at (-.7,0) {$p$};
					\node[targ] (q) at (.7,0) {$q$};
					\node[targ] (a) at (-.7,-1.3) {$a$};
					\node[targ] (b) at (.7,-1.3) {$b$};
					
					\path[-]
					(a) edge (p)
					(b) edge (q)
					(p) edge (c)
					(q) edge (c)
					;
				\end{tikzpicture}
			};
			
			\node[draw,label={above:$A_4$}] (arga) at (2.5,1) {
				$a$
			};
			\node[draw,label={above:$A_5$}] (argb) at (3.5,1) {
				$b$
			};
			\node[draw,label={above:$A_6$}] (argc) at (4.5,1) {
				$c$
			};
			
			\path[->]				
			(arg1a) edge[dotted,out=180,in=-50] (arga)
			(arg1b) edge[dotted,out=230,in=-90,looseness=0.7] (argb)
			(arg2) edge[dotted] (arga)
			(arg2) edge[dotted,out=0,in=-130] (argb)
			(arg2) edge[dotted,out=-10,in=-100] (argc)
			(arg1a) edge[out=30,in=130] (arg1b)
			(arg1a) edge[out=200,in=-50] (argb)
			(arg1a) edge[out=220,in=-20] (arg2)
			;
		\end{tikzpicture}
	\end{center}
\end{example}
This BAF construction captures semantics for ABAFs, as recently shown~\cite{DBLP:conf/aaai/0001PRT24}.
\begin{theorem}
	\label{thm:SemanticsCorrespondenceADM}
	\label{th:semantics correspondence 2}
	Let $D$ be an ABAF,
	$\CF_D  = (\args,\attacker,\supporter)$ the 
	associated BAF, and $\sigma\in\{\com,\prf,\grd,\stb\}$. Then  
	\begin{itemize}
		\item if $E\in\sigma(\CF_D)$, then $\asms(E)\in\sigma(D)$, and 
		\item if $S\in\sigma(D)$, then $\{ x\in A \mid \asms(x)\subseteq S \}\in\sigma(\CF_D)$.
	\end{itemize}
\end{theorem}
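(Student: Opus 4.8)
The plan is to turn the two maps appearing in the statement into an order-preserving correspondence and then reduce everything to the complete semantics. Write $\beta(S) = \{x \in A \mid \asms(x) \subseteq S\}$ for the map in the second bullet, so that the two directions concern $\asms$ and $\beta$. Two facts are immediate and will be used throughout: $\asms(\beta(S)) = S$ for every $S \subseteq \mathcal A$ (each $\{a\}\vdash a$ with $a \in S$ lies in $\beta(S)$), whereas in general only $E \subseteq \beta(\asms(E))$ holds. The heart of the argument is therefore to show that the extensions we care about are \emph{saturated}, i.e.\ $E = \beta(\asms(E))$, so that $\asms$ and $\beta$ become mutually inverse on them.

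The first observation I would isolate is an \textbf{attack-factorisation}: by the definition of $\attacker$, an argument $y$ attacks $x = S'\vdash p$ iff $\conc(y) \in \contrary{\asms(x)}$, which holds iff $y$ attacks the assumption-argument $\{b\}\vdash b$ for some $b \in \asms(x)$. Hence the attackers of $x$ depend only on $\asms(x)$, a set $W$ attacks $x$ iff it attacks some $\{b\}\vdash b$ with $b\in\asms(x)$, and consequently defence factorises: $E$ defends $x$ iff $E$ defends $\{b\}\vdash b$ for every $b\in\asms(x)$. In particular $\Gamma(E) = \{x \mid \asms(x)\subseteq \asms(\Gamma(E))\}$ is always saturated. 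On top of this I would prove a \textbf{defence correspondence}: if $S\subseteq\mathcal A$ is closed and $E = \beta(S)$, then for every $b\in\mathcal A$, $E$ defends $\{b\}\vdash b$ in $\CF_D$ iff $S$ defends $b$ in $D$.

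The step I expect to be the main obstacle is the \textbf{closure correspondence}. It is \emph{not} true that $\asms(W)$ is ABA-closed whenever $W$ is closed in $\CF_D$: an ABA closure of $\asms(W)$ may combine assumptions drawn from several arguments of $W$, and no single support edge can witness this. The way around it is to restrict to saturated sets, for which one shows that $\beta(S)$ is closed in $\CF_D$ \emph{iff} $S$ is closed in $D$ (the support edges leaving arguments of $\beta(S)$ point exactly to the $\{a\}\vdash a$ with $a\in\cl(S)$), and dually that conflict-freeness of $\beta(S)$ is equivalent to conflict-freeness of $S$. To prove the defence correspondence I would translate an ABA-closed attacker $V$ of $b$ into the $\CF_D$-closed attacker $\beta(V)$, and a $\CF_D$-closed attacker $W$ of $\{b\}\vdash b$ into the ABA-closed attacker $\cl(\asms(w))$ for a single witnessing $w\in W$, using closedness of $W$ to locate inside $W$ the supported assumption-argument that the counter-attack hits.

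With these pieces the theorem assembles as follows. For $\com$: a complete $E$ satisfies $E=\Gamma(E)$, hence is saturated and, being admissible, closed, so $S=\asms(E)$ is closed and conflict-free, $E=\beta(S)$, and the defence correspondence upgrades ``$E=\Gamma(E)$'' to ``$S$ contains exactly the assumptions it defends'', i.e.\ $S\in\com(D)$; the converse runs the same equivalences backwards from a complete $S$. Thus $\asms$ and $\beta$ are mutually inverse, order-preserving bijections between $\com(D)$ and $\com(\CF_D)$. Preferred then transfers because such a bijection preserves $\subseteq$-maximality, and grounded transfers because $\beta$ commutes with intersections, giving $\bigcap\com(\CF_D) = \beta\bigl(\bigcap\com(D)\bigr)$ with the empty-intersection convention matching on both sides. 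Stable I would treat directly rather than through $\com$: a stable $E$ is again saturated, since any $x$ with $\asms(x)\subseteq\asms(E)$ left outside $E$ would be attacked by some $e\in E$ with $\conc(e)=\contrary b$ and $b\in\asms(x)\subseteq\asms(E)$, forcing $e$ to also attack an argument of $E$ and breaking conflict-freeness; after this, ``$E$ attacks every argument outside $E$'' is equivalent, via the attack-factorisation, to ``$\asms(E)$ attacks every assumption outside $\asms(E)$'', yielding $\asms(E)\in\stb(D)$ and, symmetrically, the converse.
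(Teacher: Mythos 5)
You should know at the outset that this paper does not prove Theorem~\ref{th:semantics correspondence 2} at all: it is background material, quoted from the cited work of Ulbricht, Potyka, Rapberger and Toni (AAAI 2024), and neither the body nor the appendix of this paper contains an argument for it. So there is no in-paper proof to compare yours against; what I can do is check your argument against the paper's definitions and against the fragments of the cited machinery that this paper does invoke.

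Measured that way, your proof is essentially correct and its structure is the natural one. The attack-factorisation, the saturation of $\Gamma(E)$ (hence of complete extensions, and of stable ones via your conflict-freeness argument), the equivalence of BAF-closedness of $\beta(S)$ with ABA-closedness of $S$ (where the nontrivial direction rests on the fact that a joint derivation of an assumption $a$ from $S$ is itself a single tree-based argument lying in $\beta(S)$ and supporting $\{a\}\vdash a$), and the two translations in your defence correspondence (an ABA-closed attacker $V$ becomes the BAF-closed set $\beta(V)$; a BAF-closed attacker $W$ of $\{b\}\vdash b$ becomes $\cl(\asms(w))$ for a single witness $w\in W$, with closedness of $W$ placing the counter-attacked assumption-argument inside $W$) all check out against Definitions~\ref{def:ABA defense}--\ref{def:instantiated BAF}. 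Your single-witness reduction is, moreover, exactly the device behind Lemma~3.4 of the cited work, which this paper itself quotes in Section~\ref{sec:baf-sat} to justify its SAT encodings, so you have reconstructed the right key lemma. Two small repairs are needed. First, for grounded, the identity $\bigcap\com(\CF_D)=\beta\bigl(\bigcap\com(D)\bigr)$ is literally false when $\com(D)=\emptyset$: the left-hand side is $\emptyset$ by the paper's convention, whereas $\beta(\emptyset)$ consists of all assumption-free arguments and need not be empty. That case must be dispatched separately: your bijection gives $\com(D)=\emptyset$ iff $\com(\CF_D)=\emptyset$, and then both $\grd(D)$ and $\grd(\CF_D)$ are empty by the stipulated convention, so the correspondence holds vacuously; the displayed identity should only be asserted when $\com(D)\neq\emptyset$. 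Second, in Definition~\ref{def:ABA semantics} (and likewise Definition~\ref{def:extsem}) stable sets are required to be admissible, so in the stable case you must also record that $\asms(E)$ is closed, conflict-free and self-defending, and conversely that $\beta(S)$ is admissible; this follows immediately from your three correspondences applied to the saturated set in question, but it is part of the claim, not optional. Neither point affects the architecture of your argument.
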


\paragraph{Computational Complexity.}
We assume the reader to be familiar with the polynomial hierarchy. 
We focus on the credulous reasoning task. 
For a BAF $\CF$ and a semantics $\sigma$, an argument $a\in \args$ is \emph{credulously accepted} if $a\in E$ for some $E\in\sigma(\CF)$;
for an ABAF $D$ and a semantics $\sigma$, a conclusion $p\in \mathcal{L}$ is \emph{credulously accepted} if $p\in \theory_D(E)$ for some $E\in\sigma(D)$.
The induced decision problems are denoted by $\Cred^{BAF}_{\sigma}$  and $\Cred^{ABA}_{\sigma}$, respectively.
We note that 
$\Cred^{\mathcal{C}}_\com=\Cred^{\mathcal{C}}_{\prf}$ for $\mathcal{C}\in\{BAF,ABA\}$.

Deciding credulous reasoning for non-flat ABA is $\SigmaP{2}$-complete for complete, $\DP_2$-complete for grounded, and $\NP$-complete for stable semantics~\cite{DimopoulosNT02,CyrasHT21}. As recently shown~\cite{DBLP:conf/aaai/0001PRT24}, the corresponding decision problems for BAFs exhibit lower complexity for main semantics: 
$\DP$-completeness for grounded and $\NP$-completeness for complete and stable.

\section{BAF Generation in Theory}
\label{sec:size theory}
For instantiation-based algorithms the number of arguments is critical for the run time performance. 
Direct instantiation methods which compute all tree-based arguments may yield unfeasibly large argumentation graphs. 
This has already been pointed out for flat ABA, which motivated the study of means to reduce their size~\cite{DBLP:conf/kr/LehtonenR0W23}. 
In this section, we analyze redundancies for the non-flat case as well. 
However, our first observation is that the computation of exponentially many arguments can in general not be avoided for non-flat ABA. 
A result of this kind is, to the best of our knowledge, novel for structured argumentation. 

%

\subsection{A Lower Bound For Non-Flat Instantiations}
\label{subsec:lower bound}

We give a formal line of reasoning that it is impossible to instantiate a given non-flat ABAF in a ``reasonable'' way and thereby obtain a polynomial-sized 
graph (\ie AF or BAF). 

In more detail, we show a complexity result based on compilability theory~\cite{CadoliDLS02} stating that, unless the polynomial hierarchy collapses, one cannot transform a given non-flat ABAF $D$ into some structure $\chi$ with the following properties: 
i) $\chi$ is of polynomial size w.r.t.\ $D$; and 
ii) in $\chi$ one can decide in polynomial time
whether a given set $E$ of assumptions is admissible or complete in $D$. 
Since verifying admissible sets in non-flat ABA is $\NP$-complete, it is clearly impossible to construct such $\chi$ in polynomial time. However, we do not need this restriction. That is, even given exponential time, such $\chi$ cannot be constructed.  
Conceptually, this result excludes usual argument-centric instantiations of $D$ into AFs or BAFs, because here we expect checking whether a set of arguments is admissible to be tractable. 

That is, under complexity theoretic assumptions, our result states that it is impossible to apply instantiations of $D$ into a polynomial-sized AF or BAF which have the usual correspondences between sets of arguments and sets of assumptions. 



\begin{restatable}{theorem}{thLowerBoundBAF}
	\label{th:lower bound baf}
	Unless the polynomial hierarchy collapses one cannot transform a non-flat ABAF into a polynomial-sized AF or BAF from which one can in polynomial-time decide whether a given set of assumptions is admissible or complete.
\end{restatable}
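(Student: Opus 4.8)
The plan is to recast the statement as a non-compilability result in the sense of~\cite{CadoliDLS02} and then reduce a canonical hard-to-compile problem to it. Fix the verification problem $\Ver$ whose \emph{fixed part} is a non-flat ABAF $D$ and whose \emph{varying part} is a set $E\subseteq\mathcal{A}$ of assumptions, asking whether $E\in\adm(D)$ (and, in a second variant, whether $E\in\com(D)$). A structure $\chi$ as in the theorem---of size polynomial in $|D|$ and admitting a polynomial-time test of membership of $E$---is exactly a polynomial-size advice for the fixed part together with a polynomial-time decision procedure using that advice; in the terminology of compilability theory this says precisely that $\Ver\in\compclass{\P}$. Crucially, the nonuniform notion places no bound on the time needed to produce $\chi$ from $D$, which is what lets us rule out even exponential-time constructions. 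Thus it suffices to show that $\Ver$ is $\compclass{\coNP}$-hard and to invoke the standard consequence that a $\compclass{\coNP}$-hard problem lying in $\compclass{\P}$ forces $\coNP\subseteq\P/\mathrm{poly}$, hence (Karp--Lipton, $\coNP$ form) the collapse of the polynomial hierarchy.

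For the hardness I would reduce from the fixed-formula version of unsatisfiability: the fixed part is a $3$-CNF $\phi(\vec x,\vec z)$ and the varying part is a truth assignment $\mu$ to $\vec z$, and we ask whether $\phi[\mu]$ is unsatisfiable over $\vec x$. Fixing $\phi$ and varying $\mu$, this problem is $\compclass{\coNP}$-complete. From $\phi$ alone I build an ABAF $D_\phi$ that encodes the clauses of $\phi$ through rules and contraries, with assumptions representing the literal choices for $\vec x$ (so that the closed sets $V$ ranging in the defense condition of Definition~\ref{def:ABA defense} correspond to candidate satisfying assignments of $\vec x$), assumptions representing the $\vec z$-variables, and a distinguished target assumption $t$. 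From $\mu$ alone I build the set $E_\mu$ that selects exactly the $\vec z$-assumptions prescribed by $\mu$ together with the fixed structural assumptions. The construction is arranged so that $E_\mu$ is closed and conflict-free by design, while $E_\mu$ defends $t$ (equivalently, $E_\mu\in\adm(D_\phi)$) if and only if every closed set $V$ attacking $t$---i.e.\ every $\vec x$-assignment that, together with $\mu$, would satisfy $\phi$---is counterattacked by $E_\mu$; this holds exactly when no such assignment exists, i.e.\ when $\phi[\mu]$ is unsatisfiable. Essentially this is the classical $\coNP$-hardness argument for admissibility verification, but carried out so that all logical content of $\phi$ sits in $D_\phi$ and $\mu$ enters only through the chosen assumptions of $E_\mu$.

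Given this reduction, assuming $\Ver\in\compclass{\P}$ yields a polynomial-size advice $\chi_{D_\phi}$ and a polynomial-time verifier that, for every $\mu$, decides unsatisfiability of $\phi[\mu]$; hard-wiring $\chi_{D_\phi}$ into the verifier produces, for each input length, a polynomial-size circuit for the $\compclass{\coNP}$-complete problem, whence $\coNP\subseteq\P/\mathrm{poly}$ and the hierarchy collapses. The argument never uses how $\chi_{D_\phi}$ is obtained, only that it is small, so it rules out arbitrary (even non-computable) instantiations; and since any AF or BAF with a tractable admissibility test is a special case of such a $\chi$, the conceptual reading given in the paragraph preceding the theorem follows. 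For complete semantics I would reuse the same $D_\phi$ and $E_\mu$, augmenting the gadget with a component that makes $E_\mu$ coincide with its own defended set $\Gamma(E_\mu)$ precisely when the admissibility condition already holds, so that $E_\mu\in\com(D_\phi)$ again tracks unsatisfiability of $\phi[\mu]$.

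The main obstacle is engineering the reduction to respect the fixed/varying split. A set of assumptions carries no rules and no contraries, so every bit of $\phi$ must be compiled into the rules and contrary function of $D_\phi$, and the only freedom left to the varying input is the choice of which pre-existing assumptions populate $E_\mu$. Making the single universal quantifier hidden in ``defends itself'' range exactly over the $\vec x$-assignments compatible with $\mu$---while keeping $D_\phi$ independent of $\mu$---is the delicate step, and it is here that non-flatness is used essentially: it is the derivability of assumptions (hence the genuine quantification over closed attacking sets) that supplies the $\coNP$ power absent in flat ABA, where verification is tractable and no such lower bound can hold.
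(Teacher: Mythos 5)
Your overall strategy coincides with the paper's: both arguments work in the compilability framework of Cadoli et al., both reduce a $\compclass{coNP}$-complete problem (respecting the fixed/varying split) to the verification problem whose fixed part is a non-flat ABAF and whose varying part is an assumption set, and both conclude via the standard non-compilability-implies-collapse consequence. Your source problem (fixed 3-CNF $\phi(\vec x,\vec z)$, varying assignment $\mu$ to $\vec z$, asking whether $\phi[\mu]$ is unsatisfiable) is essentially interchangeable with the paper's clausal inference problem ($\phi \models c$ with $\phi$ fixed and the clause $c$ varying), since $\phi\models c$ iff $\phi$ with the literals of $c$ forced false is unsatisfiable; note, though, that the $\compclass{coNP}$-completeness of your variant is itself not the literal statement of Theorem 2.10 of Cadoli et al.\ and would need a (small) reduction, whereas the paper reduces directly from clausal inference.

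The genuine gap is that the reduction gadget---which is the entire technical content of the proof---is never constructed. You state the properties that $D_\phi$ and $E_\mu$ should have (``the construction is arranged so that\dots'') and then, in your closing paragraph, correctly identify that making the universal quantifier over \emph{closed} attacking sets range exactly over $\vec x$-assignments, while keeping $D_\phi$ independent of $\mu$, is ``the delicate step''---but you never carry that step out. The paper resolves it with a concrete construction requiring several non-obvious ingredients: (i) self-attacking ``primed'' assumptions $x',\neg x'$ (via rules $\overline{x'}\leftarrow x'$) that encode assignments and can only ever occur as attackers; (ii) rules $c\leftarrow l'$ deriving clause atoms from primed literals, rules $\overline{p}\leftarrow c_1,\ldots,c_n$ so that exactly the sets encoding models of $\phi$ attack the candidate, and rules $\overline{x'}\leftarrow x$ so that the varying set $g(c)$ counterattacks precisely those assignment sets satisfying one of its literals; (iii) a closure gadget ($d\leftarrow x',\neg x'$ together with $\overline{d}\leftarrow w$, with the fresh assumption $w$ placed into $g(c)$) that neutralizes inconsistent primed sets---this is exactly where the closedness requirement of non-flat ABA is tamed; and (iv) for the complete-semantics variant, extra assumptions $y_x,\neg y_x$ in mutual attack with the unprimed literals, guaranteeing that $g(c)$ defends nothing outside itself. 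Your handling of complete semantics (``augmenting the gadget with a component that makes $E_\mu$ coincide with $\Gamma(E_\mu)$\dots'') is likewise a statement of the goal rather than a construction. Without these gadgets the central equivalence ``$E_\mu$ admissible iff $\phi[\mu]$ unsatisfiable'' is unsupported, so the proposal is incomplete exactly at the point you yourself flag as the main obstacle.
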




\subsection{Towards Feasible BAF Instantiations}
\label{sec:feasible}
The previous subsection shows that instantiation of non-flat ABAFs will require exponentially many arguments in general. 
We strive to construct as few as possible nonetheless. 
We identify three redundancy notions to reduce the number of arguments. 
%
%
We consider an arbitrary but fixed semantics $\sigma\in \{ \com,\grd,\stb,\prf\}$ throughout this subsection.

\paragraph{Derivation Redundancy.} 
We call the first notion derivation redundant arguments as it spots ``inefficient'' derivations. 
\begin{definition}
	\label{def:redundant}
	For an ABAF $D$
	and its set of arguments $A_D$,
	we call an argument 
	$(S\vdash p)\in A_D$ \emph{derivation redundant} iff there is an argument 
	$(S'\vdash p)\in A_D$ 
	with $S'\subsetneq S$. 
\end{definition}
\begin{example}
	Recall Example~\ref{ex:intro}. 
	Suppose we consider an ABAF $D'$ by setting 
	$\mathcal R' = \mathcal R \cup \{ c\gets p \}$. 
	Then we would obtain a new argument $\{a\}\vdash c$. 
	Then the existing argument $A_3$ representing 
	$\{a,b\}\vdash c$ becomes derivation redundant. 
\end{example}


We observe that derivation redundant arguments can be removed without altering the sets of accepted assumptions.%
\begin{restatable}{proposition}{propRedundantArgs}
	\label{prop:redundant args}
	Let $D$ be an ABAF and $\CF_D$ the corresponding BAF. 
	Let $x\in A_D$ be derivation redundant and let $\CG$ be the BAF after removing the argument $x$ from $\CF_D$. 
	Then 
	$$ \{ \asms(E)  \mid E\in\sigma(\CF_D) \} = \{ \asms(E)  \mid E\in\sigma(\CG) \}. $$
\end{restatable}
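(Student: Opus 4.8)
The plan is to show that \emph{both} sides of the claimed identity coincide with $\sigma(D)$. For the left-hand side this is immediate from Theorem~\ref{thm:SemanticsCorrespondenceADM}: its first item gives $\{\asms(E)\mid E\in\sigma(\CF_D)\}\subseteq\sigma(D)$, and for $S\in\sigma(D)$ the set $\{y\in A_D\mid\asms(y)\subseteq S\}$ is a $\sigma$-extension of $\CF_D$ with $\asms$ equal to $S$, so $\{\asms(E)\mid E\in\sigma(\CF_D)\}=\sigma(D)$. It thus remains to prove $\{\asms(E)\mid E\in\sigma(\CG)\}=\sigma(D)$, i.e.\ that deleting the single argument $x=(S\vdash p)$ does not disturb the correspondence. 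The engine is the \emph{domination} of $x$ by its witness $x'=(S'\vdash p)$ with $S'\subsetneq S$, which survives in $\CG$: since $x$ and $x'$ share the conclusion $p$ they attack exactly the same arguments; since $\contrary{S'}\subseteq\contrary{S}$, every attacker of $x'$ also attacks $x$; and since $\cl(S')\subseteq\cl(S)$, everything $x'$ supports is supported by $x$ and already lies in any closed set containing $x$. The principle I use repeatedly is that \emph{$x'$ may be substituted for $x$}. I assume $x$ is non-atomic, i.e.\ not of the form $\{a\}\vdash a$ (I return to this at the end); the point is that $x$ is then never the \emph{target} of a support edge, so the $\CG$- and $\CF_D$-closures agree on every $T\subseteq A_{\CG}$, and deleting $x$ from a closed set keeps it closed.

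For $\sigma(D)\subseteq\{\asms(E)\mid E\in\sigma(\CG)\}$ I map $E\in\sigma(\CF_D)$ to $E\setminus\{x\}$. This preserves $\asms$: if $x\in E$ then, $E$ being closed, $\{a\}\vdash a\in E$ for every $a\in S=\asms(x)$, and these atomic arguments differ from the non-atomic $x$, hence remain. For $\sigma=\com$ I verify $E\setminus\{x\}=\Gamma_{\CG}(E\setminus\{x\})$: closedness and conflict-freeness are inherited, and the defence equalities follow by comparing $\Gamma_{\CG}(E\setminus\{x\})$ with $\Gamma_{\CF_D}(E)=E$. Here I note that for complete $E$ defending $x$ forces defending $x'$, so $x\in E$ implies $x'\in E$, and thus $E\setminus\{x\}$ still realizes all of $E$'s attacks. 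Whenever a closed set $T$ attacks a target only through $x$, I replace $x$ by $x'$ (legitimate since the support-closure of $x'$ is already present); as every attacker of $x'$ also attacks $x$, an attack against the modified set yields an attack against $T$.

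For the reverse inclusion I take $E'\in\sigma(\CG)$ and prove $S:=\asms(E')\in\sigma(D)$ by re-establishing the easy direction of Theorem~\ref{thm:SemanticsCorrespondenceADM} directly for $\CG$. For closedness of $S$, given $a\in\cl(S)$ I pick a minimal derivation $S_a\vdash a$ with $S_a\subseteq S$; if its argument is not $x$, it is defended by $E'$ (each $c\in S_a$ has $\{c\}\vdash c\in E'$, and completeness of $E'$ repels every closed set deriving $\contrary{c}$), hence it lies in $E'$ and supports $\{a\}\vdash a$, forcing $a\in S$; if that argument happens to be $x$, I substitute $x'$ and argue identically. Conflict-freeness and self-defence of $S$ follow by the same pattern, substituting $x'$ whenever the witnessing argument is $x$. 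The cases $\prf$, $\grd$, $\stb$ then reduce to $\com$: $\prf$-images are the $\subseteq$-maximal $\com$-images and $\grd$ their intersection, both coinciding across $\CF_D$ and $\CG$, while $\stb$ is handled directly by applying the substitution argument to its attack condition.

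The step I expect to be the main obstacle is precisely this reverse direction, where one must certify that the deleted $x$ is never indispensable; the substitution of $x'$ for $x$ is what makes it go through, and it is also where non-atomicity is essential. Indeed, if $x=\{a\}\vdash a$ were deleted then its witness would be $x'=\emptyset\vdash a$, whose assumption set is empty and whose support edge into $\{a\}\vdash a$ would lose its target, so $\asms$ could drop $a$ entirely; the canonical atomic arguments must therefore be retained, and derivation redundancy is applied only to non-atomic arguments.
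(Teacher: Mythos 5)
Your proof is correct where it applies, and it takes a genuinely different route from the paper's. The paper never mentions $\sigma(D)$: it proves the identity directly at the level of BAF extensions, mapping each $E\in\sigma(\CF_D)$ to $E\setminus\{x\}$ (using that if $x\in E$ then the witness $y=(S'\vdash p)$ is forced into $E$), and conversely mapping $E\in\sigma(\CG)$ to $E$ itself or to $E\cup\{x\}$ according to whether $S\subseteq\asms(E)$, verifying conflict-freeness, closedness, defense and the fixed-point property in each case. You instead route both sides through $\sigma(D)$: the left-hand side via Theorem~\ref{th:semantics correspondence 2}, and the right-hand side by re-proving the instantiation correspondence for the pruned BAF $\CG$, substituting the witness $x'$ wherever the deleted $x$ would be needed. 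The substitution (domination) argument is the shared kernel of both proofs; what your route buys is a fixed ABA-level reference point, at the cost of redoing part of the correspondence proof for $\CG$; what the paper's route buys is an extension-level bijection between $\sigma(\CF_D)$ and $\sigma(\CG)$, which makes $\grd$ and $\prf$ transfer immediately (intersections and $\subseteq$-maximality are preserved by the bijection), whereas your reduction of $\grd$ and $\prf$ to the complete case needs the additional observation that complete extensions of these instantiated BAFs are determined by their assumption sets.

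Your closing restriction to non-atomic $x$ is not a defect of your proof but a genuine catch: as literally stated, the proposition fails when the removed argument is $x=\{a\}\vdash a$ and its witness is $\emptyset\vdash a$. Take $\mathcal{A}=\{a,b\}$, the single rule $a\gets$ (with empty body), and contraries that are not derivable. Then the unique complete extension of $\CF_D$ is $\{\emptyset\vdash a,\ \{a\}\vdash a,\ \{b\}\vdash b\}$ with assumption-image $\{a,b\}$ (matching $\com(D)=\{\{a,b\}\}$), but after deleting the derivation-redundant argument $\{a\}\vdash a$ the unique complete extension of $\CG$ is $\{\emptyset\vdash a,\ \{b\}\vdash b\}$ with assumption-image $\{b\}$; the same discrepancy occurs for $\grd$, $\stb$ and $\prf$. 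The step of the paper's own proof that breaks here is the assertion ``$\asms(E)=\asms(E_x)$'' in Case 2 of the ($\subseteq$) direction, which presupposes that every $a\in\asms(x)$ is witnessed in $E\setminus\{x\}$ by an atomic argument distinct from $x$ --- true exactly when $x$ is non-atomic. So the statement (and the construction of the non-redundant core in the Summary paragraph) should be read with your proviso that the assumption arguments $\{a\}\vdash a$ are always retained; under that proviso both your proof and the paper's go through.
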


\paragraph{Expendable Arguments.} 
We derive another redundancy notion based on the conclusion of arguments: if $x = (S\vdash p)$ is an argument where $p$ is neither an assumption nor a contrary, then $x$ merely represents an intermediate derivation step. 
Arguments of this kind do not need to be instantiated if one is interested in assumption extensions only. 
\begin{definition}
	For an ABAF 
	$D$
	and its set of arguments $A_D$
	we call an argument 
	$(S\vdash p)\in A_D$ \emph{expendable} iff 
	$p\notin \mathcal A \cup \contrary{\mathcal A}$.
\end{definition}
\begin{example}
	In our Example~\ref{ex:intro}, 
	for instance the argument $A_2$ representing $\{b\}\vdash q$ is expendable since $q$ is neither a contrary nor an assumption.  
\end{example}
Since arguments of this kind have no out-going attacks, they do not contribute to the semantics of the instantiated BAF.
However, keep in mind that we still construct relevant super-arguments of expendable ones. 
\begin{restatable}{proposition}{propExpendableArgs}
	\label{prop:expendable args}
	Let $D$ be an ABAF and $\CF_D$ the corresponding BAF. 
	Let $x\in A_D$ be an expendable argument and let $\CG$ be the BAF after removing the argument $x$ from $\CF_D$. 
	Then 
	$$ \{ \asms(E)  \mid E\in\sigma(\CF_D) \} = \{ \asms(E)  \mid E\in\sigma(\CG) \}. $$
\end{restatable}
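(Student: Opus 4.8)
The plan is to show that removing an expendable argument $x = (S \vdash p)$, where $p \notin \mathcal{A} \cup \contrary{\mathcal{A}}$, preserves the set of assumption-extensions under every semantics $\sigma \in \{\com, \grd, \stb, \prf\}$. First I would observe the key structural fact about such an argument: since $p$ is neither an assumption nor a contrary of an assumption, $x$ has no out-going attacks in $\CF_D$. Indeed, by Definition~\ref{def:instantiated BAF}, $x$ attacks $T \vdash q$ only if $p \in \contrary{T}$, i.e.\ $p = \contrary{b}$ for some assumption $b \in T$; but $p \notin \contrary{\mathcal{A}}$ rules this out. Likewise $x$ supports $\{a\} \vdash a$ only if $a \in \cl(S)$, but the target of any support must itself be an assumption-argument, so deleting $x$ only removes in-going supports and in-going attacks to $x$, plus any out-going supports from $x$. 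The critical point is that $x$ exerts no out-going attack, so no other argument relies on $x$ to be attacked or defended.

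Next I would set up a correspondence between extensions of $\CF_D$ and extensions of $\CG$ (the BAF with $x$ deleted). For the inclusion from left to right, given $E \in \sigma(\CF_D)$, I would argue that $E \setminus \{x\} \in \sigma(\CG)$ with $\asms(E \setminus \{x\}) = \asms(E)$. The equality of assumption-sets holds because an expendable argument carries $\asms(x) = S$, and I must check that $S$ is already contributed by some other argument in $E$ or is irrelevant; here the cleaner route is to note that if $x \in E$ then every assumption in $S$ is the leaf-set of $x$, and since $E$ is closed and admissible, the sub-arguments witnessing each $a \in S$ (namely $\{a\}\vdash a$) must also lie in $E$, so $\asms(E \setminus \{x\}) = \asms(E)$. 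For the reverse inclusion, given $E' \in \sigma(\CG)$, I would show $E' \in \sigma(\CF_D)$ directly, or add $x$ back if forced by the characteristic function: since $x$ has no out-going attacks, adding or removing it never changes which arguments are attacked or defended, so conflict-freeness, closure, and the fixpoint condition $E = \Gamma(E)$ transfer across. The main technical care is with $\com$ and $\grd$, where the defining condition $E = \Gamma(E)$ may force $x$ into a complete extension if $E$ defends $x$; I would handle this by showing that whenever $E'$ defends $x$ in $\CF_D$, adding $x$ back yields the matching complete extension with the same assumptions, and the $\subseteq$-maximality for $\prf$ and the intersection for $\grd$ follow.

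A convenient unifying tool would be to lean on Theorem~\ref{th:semantics correspondence 2}, which already gives a tight correspondence between $\sigma(\CF_D)$ and $\sigma(D)$ via $E \mapsto \asms(E)$ and $S \mapsto \{x \in A \mid \asms(x) \subseteq S\}$. The cleanest argument may be to show that $\CG$ still satisfies the same correspondence with the underlying ABAF $D$, so that $\{\asms(E) \mid E \in \sigma(\CG)\} = \sigma(D) = \{\asms(E) \mid E \in \sigma(\CF_D)\}$. This reduces the problem to verifying that the construction of $\CG$ still yields the correct attacks and supports among the surviving arguments --- which is immediate, since deleting $x$ removes only relations incident to $x$, and $x$ had no out-going attacks to begin with --- together with checking that the canonical extension $\{x' \in A \mid \asms(x') \subseteq S\}$ for each $S \in \sigma(D)$ survives the deletion up to the membership of $x$ itself.

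The step I expect to be the main obstacle is the treatment of complete and grounded semantics, specifically reconciling the fixpoint condition $E = \Gamma(E)$ with the optional presence of $x$. Because $x$ can be defended by a complete extension (its only defense requirement concerns attackers of $x$, which are unaffected by deletion), a complete extension of $\CF_D$ may be forced to contain $x$ while its image in $\CG$ cannot; I must verify that this discrepancy never changes $\asms(E)$, using that the assumptions underlying $x$ are already pinned down by the assumption-arguments for its leaves. Ensuring that the $\grd$ intersection and the $\prf$ maximality are computed over matched families of extensions, rather than shifted by the spurious membership of $x$, will require the careful bookkeeping sketched above, but no genuinely new idea beyond the no-out-going-attacks observation.
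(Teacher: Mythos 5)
Your proposal is correct and takes essentially the same approach as the paper's own proof: both hinge on the observation that an expendable argument has no out-going attacks (since $p\notin\contrary{\mathcal A}$), both use the map $E\mapsto E\setminus\{x\}$ for one inclusion (with the assumption-arguments $\{a\}\vdash a$, $a\in S$, guaranteed to lie in $E$ --- you justify this by closedness, the paper by completeness, and either suffices since all semantics considered are admissible-based), and both handle the reverse inclusion by re-inserting $x$ exactly when $S\subseteq\asms(E)$, which resolves the fixpoint issue for complete and grounded semantics that you flag. The alternative shortcut you sketch via Theorem~\ref{th:semantics correspondence 2} is not the paper's route, but your primary argument is the paper's argument.
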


\paragraph{Assumption Redundancy.}
This final redundancy notion is specific to non-flat ABAFs. 
It states that 
arguments that make use of assumptions in intermediate steps can be neglected. 
This requires rules with assumptions in their head and is thus not possible for flat ABA. 

\begin{definition}
	\label{def:assumption redundant}
	For an ABAF 
	$D=(\mathcal{L},\mathcal{R},\mathcal{A},\contraryempty)$
	and its set of arguments $A_D$,
	an 
	argument $x=(S\vdash p)\in A_D$ is \emph{assumption redundant} iff it contains a sub-argument $x'$ s.t.\
	%
	\begin{itemize}
		\item $x'$ is a proper sub-argument of $x$, \ie $x'\neq x$, 
		\item $x$ is of the form $S'\vdash a$ where $S'\subseteq S$ and 
		$a\in\mathcal{A}$. 
	\end{itemize}
\end{definition}
\begin{example}
	Suppose we augment Example~\ref{ex:intro} with the additional rule ``$a\gets b$''. This would lead to a novel argument $A_7$ for $p$ by first applying the rule ``$a\gets b$'' and then ``$p\gets a$''. 
	However, since $a$ is an assumption itself, it is more efficient to infer $p$ from $a$ directly, which is represented by the argument $A_1$. 
	Thus $A_7$ would be assumption redundant. 
\end{example}
As for the other redundancy notions, arguments of this kind can be removed without altering the semantics. 
\begin{restatable}{proposition}{propAssumptionRedArgs}
	\label{prop:assumption redundant args}
	Let $D$ be an ABAF and $\CF_D$ the corresponding BAF. 
	Let $x \in A_D$ be an assumption redundant argument and let $\CG$ be the BAF after removing $x$ from $\CF_D$. 
	Then 
	$$ \{ \asms(E)  \mid E\in\sigma(\CF_D) \} = \{ \asms(E)  \mid E\in\sigma(\CG) \}. $$
\end{restatable}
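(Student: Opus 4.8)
The plan is to follow the same template as for Propositions~\ref{prop:redundant args} and~\ref{prop:expendable args}: write $\CG = \CF_D$ with the single argument $x=(S\vdash p)$ deleted, and exhibit a correspondence between $\sigma(\CF_D)$ and $\sigma(\CG)$ that preserves the projection $\asms(\cdot)$. Concretely I would prove two transfer statements, for the fixed $\sigma\in\{\com,\grd,\stb,\prf\}$: (down) if $E\in\sigma(\CF_D)$ then $E\setminus\{x\}\in\sigma(\CG)$; and (up) if $E'\in\sigma(\CG)$ then $\widehat{E'}\in\sigma(\CF_D)$, where $\widehat{E'}=E'\cup\{x\}$ if $\asms(x)\subseteq\asms(E')$ and $\widehat{E'}=E'$ otherwise. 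Preservation of the assumption set is immediate: for every $s\in S=\asms(x)$ the trivial argument $\{s\}\vdash s$ lies in the extension (it satisfies $\asms(\{s\}\vdash s)\subseteq\asms(E)$ and differs from the non-trivial $x$), so deleting $x$ cannot drop any assumption, i.e.\ $\asms(E\setminus\{x\})=\asms(E)$. Together with Theorem~\ref{thm:SemanticsCorrespondenceADM}, which already gives $\{\asms(E)\mid E\in\sigma(\CF_D)\}=\sigma(D)$, this yields the claimed equality.

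The engine of the argument is a \emph{substitute} argument. Since $x$ is assumption redundant it has a proper sub-argument $x'=(S'\vdash a)$ with $a\in\mathcal A$ and $S'\subseteq S$, rooted at an internal node of $x$. Let $y$ be obtained from the tree $x$ by collapsing the subtree $x'$ to a single leaf labelled $a$. Then: (i) $y\in A_D$ and $y\neq x$, hence $y\in\CG$; (ii) $y$ has root $p$, so $x$ and $y$ have \emph{exactly the same outgoing attacks} in the BAF; (iii) $\asms(y)\subseteq S\cup\{a\}$; and (iv) as $a$ is derivable from $S'\subseteq S$ we have $a\in\cl(S)$, so in $\CF_D$ the argument $x$ supports $\{a\}\vdash a$. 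By (iv), any closed set containing $x$ also contains $\{a\}\vdash a$, so $a$ is among its assumptions; together with $S\subseteq\asms(E)$ and (iii) this gives $\asms(y)\subseteq\asms(E)$ whenever $x\in E$, whence $y\in E$. In short, $y$ realises every attack that $x$ realises and is present precisely when $x$ is used.

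Given this, the semantic conditions transfer fairly routinely. Conflict-freeness and closedness are inherited because the only deleted support edges point into trivial arguments $\{b\}\vdash b\neq x$, which stay forced in; admissibility (self-defence) is preserved because any counterattack carried out by $x$ is equally carried out by $y\in E$, and deleting $x$ only removes attackers rather than creating new defence obligations. For stability I use the same substitution, additionally noting that when $\asms(x)\not\subseteq\asms(E')$ some $\{s\}\vdash s$ with $s\in S$ is attacked by $E'$, and that attack is simultaneously an attack on $x$, so stability is not lost when $x$ is reinstated or omitted. The grounded case follows from the complete case: the down/up maps restrict to inverse bijections between $\com(\CF_D)$ and $\com(\CG)$ (injectivity uses that complete extensions are determined by their assumption sets), and since set difference by $\{x\}$ distributes over intersection, the grounded extension of $\CG$ is exactly the grounded extension of $\CF_D$ minus $x$, with the same $\asms$.

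The main obstacle is the ``completeness closure'' half of the down-transfer, namely $\Gamma_{\CG}(E\setminus\{x\})\subseteq E\setminus\{x\}$. One takes $z\in A_\CG$ defended by $E\setminus\{x\}$ in $\CG$ and must certify that $z$ is defended by $E$ in $\CF_D$, so that completeness of $E$ forces $z\in E$ (and $z\neq x$). The delicate point is a closed attacking set $V$ of $z$ in $\CF_D$ with $x\in V$ acting as the \emph{sole} attacker of $z$: I would replace $x$ by $y$ to obtain $V_1=(V\setminus\{x\})\cup\{y\}$, argue that $V_1$ is closed in $\CG$ (its only new support successors come from $y$ and lie in $V$ by closedness of $V$) and still attacks $z$ through $y$, apply the $\CG$-defence of $z$ to get a counterattack from $E\setminus\{x\}$ into $V_1$, and finally redirect this counterattack into $V$ itself. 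If the counterattack hits $y$ via an assumption $a\notin S$, one uses property (iv) to note $\{a\}\vdash a\in V$, so $V$ is attacked after all. Verifying the closedness of $V_1$ and that every counterattack can be pushed back into $V$ is the technical heart; the remainder is bookkeeping.
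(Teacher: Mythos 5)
Your proposal is correct and follows essentially the same route as the paper's proof: both hinge on the substitute argument $y$ obtained by collapsing the sub-argument $x'$ to the leaf $a$ (so $y$ has the same conclusion, hence the same outgoing attacks, and $y\in E$ whenever $x\in E$ by exhaustiveness/closedness of complete extensions), combined with the transfer maps $E\mapsto E\setminus\{x\}$ and $E'\mapsto E'\cup\{x\}$ with the case split on $S\subseteq\asms(E')$, and the same observations for stable and grounded. Your explicit treatment of closed attacking sets containing $x$ via $V_1=(V\setminus\{x\})\cup\{y\}$ is in fact a more careful spelling-out of the step the paper handles tersely through $\cl(y)\subseteq\cl(x)$.
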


\paragraph{Summary.}
With our redundancy notions, the instantiated BAF can be reduced as follows. 
From $A_D$ we construct the set $A^*_D$ of \emph{non-redundant} arguments by
i) first removing all derivation redundant arguments from $A_D$; 
ii) then removing all expendable arguments from the result of i); and finally 
iii) removing all assumption redundant arguments from the result of ii). 
Now we define the redundancy-free core of $D$. 
\begin{definition}
	Let $D=(\mathcal{L},\mathcal{R},\mathcal{A},\contraryempty)$ be an ABAF. 
	The BAF $\CG=(\args,\attacker,\supporter)$ is the \emph{non-redundant core} of $D$ where 
	$\args = \{ (S, p) \mid S\vdash p \textnormal{ is an argument in } A^*_D \}$, 
	$\attacker$ is the set of all attacks between arguments in $\args$, and 
	$\supporter$ is the set of all supports between arguments in $\args$. 
\end{definition}
Due to Propositions~\ref{prop:redundant args},~\ref{prop:expendable args}~and~\ref{prop:assumption redundant args} this representation is semantically equivalent. 
\begin{corollary}
	\label{cor:efficient instantiation}
	Let $D$ be an ABAF and $\CF_D$ its corresponding BAF. 
	If $\CG$ is the redundancy-free core of $D$, then we have that
	$ \{ \asms(E)  \mid E\in\sigma(\CF_D) \} = \{ \asms(E)  \mid E\in\sigma(\CG) \} $. 
\end{corollary}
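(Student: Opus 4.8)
The plan is to derive the corollary from the three propositions by turning the bulk construction of the core into a finite sequence of single-argument deletions and then chaining the resulting equalities. Concretely, I would list the arguments deleted in passing from $A_D$ to $A^*_D$ as $x_1,\dots,x_k$, ordered so that all derivation redundant deletions (phase i) come first, then all expendable ones (phase ii), then all assumption redundant ones (phase iii). Writing $\CF_D=\HH_0,\HH_1,\dots,\HH_k$ with $\HH_i$ obtained from $\HH_{i-1}$ by deleting $x_i$, transitivity of equality reduces the claim to showing that each single step preserves the set $\{\asms(E)\mid E\in\sigma(\cdot)\}$, after which one last step accounts for the passage from the tree-based arguments of $A^*_D$ to the $(S,p)$-labelled nodes of $\CG$.

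First I would observe that Propositions~\ref{prop:redundant args},~\ref{prop:expendable args}~and~\ref{prop:assumption redundant args} are really \emph{local} statements: their proofs only use the presence, in the ambient BAF, of a specific replacement argument---a same-conclusion argument with a strictly smaller assumption set for derivation redundancy, a non-expendable super-argument for expendability, and the ``direct'' argument obtained by replacing the sub-derivation of the assumption $a$ by $\{a\}\vdash a$ for assumption redundancy. Hence each proposition continues to hold when $x_i$ is deleted from the intermediate framework $\HH_{i-1}$ rather than from $\CF_D$, provided that framework still contains the required replacement. So the crux is to verify that the chosen ordering keeps those replacements available at the moment each $x_i$ is removed.

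For this verification I would argue type by type. Expendability depends only on the conclusion of an argument and assumption redundancy only on its internal tree structure, so neither defining property can be destroyed by deleting other arguments; these deletions are therefore legitimate whenever their turn comes. For derivation redundancy I would note that, for every conclusion $p$, a $\subseteq$-minimal-assumption argument concluding $p$ is itself non-redundant and hence never deleted in phase (i); every derivation redundant argument thus retains such a witness throughout phase (i), which is exactly when it is removed. The remaining care is that the replacements needed in phases (ii) and (iii) are not themselves removed earlier: here I would check that the relevant non-expendable super-arguments and direct arguments survive the preceding phases (being non-expendable, and, where required, passing to $\subseteq$-minimal representatives that are not derivation redundant).

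Finally I would close the gap between $A^*_D$ and $\CG$: since attacks depend only on the conclusion and supports only on the leaf set $S$, two arguments sharing the same pair $(S,p)$ have identical attackers, attackees and supportees (and receive no support unless they are the unique trivial argument $\{a\}\vdash a$), so they are clones whose identification into a single node changes no $\sigma$-extension up to $\asms$. I expect the main obstacle to be precisely the bookkeeping of the previous paragraph---ensuring that reordering the deletions into the three phases never strands an argument without the replacement its proposition requires---rather than any new semantic idea.
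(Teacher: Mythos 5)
Your proposal is correct and takes essentially the same route as the paper: the paper derives the corollary by chaining Propositions~\ref{prop:redundant args},~\ref{prop:expendable args}~and~\ref{prop:assumption redundant args} over iterated single-argument deletions, and the witness-survival and phase-ordering bookkeeping you outline is exactly what the paper's appendix lemma on the well-definedness of $A^*_D$ establishes (non-deleted $\subseteq$-minimal witnesses for derivation redundancy, no cross-phase creation of new redundancies, sub-argument witnesses within phase iii). Your final step---identifying tree-based arguments sharing the same pair $(S,p)$ as attack/support clones---is handled only implicitly in the paper (``the representation streamlined''), so making it explicit is a sound refinement rather than a departure.
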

By our previous results, the non-redundant core preserves the semantics of the given ABAF, since only redundant arguments are omitted and the representation streamlined. 
By applying this representation, we still have exponentially many arguments in general, but finiteness is guaranteed.  
\begin{restatable}{proposition}{propNumberOfCoreArguments}
	The redundancy-free core $\CG$ of an ABAF $D$ has at most $|2^\mathcal A|\cdot |\mathcal L|$ arguments.
\end{restatable}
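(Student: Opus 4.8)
The plan is to read off the bound directly from the representation of arguments in the non-redundant core. By definition, the core $\CG=(\args,\attacker,\supporter)$ has argument set $\args = \{ (S,p) \mid S\vdash p \text{ is an argument in } A^*_D \}$, so each of its arguments is a pair consisting of a leaf set $S$ and a root atom $p$. Since for every tree-based argument $S\vdash p$ we have $S\subseteq\mathcal{A}$ and $p\in\mathcal{L}$, the set $\args$ is by construction a subset of the Cartesian product $2^{\mathcal{A}}\times\mathcal{L}$. The bound $|\args|\le |2^{\mathcal{A}}|\cdot|\mathcal{L}|$ then follows immediately from $|2^{\mathcal{A}}\times\mathcal{L}| = |2^{\mathcal{A}}|\cdot|\mathcal{L}|$.

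The one conceptual point I would highlight is why this pair-based count is the right quantity to bound, given that the underlying collection $A_D$ of tree-based arguments may be far larger---indeed infinite---once $D$ admits redundant or cyclic derivations. For instance, the rules $p\gets p$ and $p\gets a$ generate arbitrarily deep trees that all share root $p$ and leaf set $\{a\}$. The core sidesteps this by identifying an argument with its pair $(S,p)$, collapsing all trees of a given root and leaf set into a single element of $\args$. Hence the core contributes at most one argument per pair $(S,p)$, and since there are $2^{|\mathcal{A}|}=|2^{\mathcal{A}}|$ possible leaf sets and $|\mathcal{L}|$ possible roots, the product bound is obtained.

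I expect no genuine obstacle: the redundancy notions of Section~\ref{sec:feasible} are not needed for this particular bound, which already holds for the pair representation of $A_D$ before any pruning. Their purpose is to shrink the core further in practice and to guarantee that only semantically inessential arguments are dropped (Corollary~\ref{cor:efficient instantiation}); the worst-case exponential size is simply a consequence of indexing arguments by their leaf-set/conclusion signature. If one wishes, removing derivation-redundant arguments additionally ensures that, for each fixed root $p$, the retained leaf sets are $\subseteq$-minimal, but this refinement does not change the stated upper bound.
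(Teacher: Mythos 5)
Your proof is correct and follows essentially the same route as the paper's (one-line) proof: each core argument is a pair $(S,p)$ with $S\subseteq\mathcal{A}$ and $p\in\mathcal{L}$, so the core is a subset of $2^{\mathcal{A}}\times\mathcal{L}$ and the bound $|2^{\mathcal{A}}|\cdot|\mathcal{L}|$ is immediate. Your additional observation---that the pair representation is what collapses the (possibly infinite) family of tree-based arguments sharing a leaf set and root, making the redundancy notions unnecessary for this particular bound---is a useful clarification but does not change the argument.
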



%

\subsection{Fragments} 
\label{sec:fragments}
As we just saw, non-flat ABA instantiations will in general have exponentially many arguments. 
In this subsection, we investigate fragments inducing fewer arguments. 
Since reasoning in BAFs is milder than in non-flat ABAFs, 
we expect such fragments to admit lower complexity. 
Indeed, if the core computation is polynomial, then the complexity drops. 
\begin{restatable}{proposition}{propComplexityPolyArgs}
	\label{prop:complexity poly args}
	Let $\mathcal C$ be a class of ABAFs s.t.\ the non-redundant core of $D$ can be computed in polynomial time. 
	Then, the computational complexity of reasoning problems in $D$ is not harder than in the 
	instantiated BAF, \ie 
		$\Cred^{ABA}_{\sigma}$ is 
		in $\NP$ for $\sigma\in\{\adm,\com,\prf,\stb\}$ and 
		in $\DP$ for $\sigma = \grd$. 
\end{restatable}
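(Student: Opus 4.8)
The plan is to exploit the semantic equivalence of Corollary~\ref{cor:efficient instantiation} together with the assumed polynomial-time computability of the non-redundant core in order to reduce credulous ABA reasoning to credulous reasoning on a polynomially sized BAF, and then to invoke the BAF complexity bounds recalled in the background. First I would combine Corollary~\ref{cor:efficient instantiation} with both directions of Theorem~\ref{th:semantics correspondence 2} to obtain the exact correspondence $\sigma(D) = \{\asms(E) \mid E \in \sigma(\CG)\}$ for $\sigma \in \{\com,\prf,\grd,\stb\}$ (and, via the appendix, for $\adm$), where $\CG$ is the core of $D$. The only point to check here is $\asms(\{x \in \args \mid \asms(x) \subseteq S\}) = S$, which holds because each assumption $a$ contributes the argument $\{a\}\vdash a$. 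Since $p$ is credulously accepted in $D$ under $\sigma$ exactly when some $S \in \sigma(D)$ satisfies $p \in \theory_D(S)$, this turns the task into deciding whether some $E \in \sigma(\CG)$ has $p \in \theory_D(\asms(E))$, and the latter derivability test is a polynomial-time least-fixpoint computation once $\asms(E)$ is fixed.

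For $\sigma \in \{\adm,\com,\prf,\stb\}$ I would then give a direct $\NP$ procedure: compute $\CG$ in polynomial time (by the hypothesis on $\mathcal C$), nondeterministically guess the certificate $w$ that already witnesses membership of $\Cred^{BAF}_{\sigma} \in \NP$, recover a candidate $E$ and verify $E \in \sigma(\CG)$ in polynomial time, and finally check $p \in \theory_D(\asms(E))$. For $\com$ and $\prf$ I would first collapse the query to $\adm$ using $\Cred^{ABA}_{\com} = \Cred^{ABA}_{\prf}$ and the standard fact that $p$ is credulously accepted under complete iff under admissible (monotonicity of $\theory_D$, plus every admissible set extending to a complete one), so that only admissible verification is needed; this is polynomial for the instantiated BAFs, as it underlies the background $\NP$-membership of credulous BAF reasoning. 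Each ingredient is polynomial and the single nondeterministic guess keeps the whole procedure in $\NP$.

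The grounded case is where I expect the real difficulty. Since the grounded extension $G$ of $\CG$ is unique, credulous and skeptical reasoning coincide and the task is precisely to decide $p \in \theory_D(\asms(G))$. The naive reformulation ``there is an argument $S \vdash p$ all of whose assumptions lie in $G$'' places an existential quantifier in front of the co-$\NP$-style grounded-membership condition, yielding only a $\SigmaP{2}$ bound, while deciding membership of each assumption of $S$ in $G$ separately would require polynomially many $\DP$ queries and escape $\DP$, as the Boolean hierarchy is not closed under conjunction. To stay within $\DP$ I would instead build a polynomial-time many-one reduction to a \emph{single} instance of $\Cred^{BAF}_{\grd}$: starting from $\CG$, I attach a polynomially sized derivability gadget with one argument per atom of $\mathcal L$, wired by support together with auxiliary blocker-attacks so as to simulate the Horn-style closure of $D$ up to a designated argument $t$, and I orient all gadget edges to point only away from the core. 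Then $t$ lies in the grounded extension of the augmented BAF iff $p \in \theory_D(\asms(G))$, and $\DP$-membership follows since $\DP$ is closed under polynomial-time many-one reductions and $\Cred^{BAF}_{\grd} \in \DP$.

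The main obstacle is therefore the grounded case, and specifically two points of the gadget construction: (i) faithfully encoding conjunctive rule bodies and monotone derivation through BAF support and blocker arguments, and (ii) proving the directionality property that the gadget, having no edges back into the core, leaves the grounded status of the original arguments unchanged, so that one $\Cred^{BAF}_{\grd}$ query indeed suffices. For the $\NP$ cases the only delicate step is justifying that the relevant BAF verification is genuinely polynomial, which is inherited from the background $\NP$-membership of $\Cred^{BAF}_{\sigma}$.
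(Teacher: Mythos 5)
Your high-level plan is the paper's own: compute the non-redundant core in polynomial time, complete it to a BAF (the paper's proof just checks that attacks amount to pairwise conclusion/contrary tests and supports to closure computations via unit propagation over $\mathcal R$), and then let the cited BAF complexity bounds do the rest. For the $\NP$ semantics your guess-and-verify argument is essentially the paper's implicit one, but it contains a false step: in non-flat ABA it is \emph{not} true that every admissible set extends to a complete one. For instance, with $\mathcal A=\{a,b,c\}$ and rules $b\gets a$ and $\contrary{b}\gets c$, the set $\{c\}$ is admissible, yet every complete set must contain the unattacked assumptions $a$ and $c$, hence $b$ by closure, and is then self-attacking; so $\com(D)=\emptyset$ while $\adm(D)\neq\emptyset$, and credulous acceptance under $\adm$ and $\com$ genuinely differ. (This failure of the fundamental lemma is precisely why the paper works with $\prf'$ and treats $\adm$ separately in the appendix.) The fix is easy and makes your detour unnecessary: completeness of an argument set in a BAF is polynomial-time verifiable, since defense reduces to attacking the closures of single attackers, so one can guess $E$, verify $E\in\com(\CG)$ directly, and check $p\in\theory_D(\asms(E))$ by forward chaining.

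On grounded semantics you correctly sense an issue the paper passes over silently (ABA queries concern conclusions, the cited BAF result concerns arguments), but both your diagnosis and your cure are off. Diagnosis: $\DP$ \emph{is} closed under conjunction, since $(A_1\cap B_1)\cap(A_2\cap B_2)=(A_1\cap A_2)\cap(B_1\cap B_2)$ with $\NP$ and $\coNP$ each closed under intersection; the genuine obstruction to the naive procedure is that answering polynomially many oracle queries and then running derivability on the answers is a $\ThetaP{2}$-style computation, not a Boolean combination of $\DP$ predicates. Cure: your gadget rests on the premise that edges oriented away from the core cannot disturb it; this is a fact about AFs, not BAFs---support edges leaving a core argument enlarge its closure, and BAF defense quantifies over closures, so non-interference depends on exactly which gadget nodes core arguments may attack and would have to be proven for the concrete wiring. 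Moreover, your claimed equivalence for the designated argument $t$ needs the fact that the intersection, over all complete extensions, of the (monotone) gadget parts equals the gadget part of the least complete extension---and once you have that fact the gadget is superfluous. Indeed it yields the short proof: when $\grd(\CG)\neq\emptyset$, the grounded extension is the $\subseteq$-least complete extension (the intersection of complete extensions is automatically closed and conflict-free, and if it is also self-defending, monotonicity of defense makes it complete), so by monotonicity of $\theory_D$ we have $p\in\theory_D(\asms(E_{\grd}))$ iff $p\in\theory_D(\asms(E))$ for \emph{every} $E\in\com(\CG)$, a $\coNP$ condition; conjoining it with $\com(\CG)\neq\emptyset$ (in $\NP$) and with $\grd(\CG)\neq\emptyset$ (in $\DP$, this existence test being part of what the cited $\DP$-membership for BAF grounded reasoning establishes) keeps the whole decision in $\DP$ by the closure property above. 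This one-line monotonicity observation, not a many-one gadget reduction, is what actually backs the paper's terse appeal to BAF complexity.
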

Indeed, in our empirical evaluation in Section~\ref{sec:experiments} we will see that these fragments are milder in practice as well. 

\paragraph{Atomic ABAFs}
The first fragment we consider is \emph{atomic}, which has been studied for flat ABA as well~\cite{RapbergerU23,DBLP:conf/kr/LehtonenR0W23}.
For an ABAF to be atomic, each rule body element has to be an assumption. 
\begin{definition}
	Let $D=(\mathcal{L},\mathcal{R},\mathcal{A},\contraryempty)$ 
	be an ABAF. 
	A rule $r\in\mathcal R$ is called \emph{atomic} if $body(r)\subseteq \mathcal A$. 
	The ABAF $D$ is called \emph{atomic} if each rule $r\in \mathcal R$ is atomic. 
\end{definition}
For flat ABA, this means that $D$ has $|\mathcal R| + |\mathcal A|$ arguments (each rule induces exactly one tree-based argument)~\cite{DBLP:conf/kr/LehtonenR0W23}. 
In contrast, the same is not immediate for non-flat ABA as the derivation of assumptions is allowed. 
Nevertheless, due to our notion of assumption redundancy from Definition~\ref{def:assumption redundant}, we can show that the number of non-redundant arguments is indeed linear in $D$. 
\begin{restatable}{proposition}{propArgNumberAtomic}
	The non-redundant core of atomic ABAFs 
	$D$ 
	consists of at most $|\mathcal{R}|+|\mathcal{A}|$ many arguments.
\end{restatable}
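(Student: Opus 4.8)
The plan is to bound the number of distinct argument pairs $(S,p)$ that survive all three reductions. Since the construction of $A^*_D$ ends by deleting every assumption redundant argument, the non-redundant core consists only of arguments that are \emph{not} assumption redundant; hence it suffices to show that an atomic ABAF has at most $|\mathcal{A}|+|\mathcal{R}|$ distinct non-assumption-redundant tree-based arguments. The central claim I would establish is a \emph{shape} lemma: in an atomic ABAF every non-assumption-redundant argument is either a trivial assumption argument $\{a\}\vdash a$ (for $a\in\mathcal{A}$) or a single-rule argument $body(r)\vdash head(r)$ (for $r\in\mathcal{R}$).

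To prove the shape lemma I would run a case analysis on the root node of a tree-based argument $G$ with root label $p$. If the root is a leaf, then $G$ is a single node and thus the trivial argument $\{p\}\vdash p$ with $p\in\mathcal{A}$. Otherwise a rule $r$ with $head(r)=p$ is applied at the root, and by atomicity every child of the root carries a label from $body(r)\subseteq\mathcal{A}$, i.e., an assumption. I then split on the children: if some child $v$ is itself an inner node, the sub-tree rooted at $v$ is a proper sub-argument whose conclusion is the assumption labelling $v$ and whose leaves form a subset of the leaves of $G$; by Definition~\ref{def:assumption redundant} this makes $G$ assumption redundant. Consequently, for a non-assumption-redundant $G$ all children of the root must be leaves, so $G$ has depth one and equals $body(r)\vdash head(r)$.

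It remains to count. The trivial arguments contribute at most $|\mathcal{A}|$ distinct pairs $(\{a\},a)$, and the single-rule arguments contribute at most $|\mathcal{R}|$ distinct pairs $(body(r),head(r))$ --- fewer if several rules share head and body --- so the set of non-assumption-redundant arguments has at most $|\mathcal{A}|+|\mathcal{R}|$ elements. Removing derivation redundant and expendable arguments only shrinks this set further, whence the non-redundant core $A^*_D$, being a subset, has at most $|\mathcal{A}|+|\mathcal{R}|$ arguments.

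The main obstacle I anticipate is getting the boundary case of assumption redundancy exactly right: one must argue that a depth-one argument $body(r)\vdash head(r)$ is \emph{not} assumption redundant, even though its leaf children are trivial sub-derivations $\{a_i\}\vdash a_i$ of assumptions. The point is that these trivial leaves do not constitute the kind of intermediate assumption derivation that Definition~\ref{def:assumption redundant} penalizes (as illustrated by $A_1$ surviving in the running example); redundancy is triggered only when an assumption is produced by an actual rule application strictly below the root, which by atomicity is exactly the situation ruled out in the case analysis above. Care is also needed to confirm that a rule with an assumption head (permitted in the non-flat setting) still yields only the single pair $(body(r),head(r))$ and is correctly counted among the $|\mathcal{R}|$ rule arguments.
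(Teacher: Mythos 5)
Your proof is correct and takes essentially the same route as the paper, whose entire argument is the one-line observation that in an atomic ABAF each assumption and each rule induces exactly one (depth-one) tree-based argument, all deeper derivations being assumption redundant since they must derive an assumption by a rule application strictly below the root. Your shape lemma, the reduction to counting non-assumption-redundant arguments, and the remark on non-tightness of the bound (e.g.\ a rule $a \gets a$ duplicating the assumption argument for $a$, which the paper also notes) simply make explicit what the paper leaves implicit.
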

\begin{example}
	Consider an ABAF $D$ with 
	$\mathcal A = \{a,b,c\}$, 
	rules 
	$\mathcal R = \{(p\gets a), (q\gets b), (a \gets c)\}$, and an arbitrary contrary function. 
	$D$ is atomic since each rule body consists of assumptions. 
	While it is possible to construct more that $|\mathcal R|$ + $|\mathcal A|$ 
	arguments (by applying ``$a\gets c$'' and then ``$p\gets a$''), 
	these additional arguments are 
	assumption redundant.
\end{example}
As a consequence, Proposition~\ref{prop:complexity poly args} applies to the class of atomic ABAFs, inducing lower complexity of reasoning. 

\paragraph{Additive Closure}
\label{sec:additive}
In an atomic ABAF, each non-redundant argument has a derivation depth of $1$. 
We 
can also
bound the body size of rules, obtaining so-called \emph{additive} ABAFs.
\begin{definition}
	\label{def:additive}
	We call an ABAF 
	$D=(\mathcal{L},\mathcal{R},\mathcal{A},\contraryempty)$ 
	\emph{additive} 
	if for each rule $r\in \mathcal R$ it holds that $|body(r)|\leq 1$. 
\end{definition}
The reason for calling ABAFs of this kind additive is that they induce an additive $\theory_D$ mapping, that is, for each set $S$ of assumptions, 
$\theory_D(S) = \bigcup_{s\in S}\theory_D(s)$. 
\begin{example}
	Consider an ABAF $D$ with 
	$\mathcal A = \{a,b,c\}$, 
	$\mathcal L = \mathcal A \cup \{p,q,r,s\}$, 
	rules 
	$\mathcal R = \{(p\gets a), (q\gets b), (r \gets p), (s \gets q)\}$, and an arbitrary contrary function. 
	This ABAF is additive since each rule body has size one. 
	Consequently, each constructible argument is based on one assumption only (in contrast to e.g.\ $A_3$ in Example~\ref{ex:intro inst} relying on $a$ and $b$). 
\end{example}
%
Note that Bipolar ABA~\cite{DBLP:conf/prima/Cyras0T17} is a special kind of non-flat ABA restricted to being both atomic and additive.

As suggested by the previous example, each argument in an additive ABAF has the form $\{\}\vdash p$ or $\{a\}\vdash p$ for some assumption $a$. This induces the following bound. 
\begin{restatable}{proposition}{propArgNumberAdditive}
	\label{prop:arg number additive}
	The non-redundant core of an additive ABAF 
	$D$ 
	consists of at most $(|\mathcal A| +1)\cdot |\mathcal L|$ many arguments.
\end{restatable}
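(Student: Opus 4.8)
The plan is to first pin down the shape of every tree-based argument in an additive ABAF and then simply count the possible root/leaf combinations.

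The key structural step is to show that every argument $(S\vdash p)\in A_D$ of an additive ABAF satisfies $|S|\leq 1$. Since $D$ is additive, every rule $r\in\mathcal R$ has $|body(r)|\leq 1$, so in any derivation tree $G$ each inner node---being labelled with some $head(r)$---has exactly one child (the single element of $body(r)$, or a $\top$-leaf when $body(r)=\emptyset$). Hence $G$ is a path from its root down to a unique leaf. By the definition of tree-based arguments this leaf is labelled either with an assumption $a\in\mathcal A$, giving $S=\{a\}$, or with $\top$, giving $S=\emptyset$; the only remaining case is the trivial one-node argument $\{a\}\vdash a$ obtained from $a\in\mathcal A$, which again has $|S|=1$. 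In all cases $|S|\leq 1$, confirming the claim anticipated in the preceding example.

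Given this, I would count the admissible $(S,p)$ pairs. The assumption set $S$ of any argument ranges over $\{\emptyset\}\cup\{\,\{a\}\mid a\in\mathcal A\,\}$, which has exactly $|\mathcal A|+1$ elements, while the conclusion $p$ ranges over $\mathcal L$, giving $|\mathcal L|$ possibilities. Therefore there are at most $(|\mathcal A|+1)\cdot|\mathcal L|$ distinct pairs $(S,p)$ with $|S|\leq 1$. By definition the vertex set of the non-redundant core $\CG$ is $\{\,(S,p)\mid S\vdash p\in A^*_D\,\}$ with $A^*_D\subseteq A_D$, so it is a set of such pairs; distinct derivation trees sharing the same $(S,p)$ collapse to a single vertex. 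Consequently $\CG$ has at most $(|\mathcal A|+1)\cdot|\mathcal L|$ arguments.

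The routine parts---the counting and the collapsing of identical pairs---are immediate. The only step requiring care is the structural claim $|S|\leq 1$, specifically the bookkeeping around empty-bodied rules (whose $\top$-leaf contributes no assumption) and the trivial assumption arguments; once the derivation tree is seen to be a path, the bound follows directly, and the redundancy removal used to build $A^*_D$ is not even needed for the count, only for the semantic equivalence already established in Corollary~\ref{cor:efficient instantiation}.
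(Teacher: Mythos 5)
Your proof is correct and takes essentially the same route as the paper: observe that in an additive ABAF every tree-based argument has the form $\emptyset\vdash p$ or $\{a\}\vdash p$ (i.e.\ $|S|\leq 1$), then count the at most $(|\mathcal A|+1)\cdot|\mathcal L|$ possible $(S,p)$ pairs that can label core arguments. The only difference is that the paper simply asserts this structural form, while you justify it via the path shape of derivation trees under $|body(r)|\leq 1$ --- a welcome addition, not a deviation.
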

Also in this case, Proposition~\ref{prop:complexity poly args} is applicable, including the computational benefits of lower complexity.

\section{Algorithms for Non-flat ABA via BAFs}
\label{sec:ababaf}
We introduce an approach for solving reasoning problems in non-flat ABA by instantiating a BAF and solving the corresponding reasoning problem in the BAF.
	We focus on credulous reasoning for complete (and so preferred$'$) and stable semantics.
We employ efficient declarative methods, namely answer set programming (ASP)~\cite{GelfondL88,Niemela99} for generating arguments, and Boolean satisfiability (SAT)~\cite{DBLP:series/faia/336} 
for BAF reasoning. 

\subsection{BAF Generation}
\label{sec:baf-gen}
We introduce a novel approach for generating non-redundant arguments (see Section~\ref{sec:feasible}) from an ABAF, using the state-of-the-art ASP solver \clingo~\cite{GebserKKOSW16}.

Firstly, we present an ABAF $D=(\mathcal{L},\mathcal{R},\mathcal{A},\contraryempty)$ with $\mathcal{R} = \{r_1,\ldots,r_n\}$ in ASP as follows.
\begin{align*}
	\mathtt{ABA}(D) =
	&\{ \aspassump(a). \mid a \in \mathcal{A}  \}\ \cup\ \\
	&\{ \asphead(i,b). \mid r_i \in \mathcal{R}, b = head(r_i)  \}\ \cup\\
	&\{ \aspbody(i,b). \mid r_i \in \mathcal{R}, b \in body(r_i)  \}\ \cup \\
	&\{ \aspcontrary(a,b). \mid b = \overline{a}, a \in \mathcal{A}  \}.
\end{align*}

Then, we introduce the ASP
program $\aspmodule{gen\_arg}$ (see Listing~\ref{asp:baf}), which by itself enumerates each assumption set and determines what can be derived from it.
The following procedure limits redundancy in the set of arguments.\footnote{We chose to include assumption redundant arguments still, leaving their elimination to future work. }
Firstly, by Proposition~\ref{prop:redundant args}, 
we eliminate derivation redundant arguments.
We use \clingo{} heuristics~\cite{DBLP:conf/aaai/GebserKROSW13}
for this, adding the heuristic that each assumption is by default not in.
Secondly, 
by Proposition~\ref{prop:expendable args}, one retains semantic equivalence even if expendable arguments are 
ignored.
As we are interested in credulous 
acceptance, we also construct arguments for atoms whose acceptance we want to query.
Optionally, instead of a query atom, a set of atoms can be specified, so that the acceptance of any of these atoms can be decided from a single BAF.
We enumerate answers to $\mathtt{ABA}(D) \cup \aspmodule{arg\_gen}$ together with a constraint that  
{\bf derivable}($a$) holds, for each atom of interest $a$, resulting in answer sets corresponding to 
non-derivation redundant, non-expendable arguments for $a$.
The attack and support relations over the arguments can be straightforwardly determined based on the assumptions and conclusion of each argument.

\begin{listing}[t]
	\caption{Program $\aspmodule{gen\_{arg}}$\label{asp:baf}}
	\begin{lstlisting}
		{in(X) : assumption(X)}.
		derivable(X) :- assumption(X), in(X).
		derivable(X) :- head(R,X), usable_rule(R).
		usable_rule(R) :- head(R,_), derivable(X) : body(R,X).
	\end{lstlisting}
\end{listing}

\subsection{SAT Encodings for BAFs}
\label{sec:baf-sat}
We move on to proposing SAT encodings that capture the BAF semantics of interest
, following Definition~\ref{def:extsem}.
Our encodings are based on the standard SAT encodings for AFs~\cite{DBLP:conf/nmr/BesnardD04} with modifications to capture the additional 
aspects of BAFs.
Given a BAF $\CF = (\args, \attacker, \supporter)$ and the closure $cl(a)$ of each argument $a\in A$  (which can be precomputed in polynomial time), we capture conflict-free, closed, self-defending, and complete sets of arguments, respectively, as follows.
We use variables $x_a$ for $a\in A$ to denote that $a$ is in the extension.


Conflict-freeness is encoded as for AFs: $cf(\CF) = \bigwedge_{(a,b)\in Att} (\neg x_a \vee \neg x_b)$. 
For closedness, if an argument $a$ is in an extension, then each argument $b$ that $a$ supports must be in the extension, too: $closed(\CF) = \bigwedge_{(a,b)\in Sup} (\neg x_a\vee x_b)$.

Note that a set of arguments $E$ defends an argument $b$ if and only if $E$ attacks the closure of every argument $a$ that attacks $b$~\cite[Lemma 3.4]{DBLP:conf/aaai/0001PRT24}.
Thus an extension $E$ defends itself iff it holds that, if $b\in E$, then for each $a$ that attacks $b$, some argument $d$ that attacks an argument $c$ in the closure of $a$ must be in the extension.
\begin{align*}
	\mathit{self}\_\mathit{defense}(\CF) = \bigwedge_{(a,b)\in Att} \left(x_b \to \bigvee_{\substack{(d,c)\in Att,\\ c\in cl(a)}} x_d\right)
\end{align*}

For a complete extension $E$, any argument $b$ must be in 
$E$ if it holds that an argument in the closure of each argument $a$ that attacks $b$ is attacked by an argument in the extension.
\begin{align*}
	\mathit{defended}(\CF) =  \bigwedge_{b\in A} \left( \left( \bigwedge_{(a,b)\in Att} \bigvee_{\substack{(d,c)\in Att,\\ c\in cl(a)}} x_d \right) \to x_b \right)
\end{align*}
Taken together, we encode complete semantics by $com(\CF) = cf(F)\wedge closed(\CF)\wedge \mathit{self}\_\mathit{defense}(\CF) \wedge \mathit{defended}(\CF)$.

We can encode stability similarly to the standard encoding for AFs, with only the addition that an extension is closed: 
$$stb(F) = cf(\CF)\wedge closed(\CF)\wedge \bigwedge_{a\in A} \left( x_a \vee \bigvee_{(b,a)\in Att} x_b \right).$$

To decide credulous acceptance under $\sigma\in\{\com,\stb\}$
, we add the clause $$cred(\CF,\alpha)=\bigvee_{a\in A_{\alpha}} x_a$$ where $A_\alpha\subseteq\mathcal{A}$ is the set of arguments concluding $\alpha$.

\begin{proposition}
	Given an ABAF $D=(\mathcal{L},\mathcal{R},\mathcal{A},\contraryempty)$,  $\sigma\in\{\com,\stb\}$ and $\alpha\in\mathcal{L}$, let $\CF$ be the BAF constructed from $D$ via the procedure of Section~\ref{sec:baf-gen}.
	Then $\sigma(\CF)\wedge cred(\CF,\alpha)$ is satisfiable iff $\alpha$ is credulously accepted under~$\sigma$.%
\end{proposition}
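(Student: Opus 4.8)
The plan is to establish the biconditional by chaining together the semantic correspondence results already proved in the excerpt. The statement asserts that $\sigma(\CF) \wedge cred(\CF,\alpha)$ is satisfiable iff $\alpha$ is credulously accepted under $\sigma$ in $D$. First I would fix the direction of reasoning: a satisfying assignment of $\sigma(\CF)$ corresponds (via the variables $x_a$) to a set $E \subseteq \args$ of arguments, and I would verify that the clausal encodings $com(\CF)$ and $stb(\CF)$ exactly characterize $\com(\CF)$ and $\stb(\CF)$ as given in Definition~\ref{def:extsem}. This soundness-and-completeness-of-the-encoding step splits into checking each conjunct: $cf(\CF)$ captures conflict-freeness, $closed(\CF)$ captures closedness, $\mathit{self}\_\mathit{defense}(\CF)$ together with $\mathit{defended}(\CF)$ captures $E = \Gamma(E)$ (using the characterization that $E$ defends $b$ iff $E$ attacks the closure of every attacker of $b$, i.e.\ \cite[Lemma 3.4]{DBLP:conf/aaai/0001PRT24}), and the final big conjunction in $stb(\CF)$ captures that $E$ attacks every argument outside it.

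Next I would connect the BAF $\CF$ produced by the generation procedure of Section~\ref{sec:baf-gen} to the full instantiated BAF $\CF_D$. The procedure constructs the non-redundant core augmented with arguments for the query atom(s), so by Corollary~\ref{cor:efficient instantiation} (and Propositions~\ref{prop:redundant args},~\ref{prop:expendable args}) we have the key invariant $\{\asms(E) \mid E \in \sigma(\CF)\} = \{\asms(E) \mid E \in \sigma(\CF_D)\}$, i.e.\ the set of assumption-extensions realized is unchanged. I would note that the procedure explicitly retains (non-redundant) arguments concluding each query atom $\alpha$, so $A_\alpha$ is non-empty in $\CF$ exactly when some argument concludes $\alpha$, which is precisely when $\alpha \in \theory_D(\asms(E))$ for the relevant $E$.

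For the forward direction, suppose $\sigma(\CF) \wedge cred(\CF,\alpha)$ is satisfiable, witnessed by $E \in \sigma(\CF)$ with some $a \in A_\alpha \cap E$. By Theorem~\ref{th:semantics correspondence 2}, $\asms(E) \in \sigma(D)$, and since $a \in E$ concludes $\alpha$ we get $\alpha \in \theory_D(\asms(E))$, so $\alpha$ is credulously accepted. Conversely, if $\alpha$ is credulously accepted there is $S \in \sigma(D)$ with $\alpha \in \theory_D(S)$; by Theorem~\ref{th:semantics correspondence 2} the set $E = \{x \in \args \mid \asms(x) \subseteq S\} \in \sigma(\CF_D)$, and via the core-equivalence invariant there is a corresponding extension in $\sigma(\CF)$ containing an argument for $\alpha$, satisfying $cred(\CF,\alpha)$.

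\textbf{Main obstacle.} The routine but genuinely load-bearing step is verifying that the CNF encoding is faithful, in particular that $\mathit{self}\_\mathit{defense}$ and $\mathit{defended}$ together are equivalent to the fixpoint condition $E = \Gamma(E)$ rather than merely one inclusion; this is where the closure-based reformulation of defense must be invoked carefully. The second delicate point is the bookkeeping between $\CF$ and $\CF_D$: one must ensure that restricting to the query atom's arguments plus the non-redundant core does not drop an assumption-extension that realizes $\alpha$, which is exactly why the procedure is designed to re-add query-atom arguments on top of Corollary~\ref{cor:efficient instantiation}. I expect no serious difficulty beyond keeping these correspondences aligned across the three semantics-preserving layers (encoding $\leftrightarrow$ $\sigma(\CF)$, core $\leftrightarrow$ $\CF_D$, and $\CF_D \leftrightarrow D$).
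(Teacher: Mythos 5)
The paper states this proposition without an explicit proof (it is not among the restated results proved in the appendix), so there is no written proof to compare against; your plan assembles exactly the ingredients the paper's own text gestures at --- faithfulness of the SAT encoding w.r.t.\ Definition~\ref{def:extsem}, the redundancy results of Section~\ref{sec:feasible} to relate $\CF$ to $\CF_D$, and Theorem~\ref{th:semantics correspondence 2} to relate $\CF_D$ to $D$ --- and the overall chain is correct. Two places deserve tightening. First, for $\sigma=\stb$ the encoding-faithfulness check is not purely conjunct-by-conjunct: Definition~\ref{def:extsem} requires a stable set to be in $\adm(\CF)$, but $stb(\CF)$ contains no $\mathit{self}\_\mathit{defense}$ conjunct, so you must additionally argue that conflict-free $+$ closed $+$ attacking everything outside implies self-defense (any argument attacking some $b\in E$ lies outside $E$ by conflict-freeness, hence is attacked by $E$, hence so is every closed set containing it). This is one line, but without it the encoding appears to mismatch the definition. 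Second, in the converse direction your appeal to ``the core-equivalence invariant'' is slightly too weak as stated: the bare set equality $\{\asms(E)\mid E\in\sigma(\CF)\}=\{\asms(E)\mid E\in\sigma(\CF_D)\}$ does not by itself guarantee that the witnessing extension of $\CF$ contains an argument concluding $\alpha$. What does the job is the stronger fact implicit in the proofs of Propositions~\ref{prop:redundant args} and~\ref{prop:expendable args}, namely that the surviving extension is $E_D\cap\args_{\CF}$ where $E_D=\{x\mid \asms(x)\subseteq S\}$, combined with the observation that a $\subseteq$-minimal $T\subseteq S$ with $T\vdash\alpha$ yields a non-derivation-redundant argument, which the query-aware generation of Section~\ref{sec:baf-gen} retains; you flag this as the delicate point and identify the right reason, so this is a matter of making the argument explicit rather than a wrong step. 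A final cosmetic remark: $\CF$ is not the full non-redundant core, since the implementation keeps assumption-redundant arguments (per the paper's footnote), so the relevant citations are Propositions~\ref{prop:redundant args} and~\ref{prop:expendable args} applied iteratively, not Corollary~\ref{cor:efficient instantiation}.
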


\section{ASP Algorithms for Non-flat ABA}
\label{sec:cegar}

In this section we introduce an ASP-based approach for credulous acceptance in non-flat ABA without constructing arguments.
We propose a counterexample-guided abstraction refinement (CEGAR)~\cite{DBLP:journals/tcad/ClarkeGS04,DBLP:journals/jacm/ClarkeGJLV03} algorithm for complete semantics, inspired by state-of-the-art algorithms for other problems in structured argumentation, including flat ABA, that are hard for the second level of the polynomial hierarchy~\cite{LehtonenWJ21b,DBLP:conf/kr/LehtonenWJ22,DBLP:conf/comma/LehtonenWJ22}.
For stable semantics, we propose an ASP encoding
, reflecting the fact that credulous acceptance is NP-complete under stable semantics.


We use ASP to generate candidates based on an abstraction of the original problem, and another ASP solver to verify whether a counterexample to the candidate being a solution exists. 
We use the ASP encoding of ABAFs 
from Section~\ref{sec:baf-gen}, and
the notation $solve(\aspmodule{})$ to refer to solving the ASP program $\aspmodule{}$ and assume that $solve(\aspmodule{})$ either returns an answer set or reports that the program is unsatisfiable.
Details of the ASP programs 
are available in~\cite{lehtonen2024instantiations}.


We introduce Algorithm~\ref{alg:com-cred} for credulous acceptance under complete semantics.
As subprocedures, we introduce three ASP programs: the abstraction $\aspmodule{abs}$, a program that checks for a counterexample to admissibility $\aspmodule{not\_adm}$, and $\aspmodule{defends}(C,a)$ for checking counterexamples to completeness, i.e., if particular assumptions are defended.
The abstraction $\aspmodule{abs}$ admits as answers closed and conflict-free assumption sets from which the query is derivable, along with a stronger condition for complete semantics.
Namely, if an assumption is not in the candidate and not attacked by the set of undefeated assumptions, the candidate cannot be complete (since this assumption cannot be attacked by any closed  assumption set in particular, and is thus defended by the candidate).
The program $\aspmodule{not\_adm}$ is satisfiable iff a closed set of assumptions that is not attacked by the candidate attacks the candidate, implying that the candidate is not admissible.
Finally, given a candidate $C\subseteq \mathcal{A}$ and an assumption $a\in\mathcal{A}$, the program $\aspmodule{defends}(C,a)$ is satisfiable iff there is a set of assumptions that is not attacked by $C$, is closed, and attacks $a$.
In such a case the candidate does not defend $a$.

Algorithm~\ref{alg:com-cred} iteratively generates candidates (Line~\ref{alg:com-cred-1}) and checks admissibility \linebreak (Line~\ref{alg:com-cred-3}).
In case a candidate is admissible, it is further checked whether there is an assumption that is not in the candidate or attacked by the candidate (Line~\ref{alg:com-cred-4}), such that this assumption is defended by the candidate (Line~\ref{alg:com-cred-5}).
If not, the candidate is complete and thus the queried atom is credulously accepted (Line~\ref{alg:com-cred-6}).
Otherwise (if $C$ is either not admissible or not complete) the abstraction is refined by excluding the candidate from further consideration (Line~\ref{alg:com-cred-7}).
Finally, if no complete assumption set is found, the queried atom is not credulously accepted (Line~\ref{alg:com-cred-8}).

\begin{algorithm}[t]
	\caption{Credulous acceptance, complete semantics}
	\label{alg:com-cred}
	\begin{algorithmic}[1]
		\REQUIRE ABA framework $F=(\mathcal{L},\mathcal{R},\mathcal{A},\contraryempty)$, $s\in \mathcal{L}$
		\ENSURE return YES if $s$ is credulously accepted under complete semantics in $F$, NO otherwise
		\STATE{$\algorithmicwhile\ C:= solve(\aspmodule{abs})\ \algorithmicdo$} \label{alg:com-cred-1}
		\STATE{\hspace{\algorithmicindent}\textbf{$\mathit{flag}:=T$}} \label{alg:com-cred-2}
		\STATE{\hspace{\algorithmicindent}$\algorithmicif\ solve(\aspmodule{not\_adm}(C))$ unsatisfiable $ \algorithmicthen$} \label{alg:com-cred-3}
		\STATE{\hspace{\algorithmicindent}\hspace{\algorithmicindent}$\algorithmicfor$ $a\in \mathcal{A}$ s.t. $a\notin C$ and $a$ not attacked by $C$\ $\algorithmicdo$} \label{alg:com-cred-4}
		\STATE{\hspace{\algorithmicindent}\hspace{\algorithmicindent}\hspace{\algorithmicindent} $\algorithmicif\ solve(\aspmodule{defends}(C,a))\ \algorithmicthen\ \mathit{flag}:=F$; break} \label{alg:com-cred-5}
		\STATE{\hspace{\algorithmicindent}\hspace{\algorithmicindent}$\algorithmicif\ \mathit{flag}=T\ \algorithmicthen\ \algorithmicreturn$ YES} \label{alg:com-cred-6}
		\STATE{\hspace{\algorithmicindent}Add constraint excluding $C$ to $\aspmodule{abs}$} \label{alg:com-cred-7}
		\RETURN{NO} \label{alg:com-cred-8}
	\end{algorithmic}
\end{algorithm}

	\begin{proposition}
		Given an ABAF $D=(\mathcal{L},\mathcal{R},\mathcal{A},\contraryempty)$ and a query $\alpha\in\mathcal{L}$, Algorithm~\ref{alg:com-cred} outputs YES if and only if $\alpha$ is credulously accepted under complete semantics in $D$.
	\end{proposition}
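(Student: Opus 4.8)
The plan is to establish the biconditional by proving soundness and completeness of Algorithm~\ref{alg:com-cred} with respect to the semantic characterizations given in Definitions~\ref{def:ABA defense} and~\ref{def:ABA semantics}. The central idea is that the algorithm enumerates candidate assumption sets $C$ and accepts $\alpha$ precisely when it finds some $C$ that is (i) closed, conflict-free, with $\alpha\in\theory_D(C)$ (enforced by $\aspmodule{abs}$), (ii) admissible (certified by $\aspmodule{not\_adm}(C)$ being unsatisfiable), and (iii) complete (certified by $\aspmodule{defends}(C,a)$ being unsatisfiable for all relevant $a$). First I would state and use three correctness lemmas for the subprocedures, which I take from the prose description: $\aspmodule{abs}$ yields exactly the closed, conflict-free sets from which $\alpha$ is derivable satisfying the stated necessary condition for completeness; $\aspmodule{not\_adm}(C)$ is satisfiable iff there is a closed set $V$ not attacked by $C$ that attacks $C$, i.e.\ iff $C$ fails to defend itself; and $\aspmodule{defends}(C,a)$ is satisfiable iff some closed $V$ unattacked by $C$ attacks $a$, i.e.\ iff $C$ does \emph{not} defend $a$.

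For the \emph{soundness} direction (if the algorithm outputs YES then $\alpha$ is credulously accepted), I would track the candidate $C$ for which YES is returned on Line~\ref{alg:com-cred-6}. Since $C$ was produced by $\aspmodule{abs}$, it is closed, conflict-free, and $\alpha\in\theory_D(C)$. Since the admissibility check on Line~\ref{alg:com-cred-3} passed (i.e.\ $\aspmodule{not\_adm}(C)$ unsatisfiable), $C$ defends itself, so $C\in\adm(D)$. The key step is to argue $C$ is complete: because $\mathit{flag}$ remained $T$, for every $a\notin C$ not attacked by $C$ the call $\aspmodule{defends}(C,a)$ was unsatisfiable, meaning $C$ defends every such $a$. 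I then need to combine this with the stronger condition built into $\aspmodule{abs}$ to conclude that $C$ defends no assumption outside itself: the assumptions $a\notin C$ that \emph{are} attacked by $C$ cannot be defended by $C$ (as $C$ is conflict-free and would attack its own defended element), and the condition enforced by $\aspmodule{abs}$ rules out any $a\notin C$ unattacked by the undefeated set. Hence $C=\Gamma(C)$, so $C\in\com(D)$ and $\alpha$ is credulously accepted.

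For the \emph{completeness} direction (if $\alpha$ is credulously accepted then the algorithm outputs YES), I would take a complete set $S$ with $\alpha\in\theory_D(S)$ witnessing credulous acceptance, and argue that $S$ (or a candidate equivalent to it) is eventually enumerated by $\aspmodule{abs}$ and passes all checks. The point is that $S$, being complete, is closed, conflict-free, derives $\alpha$, and satisfies the extra completeness-necessary condition, so it is a legal answer of $\aspmodule{abs}$; the refinement on Line~\ref{alg:com-cred-7} only excludes candidates that have already \emph{failed}, so $S$ is never excluded before being tested. When $S$ is tested, admissibility holds (so Line~\ref{alg:com-cred-3} passes) and, because $S$ is complete, it defends no external assumption, so every $\aspmodule{defends}(S,a)$ call returns unsatisfiable, $\mathit{flag}$ stays $T$, and the algorithm returns YES.

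The main obstacle I anticipate is the exact interplay between the two completeness checks in the soundness argument: the algorithm only explicitly verifies defense of assumptions that are \emph{neither in $C$ nor attacked by $C$}, so I must justify that this restricted check, \emph{together with} the strengthened admissibility condition enforced by $\aspmodule{abs}$ (that no unattacked-by-undefeated assumption lies outside $C$), is genuinely sufficient to guarantee $C$ defends nothing outside itself and is therefore complete. Pinning down that the undefeated-set condition in $\aspmodule{abs}$ correctly captures ``cannot be attacked by any closed set'' — and hence is defended — is the delicate point; the remaining steps are routine given the three subprocedure lemmas.
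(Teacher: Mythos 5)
Your high-level plan---correctness lemmas for the three ASP subprocedures, then soundness and completeness of the CEGAR loop, with the observation that assumptions attacked by a closed conflict-free $C$ are never defended by $C$---is exactly the skeleton of the paper's own justification. However, you handle the polarity of the $\aspmodule{defends}$ check inconsistently, and this breaks both directions. By the program's semantics, $\aspmodule{defends}(C,a)$ is satisfiable iff some closed set not attacked by $C$ attacks $a$, i.e., satisfiable iff $C$ does \emph{not} defend $a$. Completeness of an admissible $C$ requires that $C$ defends \emph{no} $a\notin C$, so what you must establish for YES-instances is that $\aspmodule{defends}(C,a)$ is \emph{satisfiable} for every $a\notin C$ not attacked by $C$. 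In your soundness direction you instead conclude from $\mathit{flag}=T$ that ``$C$ defends every such $a$'' and then assert that $C$ defends nothing outside itself; these two statements are contradictory unless the loop's range is empty, and your attempted bridge via the abstraction's condition is a non sequitur---that condition says every $a\notin C$ is attacked by the \emph{undefeated} set, which is not the same as ``attacked by $C$'' and does not imply ``not defended by $C$'' (the undefeated set need not be closed, while non-defense requires a closed witness unattacked by $C$). In your completeness direction you commit the mirror-image error: from ``$S$ defends no external assumption'' you infer that every $\aspmodule{defends}(S,a)$ call is \emph{unsatisfiable}, whereas the program's semantics makes precisely these calls \emph{satisfiable} when $S$ is complete.

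The repair: Line~\ref{alg:com-cred-5} must be read as setting $\mathit{flag}:=F$ exactly when $\aspmodule{defends}(C,a)$ is \emph{unsatisfiable} for some $a\notin C$ unattacked by $C$---this is what the paper's prose describes (an external assumption defended by the candidate refutes completeness), the literal pseudocode notwithstanding. With this polarity used consistently, your skeleton goes through: YES is returned for $C$ iff $C$ is closed, conflict-free, derives $\alpha$, is admissible, and defends no assumption outside itself, i.e., $C\in\com(D)$ with $\alpha\in\theory_D(C)$; conversely, any complete $S$ with $\alpha\in\theory_D(S)$ satisfies the abstraction's pruning constraints (you should verify explicitly that these are necessary conditions for completeness, so no complete candidate is lost), is excluded only after being tested, passes the admissibility check, and keeps $\mathit{flag}=T$ because every $\aspmodule{defends}(S,a)$ call is satisfiable; termination holds because each rejected candidate is permanently excluded and there are only finitely many candidates.
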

	
	Stable semantics can be encoded 
	by slightly adapting the encoding for flat ABAFs \cite{LehtonenWJ21a}. 
	Namely, we add a constraint for the assumption set being closed.
	For finding the credulous acceptance of 
	$s\in\mathcal{L}$, 
	we add a constraint requiring that $s$ is derivable from the assumption set. 
	
	\section{Empirical Evaluation}
	\label{sec:experiments}
	
	We present an evaluation of the algorithms proposed in Sections~\ref{sec:ababaf} and \ref{sec:cegar}, named \ababaf{}\footnote{Available at \url{https://bitbucket.org/lehtonen/ababaf}.} and \cegar{}\footnote{Available at \url{https://bitbucket.org/coreo-group/aspforaba}.}, respectively.
	We implemented both approaches in Python, using \clingo{} (version 5.5.1)~\cite{GebserKKOSW16} for \cegar{} and for generating the arguments in \ababaf{}.
	We used \textsc{PySAT} (version 0.1.7)~\cite{imms-sat18} with \textsc{Glucose} (version 1.0.3)~\cite{DBLP:conf/ijcai/AudemardS09,DBLP:conf/sat/EenS03} as the SAT solver in \ababaf{}. 
	We used 2.50 GHz Intel Xeon Gold 6248 machines under a per-instance time limit of 600 seconds and memory limit of 32 GB.
	
	Lacking a standard benchmark library for non-flat ABA, we generated two benchmark sets adapted from flat ABA benchmarks~\cite{DBLP:conf/kr/JarvisaloLN23}. Set 1 has the following parameters:
	number of atoms in $\{80,120,160,200\}$, ratio of atoms that are assumptions in $\{0.2,0.4\}$, ratio of assumptions occurring as rule heads in $\{0.2,0.5\}$, and both number of rules deriving any given atom and rule size (number of atoms in the body of rule) selected at random from the interval $[1,n]$ for $n\in\{1,2,5\}$.
	We call the maximum rules per atom $mr$ and maximum rule size $ms$; instances with $ms=1$ are additive.
	For benchmark set 2, we limited $mr$ and $ms$ to $\{2,5\}$, and generated instances a certain distance from atomic.
	For this, a \emph{slack} parameter specifies how many atoms in each rule body can be non-assumptions.
	Here slack is $0,1$ or $2$, the first resulting in atomic ABAFs.
	We generated 5 instances for each combination of parameters for both benchmark sets. 
	
	
	Table~\ref{table:dc-co} summarizes results for credulous acceptance under complete semantics. 
	On benchmark set 1 (Table~\ref{table:dc-co}, left), generally \ababaf{} performs better than \cegar{} when the parameters take lower values.
	\ababaf{} outperforms \cegar{} when $ms=1$, corresponding to additive ABA frameworks, but also for $ms=2$ when $mr$ is 1 or 2.
	\cegar{}, on the other hand, performs best with $mr=5$.
	The results for benchmark set 2 (for credulous acceptance under complete semantics) can be seen in Table~\ref{table:dc-co} (right).
	On atomic instances ($slack=0$) \ababaf{} outperforms \cegar{}.
	As the slack increases, the performance of \ababaf{} decreases, while \cegar{} somewhat improves.
	Evidently, \ababaf{} is able to take advantage of the lower complexity of additive and atomic instances.
	The results suggest that the approaches are complementary and their relative performance varies by parameters.
	
	We show the run times of both algorithms against the number of arguments a given instance gives rise to in Figure~\ref{fig:narguments} (for instances solved by\ababaf{}; timeouts of \cegar{} shown as 600 seconds).
	For \ababaf{}, there is a clear correlation between run time and size of the constructed BAF, as can be expected given that the arguments construction is time-consuming and the BAF is the input for a SAT call.
	In contrast, BAF size does not predict run time for \cegar{}, since 
	\cegar{} does not instantiate arguments.
	
	
	We also evaluated our algorithms under stable semantics.
	As expected given the lower complexity compared to complete semantics, \cegar{} performs better, solving all instances.
	On the other hand, \ababaf{} constructs the BAF for stable semantics too, and accordingly the performance of \ababaf{} is significantly worse than \cegar{}, 
	with multiple timeouts.
	This highlights that a crucial component in the performance of \ababaf{} on complete semantics is the lower complexity of deciding acceptance in BAF compared to the corresponding ABAF.
	
	\begin{table}[t]
		\centering
		\resizebox{\linewidth}{!} {
			\small
			\begin{tabular}{cc|rr|rr}
				\toprule
				\multicolumn{2}{c}{} & \multicolumn{4}{c}{\textbf{\#solved} (mean run time (s))}\\
				\midrule
				$ms$ &
				$mr$ &
				\multicolumn{2}{c|}{\ababaf{}} &
				\multicolumn{2}{c}{\cegar{}} \\
				\midrule
				& 1 & 80 & (0.2) & 56 & (9.3) \\
				1 & 2 & 80 & (0.4) & 66 & (8.4) \\
				& 5 & 80 & (6.2) & 79 & (0.1) \\
				\midrule
				& 1 & 70 & (0.6) & 62 & (6.8) \\
				2 & 2 & 54 & (17.1) & 40 & (24.2) \\
				& 5 & 50 & (13.2) & 66 & (14.4) \\
				\midrule
				& 1 & 80 & (2.3) & 80 & (0.1) \\
				5 & 2 & 54 & (4.0) & 72 & (5.3) \\
				& 5 & 11 & (24.9) & 43 & (37.1) \\
				\bottomrule 
			\end{tabular}
			\quad
			\begin{tabular}{cc|rr|rr}
				\toprule
				\multicolumn{2}{c}{} & \multicolumn{4}{c}{\textbf{\#solved} (mean run time (s))}\\
				\midrule
				$slack$ &
				$ms$ &
				\multicolumn{2}{c}{\ababaf{}} &
				\multicolumn{2}{c}{\cegar{}} \\
				\midrule
				0 & 2 & 147 & (3.8) & 104 & (41.5) \\
				& 5 & 109 & (7.3) & 77 & (51.3) \\
				\midrule
				1 & 2 & 112 & (15.4) & 112 & (18.1) \\
				& 5 & 51 & (20.0) & 89 & (31.3) \\
				\midrule
				2 & 2 & 122 & (21.0) & 118 & (8.2) \\
				& 5 & 59 & (6.6) & 102 & (31.2) \\
				\bottomrule
			\end{tabular}
		}
		\caption{Number of solved instances and mean run time over solved instances under complete semantics in benchmark sets 1 (left) and 2 (right). There are 80 and 160 instances per row in sets 1 and 2.
			\label{table:dc-co}}
	\end{table}

			\begin{figure}[t]
				\centering
				\includegraphics[width=0.65\linewidth]{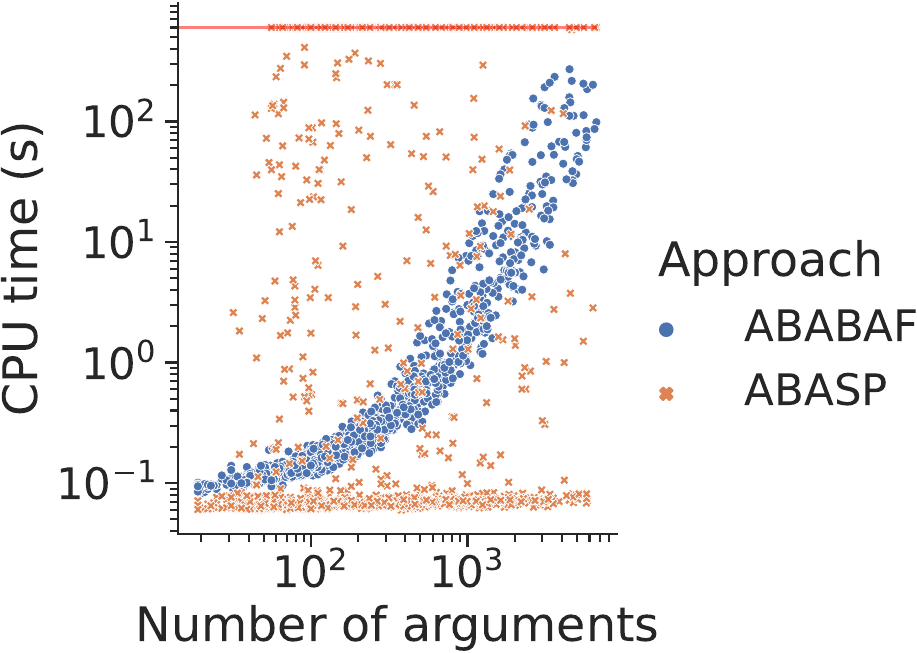}
				\caption{\label{fig:narguments} The effect of the number of arguments an ABAF gives rise to on the run time of our approaches w.r.t.\ complete semantics. 
				}
			\end{figure}

			\section{Conclusion}
			Efficient algorithmic solutions for assumption-based argumentation are widely studied; however, despite the wide variety of different solvers for assumption-based reasoning, most existing approaches only focus on the flat ABA fragment. 
			In this work we investigated theoretical foundations and algorithms for non-flat ABA reasoning, which has higher complexity than in flat ABA. 
			
			Our redundancy notions gave rise to two fragments of ABA, i.e., atomic and additive ABA, in which reasoning exhibits the same complexity as for flat ABA.
			We proposed two algorithmic approaches, one instantiating a BAF and one without argument construction. 
			The former faces an exponential number of arguments in general, but we proposed and applied redundancy notions, and employed state-of-the-art solving techniques.
			For the latter, we adapted state-of-the-art algorithms for structured argumentation.
			We showed empirically that, in contrast to previous instantiation-based approaches for structured argumentation, our novel instantiation-based approach is able to outperform the direct approach, in particular for problems on the second level of the polynomial hierarchy. 
			
			For future work, we want to extend our implementation to further common semantics and reasoning tasks such as skeptical acceptance
			; this is an additional challenge, since e.g. skeptical reasoning under preferred semantics lies on the third level of the polynomial hierarchy. 
			Another promising line of future work is to develop methods for computing explanations for ABA in spirit of the work by~\citeauthor{dung2006dialectic}~(\citeyear{dung2006dialectic}), utilizing our instantiation-based implementation. 

\section*{Acknowledgements} 
This research was funded 
by the Austrian Science Fund (FWF) P35632 and University of Helsinki Doctoral Programme in Computer Science DoCS;
by the  European Research Council (ERC) under the European Union’s Horizon 2020 research and innovation programme (grant agreement No. 101020934, ADIX) and by J.P. Morgan and by the Royal Academy of Engineering under the Research Chairs and Senior Research Fellowships scheme; 
by  the  Federal  Ministry  of  Education  and  Research  of  Germany  and  by  S\"achsische Staatsministerium  f\"ur  Wissenschaft,  Kultur  und  Tourismus  in  the  programme  Center  of Excellence for AI-research ``Center for Scalable Data Analytics and Artificial Intelligence Dresden/Leipzig'', project identification number:  ScaDS.AI. 
The authors wish to thank the Finnish Computing Competence Infrastructure (FCCI)
for supporting this project with computational and data storage resources.

\clearpage

\appendix

\section{Omitted Proofs of Section~\ref{subsec:lower bound}}

Formally, we show a hardness result for the class \compclass{coNP}. This class contains problems composed of pairs. Here, we consider the problem of pairs $(D,A)$, with $D$ a non-flat ABAF and $A$ an assumption set in $D$, that asks whether $A$ is admissible or complete in $D$. We show that this problem is \compclass{coNP}-hard, signaling that $D$ cannot be transformed to a structure of polynomial size from which one can in polynomial time check admissibility of $A$ in $D$. 

\begin{restatable}{theorem}{thLowerBound}
	\label{th:lower bound}
	The problem of deciding whether a given set of assumptions is admissible or complete in a given non-flat ABA framework is \compclass{coNP}-hard under \compred{} reductions.
\end{restatable}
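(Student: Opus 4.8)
The plan is to establish \compclass{coNP}-hardness by exhibiting a \compred{} reduction from a co\NP-complete problem, namely unsatisfiability of a propositional $3$-CNF formula. Following the compilability framework of \citeauthor{CadoliDLS02}~(\citeyear{CadoliDLS02}), it suffices to map each formula $\phi$ over $n$ variables to a pair $(D,A)$ such that (i)~the ABAF $D=D_n$ depends only on the \emph{size} of $\phi$ and is of polynomial size; (ii)~the assumption set $A=g(\phi)$ is computable from $\phi$ in polynomial time; and (iii)~$A$ is admissible (resp.\ complete) in $D$ iff $\phi$ is unsatisfiable. Since $D_n$ is a function of the input length alone, such a map is exactly a \compred{} reduction, and co\NP-completeness of unsatisfiability then yields \compclass{coNP}-hardness.

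The key idea is to realise the universally quantified, co\NP-flavoured \emph{self-defence} condition of admissibility as the statement ``every truth assignment falsifies $\phi$''. I would build into the fixed $D_n$ one machinery block for \emph{every} possible $3$-clause $c$ over the $n$ variables (there are only $O(n^3)$ of them), gated by a dedicated assumption $k_c$; the online set $A=\{a\}\cup\{k_c\mid c\in\phi\}$ then merely switches on the clauses actually present in $\phi$. Assignments are encoded by the \emph{attacking} sets $V$: for each variable I add selector assumptions $t_i,f_i$ and a chain of rules deriving $\contrary{a}$ from any set that picks at least one selector per variable, so that precisely the complete assignments attack the target $a$. To link $A$ with $V$ I use closedness together with assumption-headed rules (available only in the non-flat setting): for each clause $c$ a rule forces a marker assumption $m_c$ into every closed $V$ whose assignment falsifies $c$, while a rule $\contrary{m_c}\la k_c$ lets $A$ attack exactly those $V$ that falsify an \emph{active} clause. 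Inconsistent selector choices are neutralised by a rule forcing a marker $g_i$ (whose contrary is a fact) into any $V$ containing both $t_i$ and $f_i$.

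With this setup the two directions are routine. If $\phi$ is satisfiable, the closed set $V_\tau$ built from a satisfying assignment $\tau$ attacks $a$ but contains no active marker $m_c$, hence is \emph{not} counter-attacked by $A$; so $A$ fails to defend $a$ and is not admissible. Conversely, if $\phi$ is unsatisfiable, every closed attacker of $a$ is either inconsistent (countered via some $g_i$) or a complete assignment falsifying some active clause (countered via the corresponding $m_c$); together with closedness and conflict-freeness of $A$, this gives $A\in\adm(D)$. For the \emph{complete}-semantics version I additionally ensure that $A$ defends nothing outside itself, so that $A\in\com(D)$ iff $A\in\adm(D)$: selectors and markers are made non-defendable by self-looping contraries (e.g.\ $\contrary{t_i}\la t_i$) or by facts, and each gate $k_c$ is equipped with an auxiliary attacker that $A$ counter-attacks iff $k_c\in A$, so that active gates are defended and in $A$ while inactive ones are neither.

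The main obstacle I anticipate is precisely this last calibration: arranging the fixed, clause-independent gadgets so that self-defence captures unsatisfiability \emph{exactly}---with no spurious closed attacker of $a$ escaping a counter-attack, and with every assumption outside $A$ provably undefended so that admissibility and completeness coincide. Once the gadgets are verified, the \compred{} requirements follow immediately: $D_n$ has $O(n^3)$ assumptions and rules and depends only on $n\le|\phi|$, the map $\phi\mapsto A$ is polynomial, and the correspondence above gives $A\in\adm(D)\iff A\in\com(D)\iff \phi$ unsatisfiable, completing the reduction.
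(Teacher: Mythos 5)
Your strategy is viable, but it takes a genuinely different route from the paper, and one step of it is asserted where it actually needs a theorem. The paper reduces from clausal inference (CI), whose instances are pairs $(\phi,c)$ with the CNF $\phi$ as the \emph{fixed} part and the clause $c$ as the \emph{online} part; this problem is already known to be \compclass{coNP}-complete \cite[Theorem~2.10]{CadoliDLS02}. Its reduction maps the formula into the ABAF (so $f(\phi)=D$ encodes the clauses of $\phi$ via rules such as $c \la l'$ and $\contrary{p} \la c_1,\ldots,c_n$) and the clause into the queried set ($g(c)=\{z\in c\}\cup\{w\}$), then shows $g(c)$ is admissible iff complete iff $\phi\models c$. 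You invert this division of labour: the fixed ABAF $D_n$ depends only on $|\phi|$ (a universal gadget with gates $k_c$ for all $O(n^3)$ possible 3-clauses), and the entire formula is pushed into the online assumption set $A=\{a\}\cup\{k_c\mid c\in\phi\}$. What the paper's route buys is that no compilability machinery beyond the cited completeness of CI is needed; what your route buys is a more elementary source problem and a formula-independent, size-indexed family of hard ABAFs --- at the price of a larger gadget and an extra framework lemma.

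That lemma is the gap to close. Your sentence ``co$\NP$-completeness of unsatisfiability then yields \compclass{coNP}-hardness'' is not a consequence of the definition of \compred{} reductions: classical hardness does not transfer to compilability hardness for free --- the entire point of the theory is that classically hard problems can become easy once the fixed part is compiled, and under the paper's \emph{simplified} reductions (where $f$ sees only the fixed part and $g$ only the online part) your source problem ``fixed part $=$ size, online part $=$ $\phi$, question: is $\phi$ unsatisfiable'' is not trivially \compclass{coNP}-hard. What makes it work is the representative-equivalence-style fact that a polynomial-time reduction from a classically coNP-hard problem whose image has a fixed part depending only on the size of the source instance establishes \compclass{coNP}-hardness; this holds because full \compred{} reductions allow the online map $g$ to consume a compiled copy $f_2(x,|y|)$ of the source's fixed part, exactly the expressivity the simplified form renounces. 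This is true and available in the compilability literature, but your proof must invoke (or reprove) it explicitly. Separately, the gadget calibration you flag as the main obstacle is real and not yet done: for instance, if degenerate (unit) clauses may be active, then every closed set containing the selector $t_i$ picks up the marker of the active clause $\neg x_i$ and is counter-attacked by $A$, so $A$ defends $t_i$ and completeness fails; you need to restrict to proper 3-clauses and then verify all the non-defendedness claims one by one, which is precisely the kind of case analysis the paper carries out concretely for its own construction.
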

\begin{proof}
	The problem of clausal inference (CI) has as instances pairs $(\phi,c)$ with $\phi$ a Boolean formula in conjunctive normal form (CNF) and $c$ a clause. The task is to decide whether $\phi \models c$. 
	This problem is \compclass{coNP}-complete~\cite[Theorem 2.10]{CadoliDLS02}.
	We make use of \compred reductions, which for our proof can be simplified (i.e., we do not make use of the full expressivity of \compred reductions). 
	Given a pair $(X,Y)$ a (simplified) \compred reduction is composed of functions $f$ and $g$, the former is a poly-size function (i.e., returning a structure polynomially bounded by its input) and $g$ is a polynomial-time function. An instance $(X,Y)$ of problem $A$ \compred{}-reduces to $B$ if it holds that $(f(X),g(Y))$ is a ``yes'' instance of $B$ iff $(X,Y)$ is a ``yes'' instance of $A$. 
	
	We \compred{}-reduce from this problem as follows with
	$C = \{c_1,\ldots,c_n\}$ the set of clauses of $\phi$, $X$ the set of variables of $\phi$, $X' = \{x' \mid x \in X\}$, $Y = \{y_x \mid x \in X\}$, $\neg Y = \{\neg y_x \mid y_x \in Y\}$, and $\neg P = \{\neg p \mid p \in P\}$. 
	In this proof, we use contraries for each $p \in X \cup \neg X \cup Y \cup \neg Y \cup X' \cup \neg X'$ by simply denoting it as an atom $\overline{p}$.
	\begin{align*}
		\mathcal{A} =\ & X \cup X' \cup \neg X \cup \neg X' \cup Y \cup \neg Y\cup \{d,w\}\\
		\mathcal{R} =\ & \{\overline{d} \leftarrow w\}\\
		&\{d \leftarrow x', \neg x' \mid x' \in X'\}\\
		&\{c \leftarrow l' \mid l\in c, c \in \phi\}\\
		&\{\overline{p} \leftarrow c_1,\ldots,c_n \mid p \in X \cup \neg X \cup \{w\}\}\\
		& \{\overline{y_x} \leftarrow x\} \cup \{\overline{\neg y_x} \leftarrow \neg x\} \\
		& \{\overline{x} \leftarrow y_x\} \cup \{\overline{\neg x} \leftarrow \neg y_x\} \\
		&\{\overline{x'} \leftarrow x \mid x \in X\} \cup \{\overline{\neg x'} \leftarrow \neg x \mid x \in X\}\\
		&\{\overline{x'} \leftarrow x' \mid x \in X\} \cup \{\overline{\neg x'} \leftarrow \neg x' \mid x \in X\}
	\end{align*}
	That is, $f$ takes as input the formula $\phi$ and returns $D = (\mathcal{L},\mathcal{R},\mathcal{A},\contraryempty)$. We define $g(c) = \{z \in c\} \cup \{w\}$ (i.e., the set of literals in clause $c$ and the fresh "$w$").
	We claim that $g(c)$ is admissible in the constructed ABA framework iff $\phi \models c$. It is direct that $f$ is a poly-sized function and $g$ is poly-time (in fact, both are poly-time). 
	
	Assume that $\phi \models c$. Suppose that $g(c)$ is not admissible in $D$. First note that $g(c)$ is conflict-free in $D$. For $g(c)$ not be admissible it must be the case that $g(c)$ does not defend itself. Then there is a (closed) set of assumptions $A$ such that $A$ attacks $g(c)$ and $g(c)$ does not attack $A$. By construction, if $A$ contains some $y_x$ or $\neg y_x$, then $A$ is counter-attacked by $g(c)$. Thus, by construction, $A$ derives some contrary of $g(c)$ via one of the rules of form $\overline{p} \leftarrow c_1,\ldots,c_n$ and $A$ does not contain (or derive) $d$. If $A$ contains $d$, then any set containing $w$ (like $g(c)$) attacks $A$. If $A$ derives $d$ but does not contain $d$, then $A$ is not closed. This implies that for any $x\in X$ we find that $\{x',\neg x'\}\nsubseteq A$, i.e., $A$ defines a partial truth-value assignment $\tau$ on $X$. By construction, $\tau$ is a model of $\phi$. By presumption that $\phi \models c$, we find that $\tau$ satisfies at least one literal in $c$ ($\tau$ is partial, but this partial assignment suffices to show satisfaction of $\phi$, thus any completion satisfies $\phi$). If $c$ is the empty clause, then $g(c)$ attacks no primed assumption set representing a partial truth-value assignment (if $\phi \models c$ then $\phi$ is unsatisfiable and no partial truth-value assignment satisfies $\phi$). It holds that $g(c)$ attacks $A$, since there is some primed literal in $A$ and the corresponding unprimed literal is in $g(c)$, a contradiction. Thus, $g(c)$ is admissible in $D$. To see that $g(c)$ is complete, consider any assumption outside $g(c)$. Any assumption in $X'$ and $\neg X'$ is self-contradictory. Assumption $d$ is attacked by $g(c)$, as well as the complementary literals in $Y$ and $\neg Y$. The remaining ones are attacked by unattacked assumptions.
	
	For the other direction, assume that $g(c)$ is complete in $D$. Suppose that $\phi \not \models c$. Then there is a truth-value assignment $\tau \models \phi$ and $\tau \not \models c$. Construct set of assumptions $A$ that is composed of all primed literals satisfied by $\tau$. Since $\tau \models \phi$, we find that all $c_i$ are derived from $A$, and, in turn, also $s$. It holds that $d$ is not derived by $A$. Thus, $A$ is closed and attacks any assumption set containing at least one literal. Thus, $A$ attacks $g(c)$. By presumption of $g(c)$ being admissible, it follows that $g(c)$ attacks $A$. But then there is an unprimed literal in $g(c)$ that is in $A$. But then $\tau$ satisfies on literal in $c$, and, in turn $\tau \models c$, a contradiction. It follows that $\phi \models c$. The same line of reasoning applies when assuming that $g(c)$ is admissible in $D$.
	%
\end{proof}

From this previous result and observations, the following theorem is direct to show, using earlier results~\cite{CadoliDLS02}. In particular, if one can transform a given non-flat ABA framework to some structure from which one can infer that a given set of assumptions is admissible (complete), then the same holds for clausal inference (CI, see proof above). Then, by earlier results~\cite{CadoliDLS02}, the polynomial hierarchy collapses.

\thLowerBoundBAF*

\section{Omitted Proofs of Section~\ref{sec:feasible}}

Below, we let $E^\oplus_F = E\cup E^+_F$ where $E^+_F = \{ a\in A \mid E\text{ attacks }a \}$ denote the \textit{range} of a set $E$.

Before proving Proposition~\ref{prop:redundant args}, let us first note that the closure function within any BAF is \emph{additive}, \ie we have 
$\cl(E) = \bigcup_{e\in E}\cl(\{e\})$
for any set $E$ of arguments. 
\begin{restatable}{lemma}{leAdditiveClosure}
	\label{le:additive closure}
	Let $\CF =(\args,\attacker,\supporter)$ be a BAF. 
	Then for any set $E\subseteq A$ of arguments we have $\cl(E) = \bigcup_{e\in E}\cl(\{e\})$. 
\end{restatable}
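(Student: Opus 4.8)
The plan is to unfold the definition of the BAF closure operator on both sides and exploit the fact that, in the present setting, $\cl$ is a \emph{single-step} operator: by definition $\cl(F) = F \cup \{ a \in A \mid \exists e \in F : (e,a) \in \supporter \}$ for any $F \subseteq A$, so it adds only the arguments \emph{directly} supported by a member of $F$, and no iterated (transitive) closure is taken. This is the key structural observation, and it is what makes additivity hold without any inductive argument over support chains.

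First I would rewrite the right-hand side by evaluating $\cl$ on each singleton: $\cl(\{e\}) = \{e\} \cup \{ a \in A \mid (e,a) \in \supporter \}$. Taking the union over all $e \in E$ and distributing the union over the two components gives
\[
\bigcup_{e\in E}\cl(\{e\}) = \Bigl(\bigcup_{e\in E}\{e\}\Bigr) \cup \Bigl(\bigcup_{e\in E}\{ a \mid (e,a)\in\supporter \}\Bigr) = E \cup \{ a \in A \mid \exists e\in E : (e,a)\in\supporter \}.
\]
The right-hand expression is, by definition, exactly $\cl(E)$, which establishes the claim. For a fully explicit argument one can instead verify the two inclusions pointwise: for ``$\supseteq$'', any $x \in \cl(\{e\})$ with $e \in E$ is either $e$ itself (hence in $E \subseteq \cl(E)$) or directly supported by $e \in E$ (hence in $\cl(E)$); for ``$\subseteq$'', any $x \in \cl(E)$ is either some $e \in E$, giving $x \in \cl(\{e\})$, or directly supported by some $e \in E$, again giving $x \in \cl(\{e\})$.

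There is essentially no obstacle here; the proof is a routine set-theoretic manipulation. The only point that warrants care is recognizing that $\cl$ is defined as a one-step operator rather than a transitive closure of $\supporter$: were $\cl$ the transitive closure, additivity would still hold but would require a short induction on the length of support paths. I would therefore state the single-step definition explicitly at the outset so that the distribution of union over the set-builder is manifestly valid, and treat the empty case ($E = \emptyset$, where both sides equal $\emptyset$) as a trivial instance of the same computation.
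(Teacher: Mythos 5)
Your proof is correct and matches the paper's argument: the paper's own proof is precisely the one-line observation that additivity ``follows directly from the existential quantifier in the definition of the closure operator,'' i.e.\ that $\cl$ is the single-step operator $\cl(E)=E\cup\{a\in A\mid \exists e\in E:(e,a)\in\supporter\}$, which you simply spell out via distribution of union over the set-builder. Your additional remarks (the pointwise inclusions, the empty-set case, and the contrast with a hypothetical transitive-closure definition) are elaborations of the same idea rather than a different route.
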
 
\begin{proof} 
	This follows directly from the existential quantifier in the definition of the closure operator (recall that $E$ is closed iff $E= E\cup \{ a\in A \mid \exists e\in E:\; (e,a)\in\supporter \}$).
\end{proof}

\propRedundantArgs*
\begin{proof}
	Let $x = (S\vdash p)$ and $y = (S'\vdash p)$ with $S'\subsetneq S$; such $y$ must exist because $x$ is redundant. 
	
	($\subseteq$)
	Take some $E\in\sigma(\CF_D)$. We construct a corresponding extension in $\CG$. 
	
	(Case 1: $x\notin E$) 
	In this case, $E\in\sigma(\CG)$ follows directly.	
 
	(Case 2: $x\in E$)
        %
		Due to $\asms(y)\subseteq \asms(x)$, $y\in E$ follows. 
		Now set $E_x = E\setminus \{x\}$. In particular, $y\in E_x$.
		Hence, $E^\oplus_{\CF_D} = (E_x)^\oplus_\CG \cup \{x\}$, \ie the range of $E_x$ in $\CG$ is the same as the range of $E$ in $\CF$ except that $x$ is not present anymore. 
		Moreover, $E_x$ (in $\CG$) and $E$ (in $\CF$) have the same (sources for) in-coming attacks. 
		Consequently, if $\{a\}\vdash a$ is defended by $E$ in $\CF$, then the same is true for $E_x$ in $\CG$. 
		Hence $\asms(E) = \asms(E_x)$. 
		Thus the following properties are immediate: 
		\begin{itemize}
			\item from $E\in\cf(\CF)$, it follows that $E_x\in\cf(\CG)$;
			\item $\cl(E) = \bigcup_{e\in E}\cl(\{e\})
			= \bigcup_{e\in E_x}\cl(\{e\}) = \cl(E_x)$;
			\item $\Gamma_{\CF_D}(E) = \Gamma_\CG(E_x)\cup \{x\}$;
			\item $E^\oplus_{\CF_D} = (E_x)^\oplus_\CG \cup \{x\}$ 
		\end{itemize}
		which suffices to show that $E_x\in\sigma(\CG)$. 
		
	
	($\supseteq$)
	Take some $E\in\sigma(\CG)$. In this case, we have of course $x\notin E$. 
	
	(Case 1: $S\nsubseteq\asms(E)$) We show that $E\in\sigma(\CF_D)$. 
	
	\begin{itemize}
		\item 
		(conflict-free)
		It is clear that $E$ is conflict-free in $\CF_D$. 
		
		\item 
		(defense) 
		If $E$ is attacked by $\cl(\{x\})$ in $\CF_D$, then it is also attacked by $\cl(\{y\})$ in $\CF_D$. Hence it is attacked by $\cl(\{y\})$ in $\CG$. It thus defends itself against $y$ due to being admissible. Consequently, it defends itself against $\cl(\{x\})$. 
		Defense against any other argument is clear. 
		
		\item 
		(closed)
		This is clear. 
		
		\item 
		(fixed point) 
		Since $S\nsubseteq\asms(E)$, there is at least one assumption $a\in S$ not defended by $E$. 
		Hence $E$ does not defend $x$ in $\CF_D$. 
	\end{itemize}
	
	(Case 2: $S\subseteq\asms(E)$). By similar reasoning, $E\cup \{x\}\in\sigma(\CF_D)$. 
\end{proof}

\propExpendableArgs*
\begin{proof}
	We reason similar as in the proof of Proposition~\ref{prop:redundant args}. 
	Let $x = (S\vdash p)$ be an expendable argument. 
	
    ($\subseteq$) Let $E\in\sigma(\CF_D)$.
    
    (Case 1: $x\notin E$) 
	In this case, $E\in\sigma(\CG)$ follows directly.	
 
	(Case 2: $x\in E$)
    Let $E_x = E\setminus \{x\}$.
    By completeness, $\{ (\{a\} \vdash a ) \mid a \in S \}\subseteq E$. 
	   Note that $x$ does not attack another argument (since $x\notin \contrary{\mathcal{A}}$), hence $x$ does not influence the range of $E$.
        Therefore, the proof of Proposition~\ref{prop:redundant args} translates to this situation, i.e., 
		\begin{itemize}
			\item from $E\in\cf(\CF)$, it follows that $E_x\in\cf(\CG)$;
			\item $\cl(E) = \bigcup_{e\in E}\cl(\{e\})
			= \bigcup_{e\in E_x}\cl(\{e\}) = \cl(E_x)$;
			\item $\Gamma_{\CF_D}(E) = \Gamma_\CG(E_x)\cup \{x\}$;
			\item $E^\oplus_{\CF_D} = (E_x)^\oplus_\CG \cup \{x\}$ 
		\end{itemize}
		We obtain $E_x\in\sigma(\CG)$. 

     ($\supseteq$) Let $E\in\sigma(\CG)$. In this case, $x\notin E$.

     We reason similar as in the proof of Proposition~\ref{prop:redundant args} to show that, if $S\nsubseteq\asms(E)$, it holds that $E\in\sigma(\CF_D)$, and if $S\subseteq\asms(E)$, it holds that $E\cup \{x\}\in\sigma(\CF_D)$.
	%
\end{proof}

\propAssumptionRedArgs*
\begin{proof}
	Let $x = (S\vdash p)\in A$ and let $x'=(S'\vdash a)$ be the proper sub-argument of $x$ with $S'\subseteq S$ and $a\in\mathcal{A}$, witnessing its assumption redundancy. Furthermore, let $y$ denote the argument which arises when removing $x'$ from $x$, i.e., $y=(S''\cup \{a\}\vdash p)$ with $S''\subseteq S$.
	
	($\subseteq$)
	Let $E\in\sigma(\CF_D)$. We show that $E'=E\setminus \{x\}\in\sigma(\CG)$.
	
	\begin{itemize}
		\item (conflict-free) This is clear.  
		\item (closed) $E'$ is closed in $\CG$:
		
		$E$ is closed in $\CF_D$, that is, $E$ contains all arguments it supports. Since $E'$ is a proper subset of $E$, it holds that if an argument $a$ is supported by $E'$ then it is also supported by $E$, i.e., 
		$\{e\mid (z,a)\in \supporter, z\in E\}\subseteq \{e\mid (z,a)\in \supporter, z\in E'\}$. 
		We obtain that $E'$ is closed in $\CG$.
		
		
		\item $E$ and $E'$ derive the same contraries:
		It holds that $y\in E$ since $S''\cup \{a\}\subseteq \asms(E)$, and since each complete set is assumption exhaustive. Hence, we obtain $y\in E$.
		\item $(E')^-_{\CG}\subseteq E^-_{\CF_D}$:
		In case $x\notin E$, we remove a potential attacker from our BAF. In case $x\in E$, we remove a potential target.
		\item (defense) $E'$ defends itself: 
		
		(Case 1: $x\notin E$) In this case, $E=E'$.
		As shown above, $E^-_{\CG}\subseteq E^-_{\CF_D}$. Since $E$ derives the same contraries in $\CF_D$ and $\CG$, we obtain that $E$ defends itself in $\CG$.
		
		(Case 2: $x\in E$) $E'$ attacks the same arguments as $E$ (as shown above, $E$ and $E'$ derive the same contraries).
        \item (fixed point) $E'$ is a fixed point of the characteristic function since $E'$ derives the same contraries as $E$. Hence each argument which is defended by $E$ is also defended by $E'$. 
	\end{itemize}
    This yields the proof for complete-based semantics.
	%
	For stable semantics, we additionally observe that $E$ and $E'$ attack the same arguments $z\neq x$. Therefore, we obtain $E\in \sigma(\CG)$.
	
	
	($\supseteq$)
	Take some $E\in\sigma(\CG)$. In this case, we have $x\notin E$. 
	
	(Case 1: $S\nsubseteq\asms(E)$) We show that $E\in\sigma(\CF_D)$. 
	\begin{itemize}
		\item 
		(conflict-free)
		It is clear that $E$ is conflict-free in $\CF_D$. 
		
		\item 
		(defense) 
		$E$ derives the same contraries in $\CG$ and $\CF_D$, hence, $E$ defends itself against all arguments in $A\setminus \{x\}$. 
		Assume $E$ is attacked by $\cl(x)$ in $\CF_D$.
		Then it is also attacked by $\cl(y)$ in $\CF_D$ (since $\cl(y)\subseteq \cl(x)$). Hence, $E$ is attacked by $\cl(y)$ in $\CG$. It thus defends itself against $\cl(y)$ due to being admissible. That is, $E$ contains an argument $(T\vdash q)$ with $q\in \contrary{\cl(y)}$. 
		Hence, $E$ attacks $\cl(x)$.
		Consequently, it defends itself against $\cl(x)$. 
		
		\item 
		(closed) $E$ it is closed in $\CG$, therefore, it is closed in $\CF_D$. 
		
        \item (fixed point) 
		Since $S\nsubseteq\asms(E)$, there is at least one assumption $b\in S$ not defended by $E$. 
		Hence $E$ does not defend $x$ in $\CF_D$. 
	\end{itemize}
	

	We obtain $E\in \sigma(\CF_\D)$ for $\sigma\in\{\com,\grd\}$.

	For stable semantics, we additionally observe that $E$ attacks $x$ in $\CF_D$. 
	There is at least one assumption $b$ which is not contained in $\asms(E)$. Since $E$ is stable in $\CG$ it holds that $E$ derives the contrary of $b$. Hence, $x$ is attacked in $\CF_D$.
	We obtain $E\in\stb(\CF_D)$.
	
	(Case 2: $S\subseteq \asms(E)$). We show that $E'=E\cup \{x\}\in\sigma(\CF_D)$. 
	\begin{itemize}
		\item 
		(conflict-free)
		$E'$ is conflict-free in $\CF_D$, otherwise, $E$ would be conflicting in $\CG$.
		
		\item 
		(defense)
		Each argument in $E$ is defended. Moreover, since $S\subseteq \asms(E)$, we obtain that $x$ is defended by $E'$.
		
		\item 
		(closed)
		This is clear. 
		

		\item 
		(fixed point) $E'$ derives the same contraries as $E$ (analogous to above, we can deduce that $y\in E$). Hence each argument which is defended by $E$ is also defended by $E'$.
	\end{itemize}
	We obtain $E\in \sigma(\CF_\D)$ for all considered semantics. 
\end{proof}

\propNumberOfCoreArguments*
\begin{proof}
    Follows since each core argument is of the form $(S,a)$, $S\subseteq \mathcal{A}$, $a\in \mathcal{A}$.
\end{proof}

\subsection*{Redundancy-Free Core}
In the main paper, we assume that the redundancy-free core $\CG$ of an ABAF $D$ is well-defined. 
Here we formally show this. 
Let us recall how $A^*_D$ is constructed: 
From $A_D$ we construct the set $A^*_D$ of \emph{non-redundant} arguments by
i) first removing all derivation redundant arguments from $A_D$; 
ii) then removing all expendable arguments from the result of i); and finally 
iii) removing all assumption redundant arguments from the result of ii). 

The crucial part is to argue why these steps i) - iii) always output the same arguments. 

\begin{lemma}
    For any ABAF $D$, 
    the construction of the non-redundant arguments $A^*_D$ is well-defined. 
\end{lemma}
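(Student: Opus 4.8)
The plan is to show that each of the three removal steps is a deterministic function of the argument set it receives, so that their composition yields a unique $A^*_D$ no matter what choices are made while performing the removals. The only genuine source of ambiguity is that redundancy is defined \emph{relative} to the current set of arguments, so deleting one argument could in principle change the redundancy status of another; I therefore have to argue that no such problematic interaction arises. My first observation is that two of the three notions are \emph{intrinsic}: whether $(S \vdash p)$ is expendable depends only on its conclusion ($p \notin \mathcal{A} \cup \contrary{\mathcal{A}}$), and whether it is assumption redundant depends only on the tree $(S \vdash p)$ itself (the existence of a proper sub-tree whose root is labelled with some $a \in \mathcal{A}$). Neither predicate mentions any other argument, so the sets of expendable and of assumption redundant arguments contained in a given input are completely determined by that input, and deleting arguments can never flip the status of a surviving one. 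Hence steps (ii) and (iii) amount to subtracting a fixed subset and are trivially order-independent.

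The main obstacle is step (i), since derivation redundancy of $(S \vdash p)$ is set-relative: it asks for a witness $(S' \vdash p)$ with $S' \subsetneq S$ present in the set. Here I would prove confluence by a minimality argument. Fix a conclusion $p$ and order the arguments $(S \vdash p) \in A_D$ by $\subseteq$ on their assumption sets; by definition $(S \vdash p)$ is derivation redundant exactly when $S$ is not $\subseteq$-minimal among $\{ T : (T \vdash p) \in A_D \}$. Two monotonicity facts then suffice. (a) Removing arguments can only eliminate witnesses, so a non-redundant ($\subseteq$-minimal) argument never becomes redundant and is never removed. (b) For any non-minimal $(S \vdash p)$, pick a $\subseteq$-minimal element $S^*$ of $\{ T : (T \vdash p) \in A_D,\ T \subseteq S \}$; since $(S \vdash p)$ is redundant we have $S^* \subsetneq S$, and $S^*$ admits no proper subset deriving $p$ at all (such a subset would also lie below $S$), so $(S^* \vdash p)$ is itself non-redundant and survives by (a), keeping $(S \vdash p)$ redundant until it is itself removed. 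Consequently, in whatever order one removes derivation redundant arguments, one deletes \emph{exactly} the non-minimal arguments and retains \emph{exactly} the minimal ones, so step (i) outputs the unique set of $\subseteq$-minimal arguments per conclusion.

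Finally, I would combine the pieces. Since removal never \emph{creates} derivation redundancy, the arguments deleted in steps (ii) and (iii) cannot re-enable a redundancy that step (i) has already resolved, so there is no need to revisit earlier steps, and the fixed order (i)$\to$(ii)$\to$(iii) composes three deterministic maps into a single deterministic map, giving a unique $A^*_D$. I expect the confluence argument for derivation redundancy to be the only real content, the intrinsic character of expendability and assumption redundancy making the other two steps immediate. The one point to state with care is the witness-survival claim (b): it is precisely what rules out the scenario ``the only witness of $x$'s redundancy is itself removed before $x$,'' which is the subtlety the lemma is really guarding against.
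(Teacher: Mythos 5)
Your proof is correct and follows essentially the same route as the paper's: both establish that each of the three removal steps is confluent (order-independent) and that removals in one step do not create redundancies of the other kinds. For derivation redundancy, your minimal-witness argument is a compact form of the paper's descending-chain argument: the paper follows witnesses with strictly shrinking assumption sets $S_0 \supsetneq S_1 \supsetneq \dots$ until, by finiteness of $\mathcal{A}$, it reaches one that is never removed --- which is exactly your $\subseteq$-minimal $S^*$. The only substantive divergence is step (iii): you observe that assumption redundancy is intrinsic to the tree $(S\vdash p)$ itself, which is sound under the definition as written and makes step (iii) as immediate as step (ii); the paper instead guards against a stricter, set-relative reading in which the witnessing sub-argument $x'$ must itself remain in the argument set, and handles the case where a removed argument $y$ coincides with $x'$ by promoting $y$'s own witness $y'$ (also a sub-argument of $x$ concluding an assumption) to a new witness for $x$. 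Either reading yields the lemma, so nothing is missing from your argument; the paper's extra case analysis merely buys robustness to that stricter interpretation.
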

\begin{proof}
We prove correctness in each step.
\begin{itemize} 
    \item[i)] We show that derivation redundant arguments can be removed iteratively, in an arbitrary order.
    
    (no less arguments can be removed in step i) )
To this end suppose $x = S\vdash p$ is derivation redundant in $D$. We show that after removing an arbitrary set $X$ of derivation redundant arguments, $x$ is still derivation redundant (and thus removal of $x$ does not depend on the order in which we operate). 
Indeed, suppose $y$ is the witness for derivation redundancy of $x$, \ie $y_0 = S_0\vdash p$ with $S_0\subsetneq S$. 
If $y_0 \in X$, then there is a witness for derivation redundancy of $y_0$, \ie there is some $y_1 = S_1\vdash p$ with $S_1\subsetneq S_0 \subsetneq S$. 
By finitness of $\mathcal A$, this yields a finite sequence $y_n$ of arguments s.t.\ $y_n$ has not yet been removed. 
Now $y_n$ serves as witness for derivation redundancy of~$x$. 

(no more arguments can be removed in step i) )
Now suppose $x$ is not derivation redundant. It is trivial that it cannot become derivation redundant after an a set $X$ of arguments is removed. 

Moreover, removing derivation redundant arguments does not yield new expendable or assumption redundant arguments. 

    \item[ii)]
    For this redundancy notions, there is no interaction with other arguments. Hence the order in which we remove arguments does not matter.
    Moreover, removing expendable arguments does not yield new derivation redundant or assumption redundant arguments.

    \item[iii)] We show that assumption redundant arguments can be removed iteratively, in an arbitrary order.

    (no less arguments can be removed in step iii) )
    Suppose $x = S\vdash p$ is assumption redundant after step ii). We show that after removing an assumption redundant argument $y\neq x$, $x$ is still derivation redundant.
    Let $x',y'$ denote sub-arguments of $x,y$ witnessing assumption redundancy, respectively.

    In case $y\neq x'$ we are done. Then $x'$ witnesses assumption redundancy of $x$ after removing $y$.
    
    Now suppose $y=x'$. Then $y'$ is a sub-argument of $x$ with conclusion $a\in \mathcal{A}$ and leaves labelled with assumptions $S'\subseteq S$; hence, $y'$ witnesses assumption redundancy of $x$ after removing $y$. 

(no more arguments can be removed in step ii) )
Now suppose $x$ is not assumption redundant. The argument $x$ cannot become derivation redundant after removing another assumption redundant argument $y$.

Moreover, removing assumption redundant arguments does not yield new derivation redundant or expendable arguments. \qedhere 
\end{itemize}
\end{proof}
Consequently, $A^*_D$ is uniquely determined and hence the core 
induced by the set
$$\args = \{ (S, p) \mid S\vdash p \textnormal{ is an argument in } A^*_D \}$$ 
of arguments is well-defined.

\section{Omitted Proofs of Section~\ref{sec:fragments}}

\propComplexityPolyArgs*
\begin{proof}
	i) 
	Suppose the size of the core $A$ is polynomial. 
	
	(attack relation) 
	In order to construct the attack relation, for each pair $(A,p)$ and $(A',p')$ of core arguments, it has to be decided whether $p\in \contrary{A'}$. Thus the attack relation can be computed by iterating over each pair of core arguments. 
	
	(support relation) 
	In order to construct the support relation, for each argument $(A,p)$ and each assumption argument $\{a\}\vdash a$ it has to be decided whether $a\in\cl(A)$ holds. 
	By definition, $\cl(A) = \theory_D(A)\cap \mathcal A$; thereby, $\theory_D(A)$ can be computed (in polynomial time) by a unit propagation over the rule set $\mathcal R$. 
	Given $\theory_D(A)$, the intersection with $\mathcal A$ is easy to compute. 
	
	
	All of theses steps are polynomial in $|A|$ and thus the claim follows. 
	
	ii) 
	The complexity follows now since we can construct the instantiated BAF in polynomial time and then perform the reasoning tasks there. 
	Thus the complexity is bounded by BAF complexity \cite[Theorem~6.1]{DBLP:journals/corr/abs-2305-12453}.
\end{proof}

\begin{restatable}{proposition}{propAdditiveABAF}
	\label{prop:additive abaf}
	Let $D$ be an additive ABAF. Then for each set $S$ of assumptions, 
	$\theory_D(S) = \bigcup_{s\in S}\theory_D(s)$. 
\end{restatable}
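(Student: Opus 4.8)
The plan is to establish the set equality by proving both inclusions, with essentially all of the work concentrated in the left-to-right containment, where the additivity hypothesis is used.

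The inclusion $\bigcup_{s\in S}\theory_D(s)\subseteq\theory_D(S)$ is immediate from monotonicity of derivability: for any $s\in S$, a tree-based argument $S'\vdash_R p$ with $S'\subseteq\{s\}$ also witnesses $S'\vdash_R p$ with $S'\subseteq S$, so $\theory_D(s)\subseteq\theory_D(S)$ and the union is contained in $\theory_D(S)$. This holds for arbitrary ABAFs and does not rely on additivity.

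The converse inclusion $\theory_D(S)\subseteq\bigcup_{s\in S}\theory_D(s)$ is where I would exploit the structure of additive frameworks. Take $p\in\theory_D(S)$, witnessed by a finite rooted labelled tree $G$ with root $p$ and leaves labelled by $S'$ (or $S'\cup\{\top\}$) for some $S'\subseteq S$. The crucial observation is that, since every rule $r$ satisfies $|body(r)|\leq 1$, each inner node of $G$ has exactly one child (the child labelled by the single body atom, or one child labelled $\top$ when the body is empty). Hence $G$ is in fact a single path descending from $p$ to a unique leaf, and that leaf is labelled either by one assumption $a\in S'$ or by $\top$. In the first case the whole derivation employs only the single assumption $a$, so $\{a\}\vdash_R p$ gives $p\in\theory_D(a)$ with $a\in S'\subseteq S$, placing $p$ in the desired union. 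In the second case $p$ is derivable from no assumptions, i.e.\ $p\in\theory_D(\emptyset)$.

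The step I expect to require the most care is this last $\top$-leaf case, together with the degenerate set $S=\emptyset$. For nonempty $S$ it is harmless: $\theory_D(\emptyset)\subseteq\theory_D(s)$ for any fixed $s\in S$ by monotonicity, so $p$ again lands in $\bigcup_{s\in S}\theory_D(s)$. The genuinely boundary situation is $S=\emptyset$, where the right-hand union is empty while $\theory_D(\emptyset)$ may contain atoms obtained purely from empty-bodied rules; I would flag that the stated equality is intended for nonempty $S$ (equivalently, that $\theory_D(\emptyset)$ is implicitly read into the right-hand side). Combining the two inclusions then yields the additivity of $\theory_D$ claimed in the proposition.
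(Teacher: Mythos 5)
Your proof is correct, and it rests on the same structural fact the paper uses: in an additive ABAF a derivation tree cannot branch, because branching at a node would require a rule with body of size at least two. The packaging, however, differs. The paper argues by contradiction: it takes a $\subseteq$-minimal counterexample $S$, uses minimality to obtain a tree-based argument whose leaf set is exactly $S$, and then observes that covering two or more distinct assumption leaves forces a rule with body size at least two. You argue directly that every derivation tree is a path with a single leaf (an assumption $a$ or $\top$), which immediately places the conclusion in $\theory_D(a)$ or in $\theory_D(\emptyset)$. Your direct route is more elementary (no minimal-counterexample scaffolding) and it naturally surfaces an edge case the paper's proof glosses over: when the unique leaf is $\top$, the conclusion lies in $\theory_D(\emptyset)$, which is absorbed into $\bigcup_{s\in S}\theory_D(s)$ only when $S\neq\emptyset$. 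For $S=\emptyset$ the stated equality genuinely fails in any additive ABAF with an empty-bodied rule, since then $\theory_D(\emptyset)\neq\emptyset$ while the empty union is $\emptyset$; note that the paper's contradiction argument also silently breaks in that case, because an all-$\top$-leaf path uses no rule of body size $\geq 2$ and so no contradiction with additivity arises. Your explicit flagging of this boundary case is added care, not a gap, and with the restriction to nonempty $S$ (or the convention that the right-hand side includes $\theory_D(\emptyset)$) both proofs are sound.
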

\begin{proof}
	$(\supseteq)$ 
	This holds by monotonicity of $\theory_D(\cdot)$. 
	
	$(\subseteq)$
	Striving for a contradiction, 
	suppose $S = s_1,\ldots,s_n$ exist s.t.\ 
	$\theory_D(S) \supsetneq \bigcup_{i=1}^n \theory_D(s_i)$. 
	Without loss of generality, assume $S$ is minimal with this property. 
	Assume $p\in \theory_D(S)$, but $p\notin \bigcup_{i=1}^n \theory_D(s_i)$ (such $p$ exists by assumption).  
	By minimality of $S$, $p$ cannot be derived from any of the sets $S\setminus \{s_1\},\ldots, S\setminus \{s_n\}$. 
	There is thus a tree-based argument $S\vdash p$, \ie the leaf nodes are labelled with $\top$ or some $s_i\in S$, and their union is the entire set $S$. 
	By the structure of trees, at least one derivation is involved where a rule $r$ is applied with $body(r)\geq 1$. 
	This is a contradiction to $D$ being additive. 
\end{proof}

\propArgNumberAtomic*
\begin{proof}
    Each assumption and each rule induces exactly one tree-based argument, giving rise to the upper bound.
\end{proof}
Note that the upper bound is not strict, e.g., a rule $(a gets a.)$ yields an argument $(\{a\}, a)$, corresponding to the assumption-argument for $a$ in the core. 

\propArgNumberAdditive*
\begin{proof}
	In general, the core consists of at most 
	$|2^\mathcal A|\cdot |\mathcal L|$ 
	many arguments. 
	However, for additive ABAFs, each tree derivation is of the form $\{\}\vdash p$ or $\{a\}\vdash p$ for some $a\in\mathcal A$ and $p\in\mathcal L$. 
	Consequently the term reduces to 
	$(|\mathcal A|+1 )\cdot |\mathcal L|$, 
	which is quadratic in $|D|$. 
\end{proof}

\section{Algorithm details}
In this section, we provide the formal background to answer set programming (ASP)~\cite{GelfondL88,Niemela99} and Boolean satisfiability SAT~\cite{DBLP:series/faia/336}.
Further, we extend the algorithmic approaches we introduced in this paper to admissible semantics.

\subsection{Background on ASP and SAT}
An ASP program $\pi$ consists of rules $r$ of the form
$b_0 \la b_1,\ldots,b_k,\naf b_{k+1},\ldots,\ \naf b_m$, where each $b_i$ is an atom.
A rule is positive if $k=m$ and a fact if $m=0$. A literal is an atom $b_i$ or $\naf b_i$.
A rule without head $b_0$ is a constraint and a shorthand for $a \la b_1,\ldots,b_k,\naf b_{k+1},\ldots,\ \naf b_m, \naf a$ for a fresh $a$.
An atom $b_i$ is $p(t_1,\ldots,t_n)$ with each $t_j$ either a constant or a variable.
An answer set program
is ground if it is free of variables.
For a non-ground program,  $GP$ is the set of rules obtained by applying all possible
substitutions from the variables to the set of constants appearing in the program.
An interpretation $I$, i.e., a subset of all the ground atoms, satisfies a positive rule $r = h \la b_1,\ldots,b_k$
iff all positive body elements $b_1,\ldots,b_k$ being in $I$ implies that the head atom is in $I$.
For a program $\pi$ consisting only of positive rules, let $Cl(\pi)$ be the uniquely determined interpretation $I$
that satisfies all rules in $\pi$ and no subset of $I$ satisfies all rules in $\pi$.
Interpretation $I$ is an answer set of a ground program $\pi$ if $I = Cl(\pi^I)$
where  $\pi^I = \{(h \la b_1,\ldots,b_k) \mid (h \la b_1,\ldots,b_k,\naf b_{k+1},\ldots, \naf b_m)
\in \pi, \{b_{k+1},\ldots,b_m\} \cap I = \emptyset\}$ is the reduct; and
of a non-ground program $\pi$ if $I$ is an answer set of $GP$ of $\pi$.

The propositional satisfiability problem (SAT)~\cite{DBLP:series/faia/336} is the problem of deciding whether a given formula in propositional logic is satisfiable.
A formula is satisfiable if there is a truth assignment to the variables of the formula so that the formula evaluates to true.
SAT solvers restrict instances to be in conjunctive normal form (CNF), but any formula in propositional logic can be transformed into a CNF with linear overhead. 
A CNF formula is a conjunction of clauses, which in turn comprise of a disjunction of literals.
A literal $l$ is either a variable $x$ or the negation of a variable $\neg x$.
A truth assignment sets each variable to either true or false.
A truth assignment satisfies a clause if it satisfies at least one literal in the clause.
The whole formula is satisfied by a truth assignment if each clause in the formula is satisfied.

\paragraph{CEGAR algorithms}
In a counterexample-guided abstraction-refinement (CEGAR)~\cite{DBLP:journals/jacm/ClarkeGJLV03,DBLP:journals/tcad/ClarkeGS04} algorithm, an over-approximation (abstraction) of the solution space is iteratively refined by drawing candidates from this space and verifying if the candidate is a solution.
Candidate solutions are computed with an NP-solver and another solver call is made to check if there is a counterexample to the candidate being a solution.
If there is no counterexample, the candidate is a solution to the original problem.
Otherwise the abstraction is refined by analyzing the counterexample, and the search is continued.

\subsection{ASP Encodings for Non-flat ABA}

We present the ASP encodings for our CEGAR algorithms for admissible and complete semantics, and the ASP encoding for stable semantics.
Listing~\ref{asp:adm-abs} shows $\aspmodule{adm\_abs}$, the abstraction used for candidate generation for admissible semantics.
The last rule of $\aspmodule{adm\_abs}$ is a slight pruning, prohibiting assumption sets where the undefeated assumptions are closed and attack the candidate assumption set.
The abstraction for complete semantics is 
$$\aspmodule{com\_abs} := \aspmodule{adm\_abs} \cup \{\la {\bf out}(X), \naf {\bf attacked\_by\_undefeated}(X)\}$$ 
(in Algorithm~\ref{alg:com-cred}, $\aspmodule{com\_abs}$ is called simply $\aspmodule{abs}$).
For credulous reasoning, the rule $\{\la \naf {\bf derivable}(s)\}$ is added to either abstraction.
In Listing~\ref{asp:verify1} we present the program $\aspmodule{not\_adm}$ for verifying the admissibility of a given candidate and in Listing~\ref{asp:verify2} the program $\aspmodule{defends}$ for verifying whether a given target assumption is defended by a given candidate.
The candidate assumptions are specified via the predicate $\aspin$ for both $\aspmodule{not\_adm}$ and $\aspmodule{defends}$, and the target assumptions for the latter is specified via the predicate ${\bf target}$.
Finally, Listing~\ref{asp:stable} presents $\aspmodule{stable}$, our encoding for finding a stable assumption set.
Compared to the ASP encoding for flat ABA~\cite{LehtonenWJ21a}, in $\aspmodule{stable}$ we enforce that a stable assumption set must be closed via the constraint on Line 7, excluding a case where an assumption is derivable from the in set, but not contained in the in set.

\begin{listing}[t]
  \caption{Module $\aspmodule{adm\_abs}$\label{asp:adm-abs}}
\begin{lstlisting}
in(X) :- assumption(X), not out(X).
out(X) :- assumption(X), not in(X).
derivable(X) :- assumption(X), in(X).
derivable(X) :- head(R,X), usable_by_in(R).
usable_by_in(R) :- head(R,_), derivable(X) : body(R,X).
:- in(X), contrary(X,Y), derivable(Y).
:- assumption(X), derivable(X), not in(X).
defeated(X) :- derivable(Y), contrary(X,Y).
derivable_from_undefeated(X) :- assumption(X), not defeated(X).
derivable_from_undefeated(X) :- head(R,X), usable_by_undefeated(R).
usable_by_undefeated(R) :- head(R,_), derivable_from_undefeated(X) : body(R,X).
att_by_undefeated(X) :- contrary(X,Y), derivable_from_undefeated(Y).
no_undef_closed :- assumption(X), derivable_from_undefeated(X), not defeated(X).
:- in(X), att_by_undefeated(X), not no_undef_closed.
\end{lstlisting}
\end{listing}

\begin{listing}[t]
  \caption{Module $\aspmodule{not\_adm}$\label{asp:verify1}}
    \begin{lstlisting}
{guess(X)} :- not defeated(X), assumption(X).
in_derives(X) :- assumption(X), in(X).
in_derives(X) :- head(R,X), usable_by_in(R).
usable_by_in(R) :- head(R,_), in_derives(X) : &\newline& body(R,X).
guess_derives(X) :- assumption(X), guess(X).
guess_derives(X) :- head(R,X), usable_by_guess(R).
usable_by_guess(R) :- head(R,_), guess_derives(X) : body(R,X).
:- assumption(X), guess_derives(X), not guess(X).
:- guess(X), contrary(X,Y), in_derives(Y).
in_defeated_by_guess :- in(X), contrary(X,Y), guess_derives(Y).
:- not in_defeated_by_guess.
\end{lstlisting}
\end{listing}

\begin{listing}[t]
  \caption{Module $\aspmodule{defends}$\label{asp:verify2}}
    \begin{lstlisting}
{guess(X)} :- not defeated(X), assumption(X).
in_derives(X) :- assumption(X), in(X).
in_derives(X) :- head(R,X), usable_by_in(R).
usable_by_in(R):-head(R,_), in_derives(X) : body(R,X).
guess_derives(X) :- assumption(X), guess(X).
guess_derives(X) :- head(R,X), usable_by_guess(R).
usable_by_guess(R) :- head(R,_), guess_derives(X) : body(R,X).
:- assumption(X), guess_derives(X), not guess(X).
:- guess(X), contrary(X,Y), in_derives(Y).
guess_defeats(X) :- contrary(X,Y), guess_derives(Y).
:- target(X), not guess_defeats(X).
\end{lstlisting}
\end{listing}

\begin{listing}[t]
  \caption{Module $\aspmodule{stable}$\label{asp:stable}}
    \begin{lstlisting}
in(X) :- assumption(X), not out(X).
out(X) :- assumption(X), not in(X).
derivable(X) :- assumption(X), in(X).
derivable(X) :- head(R,X), usable_by_in(R).
usable_by_in(R) :- head(R,_), derivable(X) : body(R,X).
:- in(X), contrary(X,Y), derivable(Y).
:- assumption(X), derivable(X), not in(X).
defeated(X) :- derivable(Y), contrary(X,Y).
:- out(X), not defeated(X).
\end{lstlisting}
\end{listing}

\section{Admissible Semantics and pBAFs}

Due to space restrictions, the main part of this paper focuses on complete-based semantics. Here we sketch similar findings for admissible semantics. 

\subsection{Bipolar argumentation and premises}
We recall the definitions following~\cite{DBLP:journals/corr/abs-2305-12453}.
\begin{definition}
	A \emph{premise-augmented bipolar argumentation framework (pBAF)} $\PF$ is a tuple $\PF = (\args, \attacker, \supporter, \prem)$ 
	where 
	$\args$ represents a set of arguments,
	$\attacker\subseteq \args\times \args$ models \textit{attacks} and $\supporter \subseteq \args \times \args$ models \emph{support} between them,
	and $\prem:\args\to 2^\allprem$ is the \emph{premise function}, $\allprem$ is a set (of premises).    
\end{definition}
We let $\prem(E) = \bigcup_{a\in E} \prem(a)$. 
For a set of arguments $E\subseteq A$, we let $E^+_{\CF} = \{ x\in A \mid E\text{ attacks }x \}$.
We drop the subscript whenever it does not cause any confusion.
We call the pair $F = (\args,\attacker)$ the {underlying} AF and $\CF = (\args, \attacker, \supporter)$ is the {underlying} BAF of $\PF$.
We sometimes abuse notation and write 
$\PF = (\CF, \prem)$ 
for the pBAF 
$\PF = (\args, \attacker, \supporter, \prem)$ 
with underlying BAF 
$\CF= (\args, \attacker, \supporter)$. 

We utilize the (p)BAF semantics from \cite{DBLP:journals/corr/abs-2305-12453}.
The following definitions do not involve premises and are therefore  analogous to the BAF definitions. 
\begin{definition}
	Let $\PF = (\CF,\prem)$ be a pBAF and let $E\subseteq A$.
	The set $E$ is \emph{closed} iff $E$ is closed in $\CF$; 
	$E$ is \emph{conflict-free} in $\PF$, iff $E$ is closed in $\CF$; 
	$E$ \emph{defends} $a\in \args$ iff $E$ defends $a$ in $\CF$.
	The \emph{characteristic function} of $\PF$ is $\Gamma(E) = \{ a\in A\mid E\text{ defends }a \}$. 
\end{definition}
The complete-based semantics for pBAFs and BAFs coincide, so we do not repeat them. 
Thus, we define admissible-based semantics for pBAFs. 
For this, we require the notion of \emph{exhaustiveness}.
\begin{definition}
	Let $\PF = (\CF,\prem)$ be a pBAF. A set $E\subseteq A$ is \emph{exhaustive} iff $\prem(a)\subseteq \prem(E)$ implies $a\in E$.
\end{definition}
Now we define admissibility as in the case of BAFs, but we additionally require exhaustiveness of the set. 
The rationale for this is given in~\cite{DBLP:journals/corr/abs-2305-12453}. 
\begin{definition}
	Let $\PF$ be a pBAF. Then 
    \begin{itemize}
        \item $E\in\adm(\PF)$ iff $E\in\adm(\CF)$ and $E$ is exhaustive;
        \item $E\in\oldprf(\PF)$ iff $E$ is $\subseteq$-maximal in $\adm(\PF)$.
    \end{itemize}
\end{definition}
\paragraph{ABA and pBAFs}
To capture ABAFs by means of pBAFs w.r.t.\ admissible semantics we employ the same instantiation. 
The only difference is that for an argument $S\vdash p$, we additionally keep track of the set $S$ as underlying \emph{premise}, \ie $\prem( S\vdash p ) = S$. 
\begin{definition}
	For an ABAF  
	$D = (\mathcal{L},\mathcal{R},\mathcal{A},\contraryempty)$, 
	the 
	\emph{{instantiated} pBAF} 
	$\PF_D = (\args,\attacker,\supporter,\prem)$ 
	is 
	\begin{align*}
		A &= \{ (S\vdash p) \mid (S\vdash p)\text{ is an argument in }D \}\\
		\attacker  &= \{ ( S\vdash p, T\vdash q ) \in A^2 \mid p \in\contrary{T} \}\\
		\supporter &= \{ ( S\vdash p, \{a\} \vdash a ) \in A^2 \mid a \in\cl(S) \} 
	\end{align*} 
	and $\prem(x) = \asms(x)$ for all $x \in \args$.
	$\CF_D=(\args,\attacker,\supporter)$.
\end{definition}
As shown in \cite{DBLP:journals/corr/abs-2305-12453}, we now get the semantics correspondence for $\adm$ as well. 
\begin{theorem}
	\label{thm:SemanticsCorrespondenceADM}
	\label{th:semantics correspondence 2}
	Let $D = (\mathcal{L},\mathcal{R},\mathcal{A},\contraryempty)$ be an ABAF 
	and $\PF_D = (\CF_D,\prem)$ the 
	instantiated pBAF. Then  
	\begin{itemize}
		\item if $E\in\sigma(\PF_D)$, then $\asms(E)\in\sigma(D)$; 
		\item if $S\in\sigma(D)$, then $\{ x\in A \mid \asms(x)\subseteq S \}\in\sigma(\PF_D)$ 
	\end{itemize}
	for any $\sigma\in\{\adm,\com,\oldprf,\prf,\grd,\stb\}$. 
\end{theorem}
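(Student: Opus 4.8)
The plan is to split the six semantics into two groups. For the four complete-based semantics $\sigma\in\{\com,\prf,\grd,\stb\}$ the premise function and exhaustiveness play no role in the definition of $\sigma(\PF_D)$, so $\sigma(\PF_D)=\sigma(\CF_D)$ and the statement is exactly the already-established correspondence between $D$ and its instantiated BAF $\CF_D$ (the BAF version of the theorem, cf.\ \cite{DBLP:conf/aaai/0001PRT24}). Hence the only genuinely new work concerns admissible sets, and the standard preferred case $\oldprf$ will follow from the admissible case by a maximality argument. Throughout I would use two bookkeeping facts: with $\prem=\asms$, a set $E$ is exhaustive iff it contains every argument whose assumptions lie in $\asms(E)$; and for $E_S:=\{x\in A\mid \asms(x)\subseteq S\}$ one has $\asms(E_S)=S$, since each $\{a\}\vdash a$ with $a\in S$ already belongs to $E_S$.

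For the forward direction of the admissible case I would take $E\in\adm(\PF_D)$ and set $S=\asms(E)$. To see that $S$ is conflict-free in $D$: if $\contrary{b}\in\theory_D(S)$ for some $b\in S$, then some $T\vdash\contrary b$ with $T\subseteq S$ and the argument $\{b\}\vdash b$ both lie in $E$ by exhaustiveness, and the former attacks the latter, contradicting $E\in\cf(\CF_D)$. For closedness of $S$: if $a\in\cl(S)\setminus S$, then some $S'\vdash a$ with $S'\subseteq S$ lies in $E$ by exhaustiveness, this argument supports $\{a\}\vdash a$, and closedness of $E$ in $\CF_D$ forces $\{a\}\vdash a\in E$, i.e.\ $a\in S$, a contradiction. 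For self-defense, given a closed $V\subseteq\mathcal A$ attacking some $b\in S$, I would pick $V'\vdash\contrary b$ with $V'\subseteq V$; this attacks $\{b\}\vdash b\in E$, so by self-defense of $E$ and the defense characterization (a set defends an argument iff it attacks the closure of every attacker) $E$ attacks $\cl(V'\vdash\contrary b)$. Since this BAF-closure consists of $V'\vdash\contrary b$ together with the assumption-arguments $\{c\}\vdash c$ for $c\in\cl(V')\subseteq\cl(V)=V$, the set $S=\asms(E)$ derives the contrary of some assumption in $V$, i.e.\ $S$ attacks $V$.

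For the backward direction I would take $S\in\adm(D)$ and $E_S$ as above. Exhaustiveness of $E_S$ is immediate from $\asms(E_S)=S$, and conflict-freeness and closedness of $E_S$ in $\CF_D$ follow by unwinding the definitions of attack and support together with conflict-freeness and closedness of $S$ (an internal attack, resp.\ an unsupported supported argument, would witness $\contrary b\in\theory_D(S)$ for some $b\in S$, resp.\ an assumption in $\cl(S)\setminus S$). For self-defense, let $x=S_1\vdash q\in E_S$ and let $T\vdash\contrary c$ attack $x$, so $c\in S_1\subseteq S$; then $\cl(T)$ is a closed assumption set attacking $c$, whence $S$ attacks $\cl(T)$ by admissibility of $S$, yielding an argument $S_2\vdash\contrary d\in E_S$ with $d\in\cl(T)$. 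As $\{d\}\vdash d$ lies in the BAF-closure of $T\vdash\contrary c$, this shows $E_S$ attacks that closure, and the defense characterization gives that $E_S$ defends $x$. Finally, the $\oldprf$ case follows because the maps $E\mapsto\asms(E)$ and $S\mapsto E_S$ are monotone and mutually inverse on admissible sets, so they carry $\subseteq$-maximal admissible sets to $\subseteq$-maximal admissible sets.

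The main obstacle is the faithful translation of the defense condition across the two levels: in $D$ defense quantifies over closed \emph{assumption} sets, whereas in $\PF_D$ it quantifies (via the characterization lemma) over BAF-closures of single attacking \emph{arguments}. The bridge is that the BAF-closure of an argument $T\vdash p$ collects exactly the assumption-arguments $\{c\}\vdash c$ with $c\in\cl(T)$, so attacking this closure corresponds precisely to attacking the closed assumption set $\cl(T)$. Exhaustiveness is what guarantees that all arguments needed to realize these attacks and to certify closedness are actually present in the extension, which is exactly why it must be imposed for admissibility but can be omitted for the complete-based semantics.
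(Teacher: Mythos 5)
The paper does not actually prove this theorem: it is imported by citation from \cite{DBLP:journals/corr/abs-2305-12453} (``As shown in [that work], we now get the semantics correspondence for $\adm$ as well''), so there is no in-paper proof to compare yours against. Your proof is correct, and its decomposition matches how the paper sets the stage. Since the pBAF versions of $\com,\prf,\grd,\stb$ are defined to coincide with the semantics of the underlying BAF $\CF_D$, those four cases reduce, as you say, to the background BAF correspondence theorem; only $\adm$ and $\oldprf$ require new work. Your verification of the $\adm$ case is sound in both directions: the key bridge --- that the BAF-closure of an argument $T\vdash p$ is $\{T\vdash p\}\cup\{(\{c\}\vdash c)\mid c\in\cl(T)\}$, so that attacking this closure is the same as (ABA-)attacking the closed assumption set $\cl(T)$ --- is exactly right, and you invoke exhaustiveness precisely where it is needed: to place $\{b\}\vdash b$ and the witnessing arguments inside $E$, and to obtain $E_{\asms(E)}=E$, which makes $E\mapsto\asms(E)$ and $S\mapsto E_S$ mutually inverse monotone maps, from which the $\oldprf$ case follows. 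Two dependencies are worth flagging, both legitimate since the paper itself quotes them as known facts rather than proving them: the BAF correspondence theorem for the complete-based semantics, and the defense characterization ($E$ defends $b$ iff $E$ attacks the closure of each single attacker of $b$, \cite[Lemma 3.4]{DBLP:conf/aaai/0001PRT24}). Your proof is only as self-contained as those two results, but re-deriving them would go beyond what establishing this theorem requires, and your argument supplies exactly the explicit, checkable reasoning that the paper replaces by a citation.
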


\subsection{Redundancy Notions for pBAFs}

All redundancy notions we established also hold for pBAFs. 
We give the ``pBAF'' version of those. 
Note that we do not give the full proofs again, but we follow the structure and merely add remarks regarding eshaustiveness. 

\begin{proposition}
	\label{prop:redundant args pbaf}
	Let $D$ be an ABAF, $\PF_D$ the corresponding pBAF, and \linebreak $\sigma\in\{\adm,\com,\oldprf,\prf,\grd,\stb\}$.
	Let $x$ be derivation redundant and let $\PG$ be the pBAF after removing the argument $x$ from $\PF_D$. 
	Then 
	$$ \{ \asms(E)  \mid E\in\sigma(\PF_D) \} = \{ \asms(E)  \mid E\in\sigma(\PG) \}. $$
\end{proposition}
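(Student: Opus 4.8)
The plan is to recycle the proof of Proposition~\ref{prop:redundant args} almost verbatim and to add only the bookkeeping forced by the premise function and the exhaustiveness requirement. Fix $x=(S\vdash p)$ and a witness $y=(S'\vdash p)$ with $S'\subsetneq S$, which exists because $x$ is derivation redundant. Since $x$ and $y$ share the conclusion $p$, they have identical outgoing attacks, and since $\contrary{S'}\subseteq\contrary{S}$ the incoming attacks of $y$ are among those of $x$; these two facts are exactly what drove the original argument. As the complete-based semantics of a pBAF and of its underlying BAF coincide, the cases $\sigma\in\{\com,\prf,\grd,\stb\}$ are already covered word-for-word by Proposition~\ref{prop:redundant args}. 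First I would therefore isolate the genuinely new content, namely $\sigma\in\{\adm,\oldprf\}$, where conflict-freeness, closedness and self-defence transfer exactly as before and the only extra obligation is to preserve exhaustiveness.

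For the inclusion $\subseteq$, I would take $E\in\sigma(\PF_D)$. If $x\notin E$, then $E$ is left untouched and its exhaustiveness in $\PG$ is merely the restriction of its exhaustiveness in $\PF_D$ to the arguments surviving in $\PG$, hence trivially inherited. If $x\in E$, I set $E_x=E\setminus\{x\}$ and first re-establish $\asms(E_x)=\asms(E)$: because $E$ is closed and $x$ supports every assumption argument $\{a\}\vdash a$ with $a\in\cl(S)\supseteq S$, all of these assumption arguments already belong to $E$ and hence to $E_x$, so $S=\asms(x)\subseteq\asms(E_x)$. Consequently $\prem(E_x)=\asms(E_x)=\asms(E)=\prem(E)$, and exhaustiveness of $E_x$ follows at once: any argument $a\neq x$ with $\prem(a)\subseteq\prem(E_x)$ satisfies $\prem(a)\subseteq\prem(E)$, so exhaustiveness of $E$ in $\PF_D$ places it in $E$ and therefore in $E_x$.

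For the inclusion $\supseteq$, I would take $E\in\sigma(\PG)$, so automatically $x\notin E$, and split on whether $S\subseteq\asms(E)$. When $S\nsubseteq\asms(E)$ I claim $E\in\sigma(\PF_D)$; the only new test that exhaustiveness in $\PF_D$ imposes is for the argument $x$ itself, and since $\prem(x)=S\nsubseteq\asms(E)=\prem(E)$ that implication is vacuous, so $E$ stays exhaustive. When $S\subseteq\asms(E)$ I claim $E'=E\cup\{x\}\in\sigma(\PF_D)$; here $\prem(E')=\prem(E)\cup S=\prem(E)$, so any $a$ with $\prem(a)\subseteq\prem(E')$ is either $x\in E'$ or, by exhaustiveness of $E$ in $\PG$, already in $E\subseteq E'$. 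In both subcases the $\asms$-value is unchanged, which is what the claimed set equality requires.

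Finally, the preferred case $\oldprf$ I would obtain from the admissible one by a maximality argument rather than a fresh induction: once $\{\asms(E)\mid E\in\adm(\PF_D)\}=\{\asms(E)\mid E\in\adm(\PG)\}$ is known, it remains to match $\subseteq$-maximal admissible argument sets with $\subseteq$-maximal admissible assumption sets, for which I would use the canonical exhaustive representative $\{z\mid\asms(z)\subseteq T\}$ of an admissible assumption set $T$, exactly as in Theorem~\ref{th:semantics correspondence 2}. I expect the main obstacle to be establishing the invariance $\prem(E)=\asms(E)$ across the removal or re-insertion of $x$: once this is secured---via closedness in the $\subseteq$ direction and via the case split on $S\subseteq\asms(E)$ in the $\supseteq$ direction---exhaustiveness is immediate, because it is a condition on $\prem(E)$ alone and $x$ is the sole argument whose membership is in question.
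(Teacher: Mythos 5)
Your proof is correct and follows essentially the same route as the paper's: the complete-based cases are inherited from Proposition~\ref{prop:redundant args}, exhaustiveness is treated as the only genuinely new obligation for $\adm$, and $\oldprf$ is derived from the admissible correspondence; the paper merely packages the case $x\in E$ slightly differently, first enlarging $E$ by $y$ and the assumption arguments for $\asms(x)$ (which, as your argument shows, are already present by exhaustiveness and closedness) before deleting $x$. The one step you should make explicit is that the claim ``self-defence transfers exactly as before'' hinges on $y\in E_x$ so that the outgoing attacks of $E$ are preserved; in Proposition~\ref{prop:redundant args} that membership was obtained from completeness, which is unavailable under $\adm$, so here it must instead be drawn from exhaustiveness of $E$ (since $\prem(y)=S'\subsetneq S\subseteq\prem(E)$) --- your exhaustiveness argument does deliver exactly this, but you currently invoke it only to conclude that $E_x$ is exhaustive, not at the self-defence step where it is actually needed.
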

\begin{proof}
	Let $x = (S\vdash p)$ and $y = (S'\vdash p)$ with $S'\subsetneq S$; such $y$ must exist because $x$ is redundant. 
	
	($\subseteq$)
	Take some $E\in\sigma(\PF_D)$. We construct a corresponding extension in $\PG$. 
	
	(Case 1: $x\notin E$) 
	In this case, $E\in\sigma(\PG)$ 
	
	(Case 2: $x\in E$)
        We have already shown the result for complete-based semantics, so we have left to consider $\adm$ and $\oldprf$. 
	\begin{itemize}
		\item 
		($\sigma = \adm$) 
		Due to $\asms(y)\subseteq \asms(x)$, $E$ defends the set $\{ (\{a\} \vdash a) \mid a\in\asms(x) \}$ of assumption arguments and $E$ defends $y$. 
		Set $$E' = E\cup \{y\} \cup \{ (\{a\} \vdash a) \mid a\in\asms(x) \}.$$ 
        Due to 
		$\asms(E) = \asms(E')$ we have $E'$ is exhaustive; moreover, 
		$\cl(E) = \cl(E')$ by Lemma~\ref{le:additive closure};
		hence $E'$ is closed.
        We reason similar as in the proof of Proposition~\ref{prop:redundant args} to see that $E'\setminus \{x\}$ is admissible in $\CG$. 
        \item ($\sigma=\oldprf$) This is a direct consequence from the correspondence of admissible semantics. 
	\end{itemize}
	
	($\supseteq$)
	Take some $E\in\sigma(\CG)$. In this case, we have of course $x\notin E$. 
	
	(Case 1: $S\nsubseteq\asms(E)$) We show that $E\in\sigma(\PF_D)$. 
	
	\begin{itemize}
		\item 
		(conflict-free)
		It is clear that $E$ is conflict-free in $\PF_D$. 
		
		\item 
		(defense) 
		If $E$ is attacked by $\cl(\{x\})$ in $\PF_D$, then it is also attacked by $\cl(\{y\})$ in $\PF_D$. Hence it is attacked by $\cl(\{y\})$ in $\PG$. It thus defends itself against $y$ due to being admissible. Consequently, it defends itself against $\cl(\{x\})$. 
		Defense against any other argument is clear. 
		
		\item 
		(closed \& exhaustive)
        This is clear.
	\end{itemize}
	
	(Case 2: $S\subseteq\asms(E)$). By similar reasoning, $E\cup \{x\}\in\sigma(\PF_D)$. 
\end{proof}
\begin{proposition}
	\label{prop:expendable args pbaf}
	Let $D$ be an ABAF, $\PF_D$ the corresponding pBAF, and \linebreak $\sigma\in\{\adm,\com,\oldprf,\prf,\grd,\stb\}$.
	Let $x$ be an expendable argument and let $\PG$ be the pBAF after removing the argument $x$ from $\PF_D$. 
	Then 
	$$ \{ \asms(E)  \mid E\in\sigma(\PF_D) \} = \{ \asms(E)  \mid E\in\sigma(\PG) \}. $$
\end{proposition}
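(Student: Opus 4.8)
The plan is to imitate the structure of the proof of Proposition~\ref{prop:redundant args pbaf}, while exploiting that the complete-based cases $\sigma\in\{\com,\prf,\grd,\stb\}$ are already settled by Proposition~\ref{prop:expendable args}: the attack and support relations among the surviving arguments are identical in the BAF $\CF_D$ underlying $\PF_D$ and in the BAF underlying $\PG$, and the additional exhaustiveness constraint never enters the complete-based definitions. Hence the only genuinely new work is for $\adm$, from which $\oldprf$ follows at once, since preferred extensions are the $\subseteq$-maximal admissible ones and $E\mapsto\asms(E)$ preserves maximality of the induced assumption sets.

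The first fact to record is that an expendable argument $x=(S\vdash p)$ has $p\notin\mathcal A\cup\contrary{\mathcal A}$, so $x$ has no outgoing attacks and is not the target of any support edge; deleting $x$ therefore removes only the incoming attacks of $x$ and its outgoing supports (all landing on assumption arguments $\{a\}\vdash a$ with $a\in\cl(S)$). I would then run the two inclusions with the case analysis of Proposition~\ref{prop:redundant args pbaf}. For $(\subseteq)$, given $E\in\adm(\PF_D)$: if $x\notin E$ the set is unaffected and remains admissible and exhaustive in $\PG$; if $x\in E$, I take $E'=E\setminus\{x\}$ and use closedness of $E$ to see that $\{a\}\vdash a\in E$ for every $a\in\cl(S)\supseteq S$, so $\asms(E')=\asms(E)$; conflict-freeness, closedness, self-defence (here using that $x$ never attacks and so is never needed as a defender) and exhaustiveness of $E'$ in $\PG$ follow routinely. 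For $(\supseteq)$, given $E\in\adm(\PG)$ (so $x\notin E$), I split on whether $S\subseteq\asms(E)$: if not, $E$ is itself admissible and exhaustive in $\PF_D$, the exhaustiveness obligation for $x$ being vacuous because $\prem(x)=S\not\subseteq\prem(E)$, and closed sets containing $x$ reducing to closed sets avoiding $x$ when checking defence since $x$ carries no attacks; and if $S\subseteq\asms(E)$, I add $x$ back and argue $E\cup\{x\}\in\adm(\PF_D)$.

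The step I expect to be the main obstacle is the closedness and conflict-freeness bookkeeping in the subcase $S\subseteq\asms(E)$ of $(\supseteq)$, which hinges on a small lemma I would isolate: for any closed and exhaustive set $E$ in $\PG$, the assumption set $\asms(E)$ is closed in $D$. This is where exhaustiveness and closedness must cooperate. If $a\in\cl(\asms(E))$ via a derivation $S'\vdash a$ with $S'\subseteq\asms(E)$, then exhaustiveness forces $S'\vdash a\in E$ (its premise set lies in $\asms(E)$), and since $S'\vdash a$ supports $\{a\}\vdash a$, closedness forces $\{a\}\vdash a\in E$, \ie $a\in\asms(E)$. Crucially this uses that $S'\vdash a$ survives in $\PG$: it concludes an assumption, so it is never expendable and hence distinct from $x$ (whose conclusion $p$ is not an assumption). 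With this lemma, $\cl(S)\subseteq\asms(E)$, so every support target of $x$ already lies in $E$, giving closedness of $E\cup\{x\}$; moreover any attacker of $x$ would derive $\contrary{b}$ for some $b\in S\subseteq\asms(E)$ and thus also attack $\{b\}\vdash b\in E$, contradicting conflict-freeness of $E$, and the same observation shows $E$ defends $x$. Thus $E\cup\{x\}$ is admissible and exhaustive in $\PF_D$ with $\asms(E\cup\{x\})=\asms(E)$, completing the admissible case and, by maximality, the preferred case.
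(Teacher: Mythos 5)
Your proposal is correct and follows essentially the same route as the paper: the paper's own proof is only a sketch that reduces the complete-based semantics to the BAF result for expendable arguments and handles $\adm$ and $\oldprf$ by transplanting the case analysis from the pBAF proof for derivation-redundant arguments, exactly as you do. Your write-up is in fact more detailed than the paper's sketch---in particular, your isolated lemma that $\asms(E)$ is closed in $D$ for any closed, exhaustive $E$ (using that arguments concluding assumptions are never expendable and hence survive in $\PG$) makes rigorous a step the paper leaves implicit in its ``translates to this situation'' remark.
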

\begin{proof}[Sketch]
	We reason as in the proof of Proposition~\ref{prop:redundant args}. 
	Let $x = (S\vdash p)$. 
	Let $E\in\adm(\CF)$. 
	
	By admissibility, all arguments in $\{ (\{a\} \vdash a ) \mid a \in S \}$ are defended by $E$. 
	Hence the proof of Proposition~\ref{prop:redundant args pbaf} translates to this situation. 
\end{proof}
\begin{proposition}
	\label{prop:assumption redundant args pbaf}
	Let $D$ be an ABAF, $\PF_D$ the corresponding pBAF, and \linebreak $\sigma\in\{\adm,\com,\oldprf,\prf,\grd,\stb\}$.
	Let $x$ be an assumption redundant argument and let $\CG$ be the pBAF after removing the argument $x$ from $\PF_D$. 
	Then 
	$$ \{ \asms(E)  \mid E\in\sigma(\PF_D) \} = \{ \asms(E)  \mid E\in\sigma(\PG) \}. $$
\end{proposition}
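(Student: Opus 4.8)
The plan is to split the claim by semantics. For the complete-based semantics $\sigma\in\{\com,\prf,\grd,\stb\}$ the pBAF and BAF notions coincide, so the equality of $\asms$-images is exactly what Proposition~\ref{prop:assumption redundant args} already establishes for the underlying BAFs $\CF_D$ and $\CG$; nothing new is required. The genuine work is for $\adm$, after which $\oldprf$ follows by $\subseteq$-maximality. Throughout, I would write $x=(S\vdash p)$ for the assumption-redundant argument, let $x'=(S'\vdash a)$ with $S'\subseteq S$ and $a\in\mathcal A$ be the proper sub-argument witnessing redundancy, and let $y=(S''\cup\{a\}\vdash p)$ with $S''\subseteq S$ be the argument obtained by replacing the sub-derivation $x'$ with the bare assumption leaf $a$; note $\asms(y)=S''\cup\{a\}\subseteq S$, exactly as in the proof of Proposition~\ref{prop:assumption redundant args}.

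For the inclusion $\{\asms(E)\mid E\in\adm(\PF_D)\}\subseteq\{\asms(E)\mid E\in\adm(\PG)\}$ I would take $E\in\adm(\PF_D)$. If $x\notin E$, then $E$ is a set of arguments of $\PG$ and stays admissible there: conflict-freeness, closedness and defence are inherited verbatim, while exhaustiveness only becomes easier in the smaller framework, since there are fewer arguments to capture. If $x\in E$, I would set $E'=E\setminus\{x\}$ and show $E'\in\adm(\PG)$ with $\asms(E')=\asms(E)$. The crucial observation is that exhaustiveness of $E$ forces the singleton assumption argument $\{s\}\vdash s$ into $E$ for every $s\in\asms(E)$, in particular for every $s\in S$; since none of these equals $x$ (as $x$ has a proper sub-argument and is thus not a bare assumption argument), deleting $x$ drops no assumption, giving $\asms(E')=\asms(E)$. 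The remaining admissibility conditions for $E'$ in $\PG$ then transfer as in Proposition~\ref{prop:assumption redundant args}, using that $E$ and $E'$ derive the same contraries, and exhaustiveness of $E'$ holds because any argument $z$ of $\PG$ with $\asms(z)\subseteq\asms(E')=\asms(E)$ already lay in $E$ and differs from $x$.

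For the reverse inclusion I would take $E\in\adm(\PG)$, so $x\notin E$, and split on whether $S\subseteq\asms(E)$. If $S\not\subseteq\asms(E)$, then $E\in\adm(\PF_D)$: the BAF-level conditions are as in Proposition~\ref{prop:assumption redundant args}, and exhaustiveness cannot be violated by the newly available argument $x$ precisely because $\asms(x)=S\not\subseteq\asms(E)$. If $S\subseteq\asms(E)$, then exhaustiveness in $\PF_D$ demands $x$, so I would verify $E\cup\{x\}\in\adm(\PF_D)$: conflict-freeness and defence of $x$ use $S\subseteq\asms(E)$ together with the fact that $E$ already defends every assumption argument $\{s\}\vdash s$ with $s\in S$, closedness is routine, and exhaustiveness holds since $\asms(E\cup\{x\})=\asms(E)$. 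In both cases $\asms$ is preserved. Finally, for $\oldprf$ the equality is immediate from the $\adm$ correspondence: since the admissible $\asms$-images of $\PF_D$ and $\PG$ coincide, so do their $\subseteq$-maximal elements.

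The step I expect to be the main obstacle is the exhaustiveness bookkeeping rather than the attack/support structure, which is inherited from the BAF case. Concretely, the two delicate points are (i) establishing $\asms(E\setminus\{x\})=\asms(E)$ in the forward direction, where exhaustiveness of pBAF-admissible sets is genuinely used via the presence of every singleton assumption argument, and (ii) correctly deciding in the backward direction whether to keep $E$ or to re-insert $x$, which is dictated entirely by the exhaustiveness condition $\asms(x)=S\subseteq\asms(E)$.
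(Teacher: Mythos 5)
Your proposal is correct and follows essentially the same route as the paper: reduce everything except exhaustiveness to Proposition~\ref{prop:assumption redundant args} (the complete-based pBAF semantics coincide with the BAF ones), handle $\adm$ by observing that exhaustiveness forces every singleton assumption argument of $\asms(E)$ into $E$ so that $\prem(E\setminus\{x\})=\prem(E)$, and obtain $\oldprf$ from the $\adm$ correspondence by $\subseteq$-maximality. If anything, you are slightly more explicit than the paper, which in the $(\supseteq)$ direction only spells out the case $S\nsubseteq\asms(E)$ and leaves the re-insertion of $x$ when $S\subseteq\asms(E)$ implicit, whereas you verify it.
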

\begin{proof}
	Let $x = (S\vdash p)\in A$ and let $x'=(S'\vdash a)$ be the proper sub-argument of $x$ with $S'\subseteq S$ and $a\in\mathcal{A}$, witnessing its assumption redundancy. Furthermore, let $y$ denote the argument which arises when removing $x'$ from $x$, i.e., $y=(S''\cup \{a\}\vdash p)$ with $S''\subseteq S$.
	
	($\subseteq$)
	Let $E\in\sigma(\PF_D)$. We show that $E'=E\setminus \{x\}\in\sigma(\PG)$. 
        Due to our BAF results, we have left to show the proof for exhaustiveness. 
	
	\begin{itemize}
		\item (exhaustive) $E'$ is exhaustive since $E$ and $E'$ contain the same arguments of the form $(\{b\}\vdash b)$. Hence, we have $\prem(E)=\prem(E')$ and can conclude that $E'$ is exhaustive. 
	\end{itemize}
	
	($\supseteq$)
	Take some $E\in\sigma(\PG)$. In this case, we have $x\notin E$. 
	
	(Case 1: $S\nsubseteq\asms(E)$) To show $E\in\sigma(\PF_D)$, we observe that 
 $E$ is exhaustive in $\CG$ hence it is exhaustive in $\PF_D$.
 Together with Proposition~\ref{prop:assumption redundant args} we obtain the results for all considered pBAF semantics. 
\end{proof}

\subsection{Algorithms for Admissible Semantics}
In this chapter we extend both our translation-based approach and the CEGAR approach for admissible semantics.

\paragraph{SAT Encodings for Admissible Semantics}

    Extending the SAT encodings proposed in Section~\ref{sec:baf-sat} to admissible semantics requires auxiliary variables for the inclusion of assumptions in extensions, i.e. for each $\alpha\in\mathcal{A}$, we have variable $x_{\alpha}$, which is true if and only if some argument in the extension corresponding to an interpretation contains $\alpha$.
    Adding this condition to the clauses for conflict-freeness, closedness and self-defense, we get the encoding for admissible semantics.
    \begin{align*}
        adm(F) =&cf(F)\wedge closed(F)\wedge self\_defense(F)\\ &\wedge \bigwedge_{a\in A} \left( x_a \leftrightarrow \bigwedge_{\alpha\in \prem(a)} x_{\alpha} \right)
    \end{align*}


\paragraph{ASP-based CEGAR Algorithm for Admissible Semantics}
    Our algorithm for credulous acceptance under admissible semantics is presented in Algorithm~\ref{alg:adm-cred}.
    The abstraction $\aspmodule{adm\_cred\_abs}$ is otherwise the same as the one used for Algorithm~\ref{alg:com-cred} but without the additional condition that there are no assumptions that are not in the candidate and not attacked by the set of undefeated assumptions.
    In Lines~\ref{alg:adm-cred}--\ref{alg:adm-cred-3}, Algorithm~\ref{alg:adm-cred} iteratively draws candidates from the abstraction (Line~\ref{alg:adm-cred-1}), checks whether there is a counterexample to the candidate being admissible (Line~\ref{alg:adm-cred-2}), and refines the abstraction to rule the candidate from further consideration if so (Line~\ref{alg:adm-cred-3}).
    If there are no counterexamples, the candidate is an admissible set and therefore the query is credulously accepted (Line~\ref{alg:adm-cred-2}).
    If no admissible assumption set is found, the algorithm finally returns NO (Line~\ref{alg:adm-cred-4}).
    
    \begin{algorithm}[t]
    \caption{Credulous acceptance, admissible semantics}
    \label{alg:adm-cred}
    \begin{algorithmic}[1]
        \REQUIRE ABA framework $F=(\mathcal{L},\mathcal{R},\mathcal{A},\contraryempty)$, $s\in\mathcal{L}$
        \ENSURE return YES if $s$ is credulously accepted under admissible semantics in $F$, NO otherwise
        \STATE{$\algorithmicwhile\ C:= solve(\aspmodule{adm\_cred\_abs})\ \algorithmicdo$} \label{alg:adm-cred-1}
        \STATE{\hspace{\algorithmicindent}$\algorithmicif\ solve(\aspmodule{not\_adm}(C))$ unsatisfiable$\ \algorithmicreturn$ YES} \label{alg:adm-cred-2}
        \STATE{\hspace{\algorithmicindent}Add constraint excluding $C$ to $\aspmodule{adm\_cred\_abs}$} \label{alg:adm-cred-3} 
        \RETURN{NO} \label{alg:adm-cred-4}
    \end{algorithmic}
    \end{algorithm}


    \section{Additional Empirical Results}

Tables~\ref{table:generic-dc-ad} and~\ref{table:generic-dc-st} shows the performance of \ababaf{} and \cegar{} on credulous acceptance under admissible and stable semantics in benchmark set 1, respectively, and tables~\ref{table:patomic-dc-ad} and~\ref{table:patomic-dc-st} on set 2.
The situation for admissible is quite similar to complete, with \ababaf{} performing slightly better and \cegar{} worse.
In contrast, for stable semantics \cegar{} performs considerably better, as can be expected given that \cegar{} solves the task with a single ASP call, while \ababaf{} needs to perform the whole argument construction process, similarly to the other semantics.

\begin{table}[t]
    \centering
    {
\small
    \begin{tabular}{cc|rr|rr}
        \toprule
\multicolumn{2}{c}{} & \multicolumn{4}{c}{\textbf{\#solved} (mean run time (s))}\\
\midrule
$ms$ &
$mr$ &
\multicolumn{2}{c|}{\ababaf{}} &
\multicolumn{2}{c}{\cegar{}} \\
\midrule
  & 1 & 80 & (0.2) & 43 & (8.3) \\
1 & 2 & 80 & (0.4) & 60 & (25.2) \\
  & 5 & 80 & (3.7) & 80 & (0.1) \\
  \midrule
  & 1 & 79 & (0.5) & 42 & (17.3) \\
2 & 2 & 54 & (16.3) & 34 & (53.0) \\
  & 5 & 51 & (14.2) & 60 & (11.8) \\
  \midrule
  & 1 & 80 & (2.1) & 70 & (7.1) \\
5 & 2 & 54 & (3.9) & 60 & (0.5) \\
  & 5 & 11 & (23.6) & 37 & (45.0) \\
        \bottomrule 
    \end{tabular}
}
    \caption{Number of solved instances and mean run time over solved instances on credulous reasoning under admissible semantics in benchmark set 1. There are 80 instances per row.
     \label{table:generic-dc-ad}}
\end{table}

\begin{table}[t]
    \centering
    {
\small
    \begin{tabular}{cc|rr|rr}
        \toprule
\multicolumn{2}{c}{} & \multicolumn{4}{c}{\textbf{\#solved} (mean run time (s))}\\
\midrule
$ms$ &
$mr$ &
\multicolumn{2}{c|}{\ababaf{}} &
\multicolumn{2}{c}{\cegar{}} \\
\midrule
  & 1 & 80 & (0.2) & 80 & (0.1) \\
1 & 2 & 80 & (0.4) & 80 & (0.1) \\
  & 5 & 80 & (1.4) & 80 & (0.1) \\
  \midrule
  & 1 & 80 & (10.) & 80 & (0.1) \\
2 & 2 & 65 & (13.6) & 80 & (0.1) \\
  & 5 & 64 & (13.7) & 80 & (0.1) \\
  \midrule
  & 1 & 80 & (0.4) & 80 & (0.1) \\
5 & 2 & 59 & (3.6) & 80 & (0.1) \\
  & 5 & 20 & (67.5) & 80 & (0.1) \\
        \bottomrule 
    \end{tabular}
}
    \caption{Number of solved instances and mean run time over solved instances on credulous reasoning under stable semantics in benchmark set 1. There are 80 instances per row.
     \label{table:generic-dc-st}}
\end{table}

\begin{table}[t]
    \centering
    {
\small
    \begin{tabular}{cc|rr|rr}
        \toprule
\multicolumn{2}{c}{} & \multicolumn{4}{c}{\textbf{\#solved} (mean run time (s))}\\
\midrule
$slack$ &
$ms$ &
\multicolumn{2}{c}{\ababaf{}} &
\multicolumn{2}{c}{\cegar{}} \\
\midrule
0 & 2 & 135 & (2.3) & 59 & (58.4) \\
  & 5 & 121 & (7.2) & 80 & (49.6) \\
  \midrule
1 & 2 & 94 & (13.6) & 72 & (31.7) \\
  & 5 & 69 & (17.5) & 96 & (18.3) \\
  \midrule
2 & 2 & 110 & (11.7) & 93 & (18.7) \\
  & 5 & 71 & (17.3) & 96 & (32.2) \\
  \bottomrule
\end{tabular}
}
    \caption{Number of solved instances and mean run time over solved instances on credulous reasoning under admissible semantics in benchmark set 2. There are 160 instances per row.
     \label{table:patomic-dc-ad}}
\end{table}

\begin{table}[t]
    \centering
    {
\small
    \begin{tabular}{cc|rr|rr}
        \toprule
\multicolumn{2}{c}{} & \multicolumn{4}{c}{\textbf{\#solved} (mean run time (s))}\\
\midrule
$slack$ &
$ms$ &
\multicolumn{2}{c}{\ababaf{}} &
\multicolumn{2}{c}{\cegar{}} \\
\midrule
0 & 2 & 147 & (9.8) & 160 & (0.1) \\
  & 5 & 137 & (10.5) & 160 & (0.1) \\
  \midrule
1 & 2 & 118 & (14.3) & 160 & (0.1) \\
  & 5 & 91 & (26.9) & 160 & (0.1) \\
  \midrule
2 & 2 & 126 & (11.6) & 160 & (0.1) \\
  & 5 & 81 & (6.8)& 160 & (0.1) \\
  \bottomrule
\end{tabular}
}
    \caption{Number of solved instances and mean run time over solved instances on credulous reasoning under stable semantics in benchmark set 2. There are160 instances per row.
     \label{table:patomic-dc-st}}
\end{table}

\end{document}